\documentclass{article}

\PassOptionsToPackage{table}{xcolor}
\PassOptionsToPackage{numbers}{natbib}
\usepackage{microtype}
\usepackage{graphicx}
\usepackage{subcaption}
\usepackage{booktabs}
\usepackage{hyperref}

\PassOptionsToPackage{numbers,sort&compress}{natbib}
\usepackage[preprint]{neurips_2026}
\makeatletter\renewcommand{\@noticestring}{}\makeatother

\usepackage{amsmath}
\usepackage{amssymb}
\usepackage{mathtools}
\usepackage{amsthm}
\usepackage{algorithm}
\usepackage{algorithmic}
\usepackage{multirow}
\usepackage{enumitem}
\usepackage{xcolor}
\definecolor{linknavy}{HTML}{1F3D7A}
\definecolor{urlnavy}{HTML}{0B5FA5}
\hypersetup{
  colorlinks=true,
  linkcolor=linknavy,
  citecolor=linknavy,
  urlcolor=urlnavy,
  pdftitle={Fractional is Better: Learnable Derivative Orders in Neural Operator Learning},
  pdfauthor={Fares B. Mehouachi and Saif Eddin Jabari},
  pdfsubject={Neural operator learning; fractional calculus; approximation theory},
  pdfkeywords={neural operators, fractional derivatives, PDE surrogates, operator learning}
}
\usepackage[capitalize,noabbrev]{cleveref}

\theoremstyle{plain}
\newtheorem{theorem}{Theorem}[section]
\newtheorem{proposition}[theorem]{Proposition}
\newtheorem{lemma}[theorem]{Lemma}

\theoremstyle{definition}
\newtheorem{definition}[theorem]{Definition}
\newtheorem{assumption}[theorem]{Assumption}
\newtheorem{example}[theorem]{Example}
\theoremstyle{remark}
\newtheorem{remark}[theorem]{Remark}

\newcommand{\R}{\mathbb{R}}
\newcommand{\E}{\mathbb{E}}

\newcommand{\MSE}{\mathrm{MSE}}
\newcommand{\G}{\mathcal{G}}
\newcommand{\Lop}{\mathcal{L}}
\newcommand{\Non}{\mathcal{N}}
\newcommand{\NO}{\mathcal{NO}}

\newcommand{\norm}[1]{\left\| #1 \right\|}

\newcommand{\Dfrak}{\mathfrak{D}} %

\newenvironment{plainblock}{}{}

\definecolor{impgreen}{HTML}{2E7D32}
\newcommand{\impv}[1]{#1}
\newcommand{\impva}[1]{\textcolor{impgreen}{#1}}

\begin{document}

\title{Fractional is Better: Learnable Derivative Orders in Neural Operator Learning\thanks{Published as a conference paper at ICML 2026.\\
  \hspace*{1.8em}Corresponding author: \texttt{fares.mehouachi@nyu.edu}.}}

\author{%
  Fares B.~Mehouachi$^{1,\ast}$ \\
  \texttt{fares.mehouachi@nyu.edu} \\
  \And
  Saif Eddin Jabari$^{1,2}$ \\
  \texttt{sej7@nyu.edu} \\
  \AND
  \normalfont\vspace{0.6em}
  $^{1}$New York University Abu Dhabi, Abu Dhabi, UAE \\
  $^{2}$New York University, New York, NY, USA \\
}

\maketitle
\setcounter{footnote}{0}   %
\enlargethispage{-2\baselineskip}

\begin{abstract}
Neural operators learn mappings between function spaces, enabling fast surrogate solutions to partial differential equations. Despite remarkable architectural diversity, these methods often share a common input representation: raw coordinate-value pairs $(x, u(x))$. We ask whether inputs aligned with PDE differential structure can improve learning. Through Picard iteration on mild solutions, we show that derivatives of the input appear explicitly in the solution operator, suggesting that providing derivative features should reduce the network's implicit differentiation burden. We prove this intuition: providing derivative features improves approximation bounds from $O(W^{-(s-m)/d})$ to $O(W^{-s/d})$, where $W$ is parameter count, $s$ is input regularity, $m$ is the PDE order, and $d$ is spatial dimension. Our central finding, however, is a surprise: the optimal derivative order $\beta^*$ is strictly less than the PDE order $m$. This gap arises from a bias-variance tradeoff in spectral space that we characterize in closed form. Learning $\beta$ from data achieves automatic spectral regularization. We introduce $\partial$-NO (del-NO), for derivative-augmented neural operators, an architecture-agnostic augmentation that provides learnable fractional derivative features to any neural operator backbone. Across benchmark problems and architectures, $\partial$-NO consistently improves prediction accuracy, with learned orders that reflect a representation of known physics modulated by noise and finite-sample constraints.
\end{abstract}

\section{Introduction}
\label{sec:intro}

Neural operators have emerged as a powerful tool for scientific computing. By learning mappings between function spaces from data, they enable fast surrogate solutions to partial differential equations, with applications spanning climate modeling, materials science, and engineering design \citep{li2020fourier, lu2021learning, kovachki2023neural, yang2025urban, thodi2024fourier}. The field has witnessed rapid architectural innovation: spectral convolutions \citep{li2020fourier}, attention mechanisms \citep{cao2021choose, hao2023gnot}, physics-aware designs \citep{wu2024Transolver}, and latent representations \citep{wu2023solving, mehouachi2026flowmixer}. These advances have substantially improved accuracy and efficiency. Yet beneath this diversity lies a striking uniformity. Virtually all neural operators represent their inputs in the same way:
\begin{equation}
\label{eq:standard_input}
\text{1D: } (x, u(x)), \quad \text{2D: } (x, y, u(x,y)).
\end{equation}
This representation ignores the differential structure that defines the underlying physics. 
Partial differential equations are built from derivatives: $\partial_x u$, $\partial_{xx} u$, and their combinations. A neural operator receiving only function values must learn to differentiate implicitly from finite, noisy samples. This places an unnecessary burden on the network.

\textbf{This paper asks a simple question:} Can we provide inputs that align with PDE differential structure? And if so, what is the best way to do it?

Our answer proceeds in three steps. First, we analyze the structure of PDE solution operators through Picard iteration on mild solutions. This reveals that derivatives of the input function appear explicitly in the solution operator, so providing them as features reduces the implicit differentiation burden on the network (\S\ref{sec:picard}). Second, we prove that providing these features improves approximation bounds by a factor depending on the PDE order (\S\ref{sec:approx}). Third, and most unexpectedly, we uncover an operator-level bias-variance tradeoff: the optimal derivative order $\beta^*$ satisfies
\begin{equation}
\label{eq:main_insight}
\boxed{\beta^* < m}
\end{equation}
for any practically realizable finite, noisy dataset (\S\ref{sec:optimal_beta}). Using the exact PDE order $m$ introduces estimation variance that outweighs the approximation benefit. The gap $m - \beta^*$ shrinks as data grows, but remains positive at every practically realizable scale.

This finding has important practical implications. It means derivative features should not simply match known physics, but should balance physical fidelity against statistical efficiency. Since this optimal order depends on unknown quantities like noise level and sample size, we propose learning it directly from data.

The resulting architecture, which we call $\partial$-NO (del-NO) for derivative-augmented neural operators (Figure~\ref{fig:architecture}; \S\ref{sec:method}), augments any backbone with fractional derivative features whose orders are jointly optimized during training. The method is simple to implement, adds modest overhead, and wraps existing architectures without modification.

Where does $\partial$-NO sit among existing uses of fractional calculus in deep learning? At least three established lines exist: \emph{fractional PINNs} include fractional operators directly in the loss \citep{pang2019fpinns, guo2022monte, ren2023class, javadi2023solving}; \emph{fractional gradient methods} replace integer gradients with fractional ones during optimization \citep{khan2018novel, shin2023accelerating, elnady2025comprehensive}; and \emph{fractional architectures} embed fractional dynamics into the network itself \citep{pu2016fractional, cui2025neural, kang2024unleashing, coelho2024neural}. $\partial$-NO occupies a distinct fourth role: learnable fractional derivative \emph{input features} applied as preprocessing, with no modification to equation, optimizer, or architecture.

Across benchmark problems and backbone choices, $\partial$-NO consistently improves prediction accuracy (\S\ref{sec:experiments}). The learned orders reflect a representation of known physics modulated by noise and finite-sample constraints, providing both better performance and interpretable insight into each problem's differential structure.

\begin{figure}[t]
\centering
\includegraphics[width=1.0\textwidth]{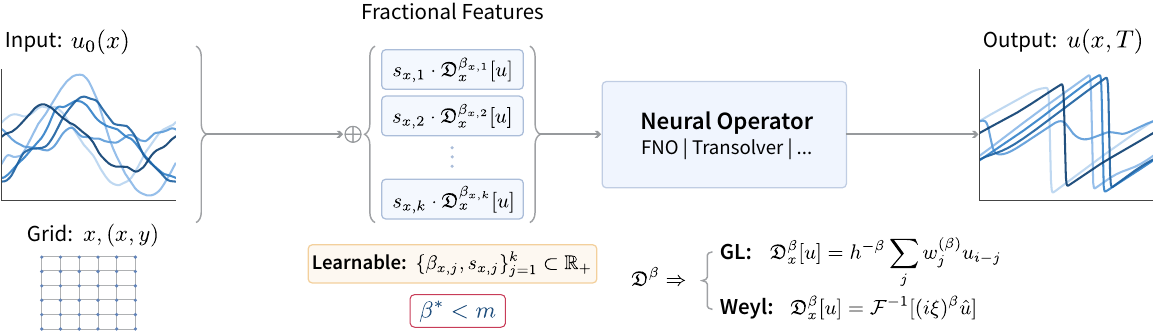}
\vspace{0.9em}
\caption{$\partial$-NO architecture.\protect\footnotemark{} The input function $u$ is augmented with learnable fractional derivatives $\Dfrak^{\beta_k} u$, where each order $\beta_k$ and scale $s_k$ is optimized jointly with backbone weights during training. This simple augmentation wraps any neural operator backbone without modification.}
\label{fig:architecture}
\end{figure}
\footnotetext{Code: \url{https://github.com/FaresBMehouachi/delNO}}

\section{Background}
\label{sec:background}

\subsection{Neural Operators}

Let $\G: \mathcal{U} \to \mathcal{V}$ denote the true solution operator between function spaces. A neural operator $\G_\theta$ is its finite-dimensional approximation parameterized by $\theta$, learned from paired examples $\{(u_i, v_i)\}_{i=1}^n$ with $v_i = \G[u_i]$ by minimizing the empirical risk:

\begin{equation}
\min_\theta \frac{1}{n}\sum_{i=1}^n \norm{\G_\theta[u_i] - v_i}_{L^2}^2.
\end{equation}
The target operator is typically a PDE solution map, taking initial or boundary conditions to the corresponding solution.

Different architectures approach this problem from different angles. The Fourier Neural Operator \citep{li2020fourier} performs convolutions efficiently in frequency space. DeepONet \citep{lu2021learning} factorizes the operator into branch and trunk networks. More recent work explores attention mechanisms \citep{hao2023gnot, wu2024Transolver, luo2025transolver++} and implicit representations \citep{serrano2023operator}. What these methods share is their input representation: they receive the input function sampled at spatial locations, either as coordinate-value pairs or on a regular grid.

\subsection{A Concrete Example: Burgers' Equation}

To make the limitation concrete, consider the viscous Burgers' equation:
\begin{equation}
\label{eq:burgers}
\partial_t u + u \, \partial_x u = \nu \, \partial_{xx} u.
\end{equation}
The solution operator $\G: u(\cdot, 0) \mapsto u(\cdot, T)$ maps initial conditions to solutions at time $T$. This operator depends on the initial condition through two mechanisms: the diffusive term $\nu \partial_{xx} u$ smooths the solution according to second derivatives, while the convective term $u \partial_x u$ transports features at speeds determined by first derivatives. A standard neural operator must learn both differentiation operations from data.

Recent work has explored providing precomputed derivatives as additional inputs to neural operators \citep{zhu2025generalizing}. This raises a natural question: which derivatives should we provide? When the PDE is known, one might supply $\partial_x u$ and $\partial_{xx} u$ to match the equation structure. But what if the governing physics is unknown or only partially specified? And even when known, is matching the PDE order necessarily optimal?

These considerations motivate a different approach: rather than prescribing derivative orders, let the network learn them from data. To enable this, we need derivatives defined for non-integer orders.

\subsection{Fractional Derivatives}

To enable optimization over derivative orders, we need a continuous parameterization. Fractional calculus provides exactly this.

\begin{definition}[Fractional Derivative]
\label{def:frac_deriv}
For order $\beta \geq 0$ and function $u \in H^s(\Omega)$ with $s > \beta$, we define the \emph{Weyl fractional derivative} through the Fourier transform:
\begin{equation}
\label{eq:frac_deriv}
D^\beta u := \mathcal{F}^{-1}\left[ (i\xi)^\beta \hat{u}(\xi) \right],
\end{equation}
where $\hat{u}(\xi) = \mathcal{F}[u](\xi)$ denotes the Fourier transform and $(i\xi)^\beta = |\xi|^\beta e^{i(\pi/2)\beta \, \mathrm{sgn}(\xi)}$.
\end{definition}

The Weyl derivative acts as a frequency filter weighting mode $\xi$ by $(i\xi)^\beta$, preserving phase information that distinguishes odd and even derivatives. When $\beta = 1$, we recover $\partial_x$; when $\beta = 2$, we get $\partial_{xx}$. Intermediate values interpolate smoothly. For non-periodic domains or to avoid Fourier artifacts (Gibbs ringing), we use the Gr\"unwald-Letnikov finite-difference formulation (Appendix~\ref{app:gl}).

Crucially, the definition is differentiable with respect to $\beta$:
\begin{equation}
\frac{\partial}{\partial \beta} (i\xi)^\beta = (i\xi)^\beta \left( \log|\xi| + i\frac{\pi}{2}\mathrm{sgn}(\xi) \right).
\end{equation}
This enables gradient-based learning of derivative orders.

\section{Picard Iteration and Derivative Emergence}
\label{sec:picard}

Why should derivative features help neural operators? The answer emerges from analyzing how PDE solutions depend on initial conditions through Picard iteration.

\subsection{Mild Solutions and the Green's Function}

Consider a semilinear PDE of the form:
\begin{equation}
\label{eq:semilinear}
\partial_t u = \Lop u + \Non[u], \quad u(0) = u_0,
\end{equation}
where $\Lop$ is a linear differential operator of order $m$ and $\Non$ is a nonlinearity (polynomial or quasilinear). The mild solution satisfies the integral equation:
\begin{equation}
\label{eq:mild}
u(t) = G_t \ast u_0 + \int_0^t G_{t-s} \ast \Non[u(s)] \, ds,
\end{equation}
where $G_t$ is the Green's function (semigroup) generated by $\Lop$. For constant-coefficient operators, this takes the explicit form $G_t = e^{t\Lop}$.

\subsection{The Picard Mechanism}

Picard iteration constructs solutions by successive approximation. Starting from $u^{(0)}(t) = G_t \ast u_0$, each iterate refines the approximation:
\begin{equation}
\label{eq:picard}
u^{(k+1)}(t) = G_t \ast u_0 + \int_0^t G_{t-s} \ast \Non[u^{(k)}(s)] \, ds.
\end{equation}
The key observation is that the first iterate already reveals essential structure.

Expanding the semigroup for small times, $G_s \ast u_0 \approx u_0 + s \Lop u_0 + O(s^2)$, and substituting into the nonlinearity:
\begin{equation}
\Non[G_s \ast u_0] \approx \Non[u_0] + s \cdot D\Non[u_0][\Lop u_0] + O(s^2),
\end{equation}
where $D\Non[u_0][v]$ is the Fr\'echet derivative of $\Non$ at $u_0$ applied to direction $v$. Evaluating the integral yields:

\begin{theorem}[Derivative Emergence, Parabolic Case]
\label{thm:derivative_emergence}
Let $\G: u_0 \mapsto u(T)$ be the solution operator for a semilinear parabolic PDE with polynomial nonlinearity $\Non$ of degree $N$ and linear operator $\Lop$ of differential order $m$. For $u_0 \in B_s(R)$, there exists $T_*(R) = (L(R))^{-1}$, where $L(R)$ is the Lipschitz constant of $\Non$ on $B_s(R)$, such that for all $T < T_*(R)$:
\begin{equation}
\G[u_0] = G_T \ast u_0 + T \cdot \bar{G}_T \ast \Non[u_0] + \mathcal{R}(u_0, T),
\end{equation}
where $\bar{G}_T = T^{-1}\int_0^T G_{T-s} \, ds$ is an averaged Green's function, and the remainder satisfies $\|\mathcal{R}(u_0, T)\|_{L^2} \leq C(R) T^2$ uniformly over $u_0 \in B_s(R)$. The nonlinearity $\Non[u_0]$ involves derivatives of $u_0$ up to order $m$ and powers up to $u_0^N$.
\end{theorem}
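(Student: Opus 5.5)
The plan is to start from the mild formulation \eqref{eq:mild} and isolate the first-order Duhamel term. By definition of $\bar{G}_T$,
\begin{equation*}
\G[u_0] - G_T \ast u_0 - T\,\bar{G}_T \ast \Non[u_0] = \int_0^T G_{T-s} \ast \bigl(\Non[u(s)] - \Non[u_0]\bigr)\, ds =: \mathcal{R}(u_0,T).
\end{equation*}
So the whole task is to show that this remainder is $O(T^2)$ in $L^2$, uniformly over $B_s(R)$. The two inputs are a well-posedness/stability estimate for $u(s)$ on $[0,T]$ and a local Lipschitz bound for $\Non$.

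\textbf{Step 1 (existence and a priori bound).} I would run the Picard map \eqref{eq:picard} as a contraction on $C([0,T]; B_s(2R))$. This uses the following ingredients:
\begin{itemize}
\item the semigroup bound $\|G_t\ast f\|_{H^s}\le M\|f\|_{H^s}$;
\item parabolic smoothing $\|G_t \ast f\|_{H^{s}} \le C t^{-k/m}\|f\|_{H^{s-k}}$, which absorbs the derivative loss of $\Non$ (of order up to $m$, or fewer derivatives in the semilinear setting);
\item the Lipschitz constant $L(R)$ of $\Non$ on the ball.
\end{itemize}
The contraction condition is essentially $T\,L(R) < 1$ (after normalising $M$ and the ball radius). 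This yields $T_*(R) = L(R)^{-1}$, together with $\sup_{s\le T}\|u(s)\|_{H^s}\le 2R$.

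\textbf{Step 2 (time-continuity at rate $s$).} Next I show $\|u(s)-u_0\|_{X} \le C(R)\, s$ in a norm $X$ on which $\Non$ is Lipschitz into $L^2$. From the mild formula,
\begin{equation*}
u(s)-u_0 = (G_s-I)\ast u_0 + \int_0^s G_{s-\tau}\ast \Non[u(\tau)]\,d\tau .
\end{equation*}
\begin{itemize}
\item The second term is bounded by $s\,\sup_\tau\|\Non[u(\tau)]\|$, which Step 1 controls.
\item The first term is bounded by $s\|\Lop u_0\|_X$ via $G_s - I = \int_0^s \Lop G_\tau\,d\tau$.
\end{itemize}
This requires $u_0$ regular enough that $\Lop u_0 \in X$, i.e.\ $s$ large relative to $m$ plus the derivative order of $\Non$. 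This step formalises the heuristic expansion $G_s\ast u_0\approx u_0+s\Lop u_0$ stated before the theorem.

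\textbf{Step 3 (closing).} Using the Lipschitz bound of $\Non$ and the $L^2$-boundedness of $G_{T-s}$,
\begin{equation*}
\|\mathcal{R}\|_{L^2}\le \int_0^T M\,L(R)\,C(R)\,s\,ds = \tfrac12 M L(R) C(R) T^2 .
\end{equation*}
This gives the claimed $C(R)T^2$. The final sentence of the theorem, that $\Non[u_0]$ involves derivatives of $u_0$ up to order $m$ and powers up to $u_0^N$, follows directly by writing out the polynomial nonlinearity.

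\textbf{Main obstacle.} I expect Step 2 to be the hardest, specifically the mismatch in regularity. $\Non$ contains derivatives, so its Lipschitz bound holds from some $H^{s}$ into $H^{s-k}$, not into $L^2$ from $L^2$. Meanwhile $u(s)-u_0$ is only $O(s)$ in a norm that has lost $m$ derivatives. My first attempt would be to choose $s$ large, e.g.\ $s \ge m + k + d/2$ so that $H^s$ is an algebra. I would then take $X = H^{k}$ and use interpolation to trade regularity for the linear-in-$s$ rate. If that fails, the fallback is to accept the constant $C(R)$ depending on higher norms of $u_0$, which the uniformity over $B_s(R)$ permits.
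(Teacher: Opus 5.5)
Your main line is correct, and it takes a genuinely different and more economical route than the paper. The paper works with the \emph{first Picard iterate}. It Taylor-expands $G_t\ast u_0=u_0+t\Lop u_0+\tfrac{t^2}{2}\Lop^2u_0+O(t^3)$ and linearizes $\Non$ through its Fr\'echet derivative. This gives $u^{(1)}(T)=G_T\ast u_0+T\bar{G}_T\ast\Non[u_0]+\tfrac{T^2}{2}\tilde{G}_T\ast D\Non[u_0][\Lop u_0]+O(T^3)$. It then simply asserts that $u(T)-u^{(1)}(T)$ is $O(T^2)$ ``by standard Picard remainder estimates.''

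You skip the iterate and write the remainder exactly as $\int_0^T G_{T-t}\ast(\Non[u(t)]-\Non[u_0])\,dt$. You close it with a Lipschitz bound on $\Non$ and the first-order time regularity $\|u(t)-u_0\|_X\le C(R)\,t$. That is precisely the estimate the paper delegates. It needs only $\Lop u_0\in X$, whereas a uniform $O(t^3)$ Taylor remainder for the semigroup tacitly requires $\Lop^3u_0$ to be bounded in the relevant norm. What the paper's route buys is the explicit $T^2$ coefficient, which its Step~8 uses to exhibit $\Lop u_0$ (the order-$m$ derivative) entering through $D\Non$. Your bound absorbs that term into $C(R)T^2$, which is all the statement asks for.

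Three points need fixing.

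\textbf{The fallback in your ``main obstacle'' does not work.} A constant depending on higher norms of $u_0$ is by definition not uniform over $B_s(R)$. Interpolation also cannot rescue it: it only yields a sublinear rate $t^{(s-k)/m}$ when $s<m+k$. The strengthening $s\ge m+k$ (with room for $L^\infty$ bounds, as in your $s\ge m+k+d/2$) is in fact necessary, not merely convenient. Take $\Lop=\nu\partial_{xx}$ and the degree-one term $\Non[u]=\partial_x u$ on the torus. The exact remainder multiplier at $|\xi|\approx(\nu T)^{-1/2}$ has size $\asymp|\xi|\,T$. Hence $\sup_{B_s(R)}\|\mathcal{R}(u_0,T)\|_{L^2}\gtrsim R\,T^{(1+s)/2}$, which is not $O(T^2)$ for $s<3=m+k$. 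So Assumption~\ref{ass:regularity} alone ($s>m+d/2$) does not suffice, and you should state $s\ge m+k$ as a hypothesis.

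\textbf{Step~1 needs $k<m$.} Parabolic smoothing absorbs a loss of $k$ derivatives only when $k<m$. For $k=m$ the kernel $(T-t)^{-1}$ is not integrable and the problem becomes quasilinear. You therefore need the genuinely semilinear case $k<m$, which is how the theorem's ``derivatives up to order $m$'' has to be read.

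\textbf{The formula for $T_*$ holds only up to constants.} With smoothing, the contraction condition is $C\,T^{1-k/m}L(2R)<1$, together with a separate self-mapping condition. This gives some $T_*(R)>0$, but not literally $L(R)^{-1}$; the paper does not derive that formula either. Also, the Lipschitz constant in Step~3 (from $X$ into $L^2$ on $H^s$-bounded sets) is a different constant from the one in Step~1, so keep them separate.
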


The full proof appears in Appendix~\ref{app:picard}. The essential point is that derivatives enter through two mechanisms: the linear operator $\Lop$ directly differentiates $u_0$, while the nonlinearity $\Non$ combines powers and derivatives according to its structure.

\begin{remark}[Beyond Short Times]
The derivative dependence persists for arbitrary times via iterated Picard and extends to stationary PDEs (e.g., Darcy flow) and quasilinear systems (e.g., Navier-Stokes) where derivatives appear directly. See Appendix~\ref{app:picard_extended}.
\end{remark}

\begin{remark}[Beyond Parabolic Evolution]
\label{rem:beyond_parabolic}
A complementary spectral resolvent argument (Appendix~\ref{app:spectral_general}) extends the same derivative structure to general evolution equations $\partial_t^a u = \Lop u$ with temporal order $a > 0$, recovering parabolic ($a{=}1$), hyperbolic ($a{=}2$), and fractional-in-time (Mittag-Leffler) cases uniformly. The temporal order $a$ determines the number of initial conditions $\left\lceil a\right\rceil $ but not the derivative-feature structure, which is set by the spatial order $m$.
\end{remark}

\begin{example}[Burgers Features]
For Burgers' equation, $\Lop = \nu \partial_{xx}$ and $\Non[u] = -u \partial_x u$. The solution operator depends on:
\begin{equation}
\{u_0, \, \partial_x u_0, \, \partial_{xx} u_0, \, u_0 \cdot \partial_x u_0\}.
\end{equation}
The first three are derivatives; the fourth emerges from the convective nonlinearity. In this work, we focus on the \emph{derivative terms} $\{u_0, \partial_x u_0, \partial_{xx} u_0\}$, leaving the nonlinear product $u_0 \cdot \partial_x u_0$ to the network. This differs from equation-aware approaches that encode full nonlinear structure \citep{zhu2025generalizing}; our method requires only the PDE order $m$, not equation-specific terms.
\end{example}

\section{Approximation Theory: Why Features Help}
\label{sec:approx}

We now prove that derivative features admit a better approximation bound. This section establishes that providing derivative features is beneficial; the \emph{optimal} choice of derivative order is deferred to Section~\ref{sec:optimal_beta}. The improvement quantifies how much easier the learning problem becomes when inputs align with PDE structure.

\subsection{The Feature Map}

\begin{definition}[Feature Maps]
\label{def:feature_map}
We define two related feature maps. For a single derivative order $\beta \geq 0$, the \emph{$\beta$-feature map} $\Phi^\beta: H^s(\Omega) \to H^s \times H^{s-\beta}$ is:
\begin{equation}
\Phi^\beta[u] = (u, D^\beta u).
\end{equation}
For the full set of integer derivatives up to order $m$, the \emph{full feature map} $\Phi^{[m]}: H^s(\Omega) \to \mathcal{X}$ is:
\begin{equation}
\Phi^{[m]}[u] = (u, D^1 u, D^2 u, \ldots, D^m u),
\end{equation}
where $\mathcal{X} = H^s \times H^{s-1} \times \cdots \times H^{s-m}$ with the product norm. When context is clear, we write $\Phi$ for either map.
\end{definition}

The feature map bundles the input function with its derivatives. Since $\Phi$ includes $u$ as its first component, no information is lost.

\begin{assumption}[Regularity]
\label{ass:regularity}
Inputs lie in a Sobolev ball: $u \in B_s(R) := \{u \in H^s(\Omega) : \norm{u}_{H^s} \leq R\}$ with $s > m + d/2$.
\end{assumption}

\begin{proposition}[Stability]
\label{prop:stability}
Under Assumption~\ref{ass:regularity}, the feature map $\Phi: B_s(R) \to \mathcal{X}$ is bi-Lipschitz:
\begin{equation}
\|u_1 - u_2\|_{H^s} \leq \|\Phi[u_1] - \Phi[u_2]\|_{\mathcal{X}} \leq \sqrt{1+m} \cdot \|u_1 - u_2\|_{H^s}.
\end{equation}
The upper constant $\sqrt{1+m}$ is independent of the ball radius $R$.
\end{proposition}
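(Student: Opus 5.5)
Both maps $\Phi^{\beta}$ and $\Phi^{[m]}$ are linear, so $\Phi[u_1]-\Phi[u_2]=\Phi[u_1-u_2]$. It therefore suffices to prove, for every $w:=u_1-u_2\in H^s(\Omega)$,
\begin{equation*}
\|w\|_{H^s}\le \|\Phi[w]\|_{\mathcal{X}}\le \sqrt{1+m}\,\|w\|_{H^s}.
\end{equation*}
The restriction to $B_s(R)$ plays no role beyond ensuring $w\in H^s$. This is why the constant is independent of $R$, and I will point that out at the end. I take the product norm on $\mathcal{X}$ to be the Euclidean one,
\begin{equation*}
\|(v_0,\dots,v_m)\|_{\mathcal{X}}^2=\sum_{j=0}^m\|v_j\|_{H^{s-j}}^2 .
\end{equation*}
I work with the Fourier-side Sobolev norm $\|v\|_{H^r}^2=\int(1+|\xi|^2)^r|\hat v(\xi)|^2\,d\xi$, or the corresponding Fourier series on a periodic domain, which is consistent with Definition~\ref{def:frac_deriv}.

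\textbf{Lower bound.} The first component of $\Phi[w]$ is $w$ itself, measured in $H^s$. Dropping the nonnegative remaining terms gives $\|\Phi[w]\|_{\mathcal{X}}^2\ge\|w\|_{H^s}^2$.

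\textbf{Upper bound.} The key step is that each derivative is a contraction from $H^s$ to $H^{s-j}$:
\begin{equation*}
\|D^j w\|_{H^{s-j}}\le\|w\|_{H^s}.
\end{equation*}
To see this, note that $|(i\xi)^j|^2=|\xi|^{2j}\le(1+|\xi|^2)^j$. Hence
\begin{equation*}
\|D^jw\|_{H^{s-j}}^2=\int(1+|\xi|^2)^{s-j}|\xi|^{2j}|\hat w|^2\,d\xi\le\int(1+|\xi|^2)^{s}|\hat w|^2\,d\xi=\|w\|_{H^s}^2 .
\end{equation*}
The same computation holds verbatim for fractional $\beta$, since $|(i\xi)^\beta|=|\xi|^\beta$. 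Summing over $j=0,\dots,m$, that is, over $m+1$ terms each at most $\|w\|_{H^s}^2$, gives $\|\Phi[w]\|_{\mathcal{X}}^2\le(1+m)\|w\|_{H^s}^2$. Taking square roots yields the claim. For $\Phi^\beta$ the same argument gives the sharper constant $\sqrt2$.

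\textbf{Main obstacle.} The only delicate point is the normalization of the norms. The constant $\sqrt{1+m}$ relies on using the Bessel-potential (Fourier) definition of $H^r$ and the $\ell^2$ product norm. With a non-Fourier equivalent norm, for instance derivative sums on a bounded non-periodic domain, the contraction $\|D^jw\|_{H^{s-j}}\le\|w\|_{H^s}$ only holds up to a constant. In that case I would state that convention explicitly, or fall back to extension operators and equivalence of norms, at the cost of a domain-dependent but still $R$-independent constant. Assumption~\ref{ass:regularity}'s condition $s>m+d/2$ is not needed here. It only guarantees that every component lies in $H^{s-j}$ with $s-j>d/2$ (so the components are continuous), which matters for later sections.
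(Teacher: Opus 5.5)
Your proof is correct and follows the paper's own argument: the lower bound comes from discarding the derivative components of the Euclidean product norm, and the upper bound from the multiplier inequality $(1+|\xi|^2)^{s-j}|\xi|^{2j}\le(1+|\xi|^2)^s$ summed over the $m+1$ components. Your explicit use of linearity and your remark about the Fourier (Bessel-potential) norm convention spell out what the paper uses implicitly, and your observation that neither $R$ nor $s>m+d/2$ enters the argument is accurate.
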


The lower bound is immediate since $\Phi$ preserves $u$. The upper bound follows from the continuity of differentiation in Sobolev spaces. See Appendix~\ref{app:stability} for the full proof.

\subsection{Main Approximation Result}

\begin{theorem}[Approximation Rates]
\label{thm:approx}
Let $\G: B_s(R) \to L^2(\Omega)$ be the solution operator for a PDE of differential order $m$. Denote by $\NO_W$ the class of neural operators with $W$ parameters.

\textbf{(i) Without features:}
\begin{equation}
\inf_{\G_\theta \in \NO_W} \sup_{u \in B_s(R)} \norm{\G[u] - \G_\theta(u)}_{L^2} = O\left(W^{-\frac{s-m}{d}}\right).
\end{equation}

\textbf{(ii) With full features $\Phi^{[m]}$:}
\begin{equation}
\inf_{\G_\theta \in \NO_W} \sup_{u \in B_s(R)} \norm{\G[u] - \G_\theta(\Phi^{[m]}[u])}_{L^2} = O\left(W^{-\frac{s}{d}}\right).
\end{equation}
\end{theorem}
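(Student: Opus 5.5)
The plan is to reduce both statements to a single approximation template and read off the rates from the effective smoothness of the map the network actually has to represent. The template is a standard neural-operator result, in the spirit of encoder–approximator–decoder bounds for FNO and DeepONet, which we take as given. If the target operator, viewed as a map on the input representation, is Lipschitz from a space of regularity $r$ into $L^2$ on a bounded set, then $W$ parameters achieve uniform error $O(W^{-r/d})$. The factor $W^{-r/d}$ comes from truncating a spectral or Galerkin encoding of the input to $K\sim W$ modes: in $d$ dimensions the tail of an $H^r$-bounded function beyond $K$ modes is $O(K^{-r/d})$. The remaining (finite-dimensional) network error can be made of the same order at the same budget. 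So the whole proof is a bookkeeping argument about which regularity $r$ is available in each of cases (i) and (ii).

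For (i), I would use Theorem~\ref{thm:derivative_emergence}, together with the extended Picard argument, to write $\G[u]=\Psi(u,D^1u,\dots,D^m u)$. Here $\Psi$ is Lipschitz in the product norm on the bounded image, since the Green's function convolutions are $L^2$-bounded and the nonlinearity is Lipschitz on $B_s(R)$. A network fed only $u$ must approximate $\Psi\circ\Phi^{[m]}$. The truncation analysis therefore has to control $D^m u$ from truncated modes of $u$. A $K$-mode truncation $P_K u$ gives $\|D^m(u-P_Ku)\|_{L^2}\lesssim K^{-(s-m)/d}\|u\|_{H^s}$, losing $m$ orders of smoothness. Substituting into $\Psi$ with its Lipschitz constant yields the $O(W^{-(s-m)/d})$ upper bound. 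Assumption~\ref{ass:regularity}, namely $s>m+d/2$, guarantees that $D^m u$ is continuous and bounded, so that the nonlinear products inside $\Psi$ stay controlled. This makes the Lipschitz constant depend only on $R$.

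For (ii), the features $D^ju$ are supplied exactly, so the network never differentiates. It approximates $\Psi$ on the set $\Phi^{[m]}(B_s(R))$. Each component $D^ju$ is encoded with its own $K$-mode truncation, and I would aim to show the error this introduces is $O(K^{-s/d})$. Proposition~\ref{prop:stability} ensures that the image of the ball is a bounded, bi-Lipschitz copy of $B_s(R)$, so compactness, and hence the covering-number behaviour, is inherited with the smoothness index $s$ rather than $s-m$. Composing with the Lipschitz $\Psi$ then gives $O(W^{-s/d})$.

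The main obstacle is this last step. Naively, $D^ju$ has only $H^{s-j}$ regularity, so truncating it separately would reproduce the $s-m$ loss. To avoid this, I would encode the feature tuple jointly through the coefficients of $u$ alone, since $\Phi^{[m]}$ is a Fourier multiplier. The approximation error should then be measured in the $\mathcal{X}$-norm of Definition~\ref{def:feature_map}, where component $j$ is measured in $H^{s-j}$. In that norm the truncation tail of every component is $O(K^{-s/d})$ relative to the corresponding $H^{s}$-type scale. What remains is to argue that $\Psi$ is Lipschitz from $\mathcal{X}$ to $L^2$ with its weaker per-component norms. I would obtain this from the smoothing of $G_T$ and $\bar G_T$ in the Picard representation, which absorbs the $m$ derivatives; this is precisely the step where the effect of the derivative features enters, and I would check it carefully first.
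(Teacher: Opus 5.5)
Your route for (i), spectral truncation inside an encoder--decoder template instead of the paper's H\"older modulus plus $L^2$ metric entropy, is an acceptable substitute modulo the template (see the end). The step you flag in (ii), however, does not close, in either of the two ways you propose.

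\emph{First option: the $\mathcal{X}$-norm.} Here the truncation tail does not decay at all. We have $\|\Phi^{[m]}[u-P_Ku]\|_{\mathcal{X}}\ge\|u-P_Ku\|_{H^s}$, and $\sup_{u\in B_s(R)}\|u-P_Ku\|_{H^s}=R$ for every $K$ (take a single Fourier mode above the cutoff). Proposition~\ref{prop:stability} transports the non-compactness of $B_s(R)$ in $H^s$ onto its image; it yields no covering numbers with index $s$ in $\mathcal{X}$.

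\emph{Second option: weaker per-component norms.} Every component tail is $O(K^{-s/d})$ uniformly on $B_s(R)$ only if $D^ju$ is measured in $H^{-j}$ or weaker. On the image, any such norm is dominated by $\sqrt{1+m}\,\|u_1-u_2\|_{L^2}$. So Lipschitz continuity of $\Psi$ in it already implies $L^2$-Lipschitz continuity of $\G=\Psi\circ\Phi^{[m]}$, a statement that never mentions the features. If the smoothing of $G_T,\bar G_T$ gives you that, your own template applied to case (i) yields $W^{-s/d}$, and the two cases collapse. Smoothing is a property of $\G$, not of the input representation, so it cannot create a gap between (i) and (ii).

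The underlying reason is that your ``joint encoding through the coefficients of $u$'' is also available to the featureless network. On the retained modes, $P_Ku\mapsto\Phi^{[m]}[P_Ku]$ is the diagonal multiplier $(i\xi)^j$, which composes into the first linear layer at no extra parameter cost. Hence any encoder--approximator--decoder bound with a linear spectral encoder gives the \emph{same} exponent for (i) and (ii): $(s-m)/d$ if $\Psi$ is only Lipschitz in product-$L^2$, and $s/d$ if $\G$ is $L^2$-Lipschitz.

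The paper does not resolve this either. It reaches $W^{-s/d}$ by pairing Lipschitz continuity in the $\mathcal{X}$ metric (Lemma~\ref{lem:lipschitz_lifted}) with covering numbers of $B_s(R)$ in $L^2$, which are two different metrics. Remark~\ref{rem:scope} concedes that under a common metric the exponents coincide, so (i) and (ii) are achievability statements, not a proven separation. To reproduce (ii) you would have to adopt that mixed-metric reading. A genuine separation needs a different model class, such as norm-constrained weights or a hard internal spectral cutoff.

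Separately, your template's claim that the finite-dimensional error ``can be made of the same order at the same budget'' is unsupported. A generic Lipschitz map of $K\sim W$ variables needs roughly $\varepsilon^{-K}$ ReLU weights, so algebraic rates in $W$ require extra structure. The paper's appeal to \citet{yarotsky2017error} has the same weakness, but your bookkeeping rests on it entirely.
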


\begin{proof}
Both bounds are achievability results (upper bounds on the minimax error). Full details appear in Appendix~\ref{app:approx}, with the scope of the comparison between (i) and (ii) discussed in Remark~\ref{rem:scope}.

\textbf{Part (i):} The operator $\G: B_s(R) \to L^2$ has modulus of continuity $\omega(\varepsilon) \leq C(R) \varepsilon^{(s-m)/s}$ when measured in $L^2$. This follows from Sobolev interpolation: for $u_1, u_2 \in B_s(R)$ with $\|u_1 - u_2\|_{L^2} \leq \varepsilon$, we have $\|u_1 - u_2\|_{H^m} \lesssim \varepsilon^{(s-m)/s} R^{m/s}$. The H\"older exponent $(s-m)/s$ arises because inputs lie in $H^s$ but the operator depends on $m$-th order derivatives: the ``missing'' regularity $m$ out of total $s$ determines the continuity loss. The $\epsilon$-covering number of $B_s(R)$ in $L^2$ satisfies $\log \mathcal{N}(\epsilon) \asymp (R/\epsilon)^{d/s}$ \citep{carl1990entropy}. Applying the approximation theory of \citet{yarotsky2017error} for H\"older-continuous functions on sets with this metric entropy yields the rate $W^{-(s-m)/d}$.

\textbf{Part (ii):} Define the lifted operator $\tilde{\G}[\Phi^{[m]}[u]] := \G[u]$. On bounded balls, $\tilde{\G}$ is Lipschitz: since the features $\Phi^{[m]}[u] = (u, D^1 u, \ldots, D^m u)$ provide the derivatives explicitly, the operator no longer needs to ``invert'' differentiation, an unbounded operation that was the source of the H\"older discontinuity in Part (i). By Theorem~\ref{thm:derivative_emergence}, the solution operator depends on $u$ and its derivatives through polynomial combinations; polynomial functions are Lipschitz on bounded balls with constant $L(R) = O(R^{N-1})$ for degree-$N$ nonlinearity. By Proposition~\ref{prop:stability}, the feature map is bi-Lipschitz, so metric entropy is preserved up to constants. Applying \citet{yarotsky2017error} to the Lipschitz function $\tilde{\G}$ achieves rate $W^{-s/d}$.
\end{proof}The improvement in the bound, from $(s-m)/d$ to $s/d$, is substantial. For a second-order PDE in one dimension with $s=3$, the exponent improves from 1 to 3, meaning the same accuracy requires far fewer parameters.

\section{The Statistical Twist: Why $\beta^* < m$}
\label{sec:optimal_beta}

Theorem~\ref{thm:approx} suggests using the true PDE order $m$ as the derivative order. We now prove this is suboptimal for noisy, finite samples. This finding is our central theoretical contribution.

\begin{remark}[Complementary Theoretical Perspectives]
\label{rem:frameworks}
Section~\ref{sec:approx} and this section answer \emph{different questions}. Section~\ref{sec:approx} asks: ``Do derivative features help?'' and answers via approximation theory. This section asks: ``Given that features help, what order is optimal?'' and answers via spectral bias-variance analysis. These perspectives are complementary: one justifies using derivative features at all, the other determines the best choice.
\end{remark}

\subsection{The Core Tension}

Consider learning an operator that depends on $D^m u$ using features $D^\beta u$ with potentially different order $\beta$. Two competing effects arise:

\textbf{Bias:} If $\beta < m$, the features miss high-frequency information relevant to the target. This creates approximation error that persists regardless of sample size.

\textbf{Variance:} Differentiation amplifies noise. The Fourier multiplier $|\xi|^\beta$ grows unboundedly with frequency, meaning high-frequency noise gets amplified by factor $|\xi|^\beta$ when we compute $D^\beta u$. Higher $\beta$ means more amplification.

These effects respond oppositely to $\beta$: bias decreases while variance increases. The optimal choice balances them.

\subsection{Spectral Analysis}

To make this tradeoff precise, we assume inputs have a typical spectral structure.

\begin{assumption}[Spectral Decay]
\label{ass:spectral}
The input distribution $\mu$ on $B_s(R)$ has power spectral density $S(\xi) := \E[|\hat{u}(\xi)|^2] = C_S |\xi|^{-2s-d}$ for $1 \leq |\xi| \leq \xi_{\max}$, where $C_S > 0$ depends on the distribution.
\end{assumption}

This rate is the borderline upper envelope for $H^s$ integrability: $\E\|u\|_{H^s}^2$ converges for any $S(\xi)$ decaying strictly faster than $|\xi|^{-2s-d}$ and is logarithmically divergent at exactly this rate. We adopt this envelope as the worst-case spectral profile on $B_s(R)$, yielding the worst-case bias integral in \S\ref{sec:optimal_beta}; Mat\'ern-$s$ fields \citep{stein1999interpolation} sit at this edge. The coefficient $C_S$ cancels in the optimal $\beta^*$ formula (Theorem~\ref{thm:optimal_beta}), so only the exponent matters.

\begin{proposition}[Bias-Variance Decomposition]
\label{prop:bias_variance}
Under Assumption~\ref{ass:spectral}, suppose the network has effective frequency cutoff $\xi_c$ (depending on capacity $W$ but not on $\beta$). Using features $D^\beta u$ with $\beta \leq m$, the MSE decomposes as $\MSE(\beta) = B(\beta) + V(\beta)$ where:
\begin{align}
\text{Bias:} \quad B(\beta) &= C_B \int_{|\xi| > \xi_c} |\xi|^{2(m-\beta)} S(\xi) \, d\xi, \\
\text{Variance:} \quad V(\beta) &= \frac{\sigma^2 C_V}{n} \int_{|\xi| \leq \xi_{\max}} |\xi|^{2\beta} \, d\xi,
\end{align}
where $\xi_{\max} = \pi/h$ is the Nyquist frequency for grid spacing $h$.

The bias captures the spectral mismatch: providing $D^\beta u$ instead of $D^m u$ leaves a residual $|\xi|^{m-\beta}$ that the network must learn implicitly. The variance captures noise amplification: computing $D^\beta u$ amplifies noise at frequency $\xi$ by factor $|\xi|^\beta$.

Both integrals admit closed forms (see Appendix~\ref{app:bias_variance}). The bias is strictly decreasing in $\beta$. The variance is strictly increasing in $\beta$ provided $\xi_{\max} > e^{1/(2\beta+d)}$, which holds for typical discretizations with $\xi_{\max} \gg 1$. There exists a unique $\beta^* \in [0, m]$ minimizing the total MSE.
\end{proposition}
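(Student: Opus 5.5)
We first derive the decomposition from a linearized spectral model, then evaluate both integrals in closed form, and finally use their monotonicity to get existence and uniqueness of $\beta^*$.

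\textbf{Setup.} Model the relevant part of the target as the Fourier multiplier $(i\xi)^m\hat u(\xi)$, which is the derivative term singled out by Theorem~\ref{thm:derivative_emergence}. The network receives $D^\beta(u+\eta)$, where $\eta$ is white noise with variance $\sigma^2$ per mode, and it can realize any bounded multiplier $K(\xi)$ on $|\xi|\le\xi_c$. Writing the target as $(i\xi)^{m-\beta}\cdot(i\xi)^\beta\hat u$, the network on $|\xi|\le\xi_c$ must learn the residual multiplier $(i\xi)^{m-\beta}$. For $|\xi|>\xi_c$ it cannot represent this residual, so the error there is $|\xi|^{2(m-\beta)}S(\xi)$ up to a constant $C_B$. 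This gives $B(\beta)$.

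For the variance, estimating a linear map from $n$ noisy samples produces estimator error proportional to $\sigma^2/n$ times the energy of the noisy feature. Since $D^\beta$ amplifies noise at mode $\xi$ by $|\xi|^{2\beta}$, summing over resolved modes up to Nyquist gives $V(\beta)$, with $C_V$ absorbing the conditioning of the least-squares problem. Cross terms vanish because the noise has mean zero and is independent of $u$. Hence $\MSE=B+V$.

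\textbf{Closed forms.} Use polar coordinates with surface area $\omega_d=|S^{d-1}|$. Under Assumption~\ref{ass:spectral},
\[
B(\beta)=C_BC_S\omega_d\int_{\xi_c}^{\xi_{\max}} r^{2(m-\beta)-2s-1}\,dr .
\]
This is a power-law integral. In the natural regime $s>m$ it is bounded by $\xi_c^{-2(s-m+\beta)}/(2(s-m+\beta))$ when $\xi_{\max}\to\infty$, and the finite-$\xi_{\max}$ formula is analogous. Each integrand $r^{2(m-\beta)-2s-1}$ is strictly decreasing in $\beta$ because $r>\xi_c\ge1$; differentiating under the integral gives
\[
B'(\beta)=-2\int r^{\cdots}\log r\,dr<0 .
\]

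For the variance,
\[
V(\beta)=\frac{\sigma^2C_V\omega_d}{n}\left[\int_0^1 r^{2\beta+d-1}\,dr+\int_1^{\xi_{\max}}r^{2\beta+d-1}\,dr\right].
\]
In closed form this is $\frac{\sigma^2C_V\omega_d}{n}\,\xi_{\max}^{2\beta+d}/(2\beta+d)$. Its $\beta$-derivative is proportional to
\[
\frac{2\xi_{\max}^{2\beta+d}}{2\beta+d}\left(\log\xi_{\max}-\frac{1}{2\beta+d}\right),
\]
which is positive exactly when $\xi_{\max}>e^{1/(2\beta+d)}$. That is the stated condition.

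\textbf{Existence and uniqueness.} Existence of a minimizer is immediate, since $\MSE$ is continuous on the compact interval $[0,m]$. For uniqueness, monotonicity of $B$ and $V$ alone is not enough; we need strict convexity. We get it from $B''(\beta)=4\int r^{\cdots}\log^2 r\,dr>0$ and $V$ being of the form $c\,e^{a\beta}/(2\beta+d)$ with $a=2\log\xi_{\max}$. Computing $V''$ shows it is positive under the same condition $\xi_{\max}>e^{1/(2\beta+d)}$, possibly after strengthening it slightly, for instance to $a(2\beta+d)>2$, which is immediate when $\xi_{\max}\gg1$. So $\MSE$ is strictly convex and the minimizer $\beta^*\in[0,m]$ is unique, possibly at an endpoint.

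\textbf{Main obstacle.} The hardest step is the setup: justifying rigorously that the error splits into exactly these two integrals, with $\xi_c$ independent of $\beta$ and no cross terms. I would state this as a modeling derivation under the linearized, idealized-cutoff-estimator assumptions rather than as a theorem about general networks. The remaining steps are elementary calculus.
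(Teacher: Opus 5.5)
Your proposal is correct and follows essentially the same route as the paper: the same residual-multiplier and noise-amplification model, the same polar-coordinate closed forms with the derivative condition $\ln\xi_{\max} > 1/(2\beta+d)$ for $V$, and strict convexity of $B+V$ for uniqueness. Your compactness argument for existence on $[0,m]$ is cleaner than the paper's intermediate-value argument, which rests on asserted signs of $\MSE'$ at the endpoints, and no strengthening of the resolution condition is needed for convexity, since differentiating under the integral as you did for $B$ gives $V''(\beta) = \frac{4\sigma^2 C_V\omega_d}{n}\int_0^{\xi_{\max}} r^{2\beta+d-1}\log^2 r\,dr > 0$ unconditionally.
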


\begin{remark}[Terminology]
\label{rem:terminology}
We use ``bias'' and ``variance'' by analogy with statistical learning theory. Here, $B(\beta)$ represents \emph{approximation bias} from using derivative order $\beta < m$: high-frequency information beyond network capacity cannot be captured. Meanwhile, $V(\beta)$ represents \emph{estimation variance} from noise amplification: computing $D^\beta u$ from noisy data amplifies observation noise by $|\xi|^\beta$ at each frequency. This mirrors the classical tradeoff where model complexity reduces bias at the cost of increased variance.
\end{remark}

\begin{remark}[Capacity Cutoff in FNO]
\label{rem:capacity_fno}
For Fourier Neural Operators, the capacity cutoff is well approximated by $\xi_c = 2\pi k_{\max}/L$: FNO truncates to $k_{\max}$ Fourier modes by architectural design, imposing a hard frequency cutoff independent of input features. The preprocessing $D^\beta u$ changes the input's spectral content but not the architecture's representational capacity.
\end{remark}

\subsection{The Optimal Derivative Order}

Balancing bias and variance yields our main theoretical result.

\begin{theorem}[Optimal Derivative Order]
\label{thm:optimal_beta}
Under Assumptions~\ref{ass:regularity} and~\ref{ass:spectral}, define $\tau := 2(s-m) > 0$ and the capacity constant $A_c := 2s \ln\xi_c + (2m+d)\ln\xi_{\max}$, where $\xi_c$ is the network's frequency cutoff and $\xi_{\max} = \pi/h$ is the Nyquist frequency. The MSE-optimal derivative order satisfies:
\begin{equation}
\label{eq:optimal_beta}
\boxed{\beta^* = m - \delta^*, \quad \delta^* = \frac{A_c - \ln(n/\sigma^2)}{2\ln(\xi_c \cdot \xi_{\max})}}
\end{equation}
up to $O(1/\ln\xi_c)$ corrections, with $\beta^* \in [0, m]$.
\end{theorem}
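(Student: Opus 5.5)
The plan is to start from the decomposition $\MSE(\beta)=B(\beta)+V(\beta)$ of Proposition~\ref{prop:bias_variance}, evaluate both integrals in closed form, and solve the stationarity condition $B'(\beta)+V'(\beta)=0$ on a logarithmic scale. Terms that are subleading in $\ln\xi_c$ and $\ln\xi_{\max}$ will be absorbed into the stated $O(1/\ln\xi_c)$ correction.

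\textbf{Step 1: closed forms.} For the bias, insert Assumption~\ref{ass:spectral} and pass to radial coordinates in $\R^d$, writing $\omega_d$ for the area of the unit sphere. The integrand becomes $C_B C_S\,\omega_d\, r^{2(m-\beta)-2s-d}\,r^{d-1}=C_BC_S\omega_d\,r^{-2(s-m+\beta)-1}$. The exponent satisfies $2(s-m+\beta)\ge\tau>0$ because $s>m$ under Assumption~\ref{ass:regularity}, so the integral converges at infinity. Cutting at $\xi_{\max}$ only adds a relative correction of size $(\xi_c/\xi_{\max})^{\tau}$, which I drop. This gives
\begin{equation*}
B(\beta)\approx \frac{C_BC_S\omega_d}{2(s-m+\beta)}\,\xi_c^{-2(s-m+\beta)} .
\end{equation*}
For the variance, the same radial computation gives
\begin{equation*}
V(\beta)\approx\frac{\sigma^2C_V\omega_d}{n(2\beta+d)}\,\xi_{\max}^{2\beta+d},
\end{equation*}
where the contribution of $|\xi|<1$ is lower order.

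\textbf{Step 2: stationarity.} Differentiating in $\beta$, the power factors dominate. We have $B'(\beta)=-B(\beta)\bigl[2\ln\xi_c+\tfrac{1}{s-m+\beta}\bigr]$ and $V'(\beta)=V(\beta)\bigl[2\ln\xi_{\max}-\tfrac{2}{2\beta+d}\bigr]$. The condition $B'+V'=0$ is therefore equivalent to
\begin{equation*}
\ln B(\beta)-\ln V(\beta)=\ln\frac{\ln\xi_{\max}}{\ln\xi_c}+O\!\left(\frac{1}{\ln\xi_c}\right).
\end{equation*}
Substituting $\beta=m-\delta$ and taking logarithms of the closed forms, the leading terms are
\begin{equation*}
-2(s-\delta)\ln\xi_c \quad\text{and}\quad -\ln(n/\sigma^2)+(2m-2\delta+d)\ln\xi_{\max}.
\end{equation*}
The remaining pieces are $\beta$-independent constants ($C_B,C_S,C_V$, the $\ln$-ratio, and the logarithms of $2(s-\delta)$ and $2\beta+d$), all of which are $O(1)$.

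\textbf{Step 3: solve for $\delta$.} Equating the two leading expressions gives
\begin{equation*}
2\delta\,(\ln\xi_c+\ln\xi_{\max})=2s\ln\xi_c+(2m+d)\ln\xi_{\max}-\ln(n/\sigma^2)+O(1).
\end{equation*}
The left-hand coefficient is $2\ln(\xi_c\xi_{\max})$ and the $\delta$-independent part on the right is exactly $A_c-\ln(n/\sigma^2)$, which yields $\delta^*$ as in \eqref{eq:optimal_beta}. Dividing the $O(1)$ constants by $2\ln(\xi_c\xi_{\max})\ge 2\ln\xi_c$ shows they contribute only $O(1/\ln\xi_c)$.

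\textbf{Step 4: range and minimality.} Proposition~\ref{prop:bias_variance} gives that $B$ is strictly decreasing and $V$ strictly increasing. Moreover, $\ln B$ and $\ln V$ are approximately linear in $\beta$ with slopes of opposite sign, so $B'+V'$ changes sign at most once. The interior critical point is therefore the unique minimizer. If the formula falls outside $[0,m]$, the minimizer is the nearest endpoint by monotonicity, which ensures $\beta^*\in[0,m]$.

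The main obstacle is controlling the approximations rigorously. Specifically, I must show that the neglected terms (the $\xi_{\max}$ truncation of $B$, the low-frequency part of $V$, and the $\beta$-dependent prefactors $1/(2(s-m+\beta))$ and $1/(2\beta+d)$) shift the root by only $O(1/\ln\xi_c)$. I would handle this with the implicit function theorem applied to $F(\delta)=\ln B-\ln V-\ln(\ln\xi_{\max}/\ln\xi_c)$. Here $|F'(\delta)|\asymp\ln(\xi_c\xi_{\max})$ is bounded below, while the perturbations of $F$ are $O(1)$, which gives the required root shift.
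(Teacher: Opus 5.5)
Your proposal is correct and follows essentially the paper's route: the same closed forms for $B$ and $V$, and a log-form first-order condition (your Step~2 is exactly the implicit relation in the paper's next-order appendix; its main proof first absorbs the slowly varying prefactors into constants). You then identify the $\delta$-free leading part as $A_c-\ln(n/\sigma^2)$, with the remaining terms shifting $\delta^*$ by $O(1/\ln\xi_c)$, and clip to $[0,m]$. One minor refinement: $\ln(\ln\xi_{\max}/\ln\xi_c)$ is not literally $O(1)$ unless $r=\ln\xi_{\max}/\ln\xi_c$ stays bounded, which the paper assumes; but $\xi_c\le\xi_{\max}$ gives $r\ge 1$, so its contribution $\ln r/\bigl(2(1+r)\ln\xi_c\bigr)$ is below $1/(2\ln\xi_c)$ and your conclusion holds without that assumption.
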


The formula shows that $\delta^*$ depends on the signal-to-noise ratio $n/\sigma^2$: as data quality improves, the gap shrinks and $\beta^* \to m$. The capacity constant $A_c$ captures network architecture effects. The full derivation appears in Appendix~\ref{app:optimal_beta}.
\begin{remark}[Data Sufficiency for Exact Physics]
\label{rem:data_sufficiency}
To leading order in $1/\ln \xi_c$, $\delta^* > 0$ strictly whenever $n/\sigma^2 < e^{A_c}$, with exact-physics recovery requiring
\begin{equation}
n^\star = \sigma^2 \exp(A_c).
\end{equation}
The exponential dependence makes $n^\star$ astronomical in practice. For a representative FNO setting (unit-interval Burgers, $N{=}1024$, $k_{\max}{=}16$, giving $\xi_{\max} \approx 3200$, $\xi_c \approx 100$, $A_c \approx 68$), even at $\sigma = 10^{-3}$ this implies $n^\star \approx 10^{23}$ samples (a leading-order figure, shifted by the unidentified constant $\kappa$ of App.~\ref{app:delta_next_order}), far beyond any realistic training budget. Sub-physical regularization is therefore the operational regime at any practically achievable scale.
\end{remark}
These results have three key implications:
\begin{enumerate}[leftmargin=*,itemsep=2pt]
\item \textbf{Strict suboptimality of $m$:} For any finite sample size $n$ and noise level $\sigma > 0$ with $n/\sigma^2 < e^{A_c}$, the optimal order satisfies $\beta^* < m$ strictly. Below this threshold, the exact PDE order is never optimal.

\item \textbf{Asymptotic recovery:} As $n \to \infty$ (or $n \geq n^\star$) or $\sigma \to 0$, the gap $\delta^* \to 0$ and $\beta^* \to m$. Physics is recovered in the large-data limit.

\item \textbf{Capacity-adaptive:} Networks with higher frequency cutoff $\xi_c$ (e.g., more Fourier modes in FNO) benefit from lower $\beta^*$, automatically trading approximation power for regularization.
\end{enumerate}

\section{$\partial$-NO: Derivative-Augmented Neural Operators}
\label{sec:method}

Our theoretical analysis motivates a simple architecture: neural operators with learnable fractional derivative features (Figure~\ref{fig:architecture}).

Given a base neural operator $\NO_\theta$, the $\partial$-NO augmentation replaces inputs $(x, u)$ with:
\begin{equation}
\partial\text{-NO}: \quad \left(x, u, s_1 \Dfrak^{\beta_1} u, s_2 \Dfrak^{\beta_2} u, \ldots, s_k \Dfrak^{\beta_k} u\right),
\end{equation}
where both the derivative orders $\beta_1, \ldots, \beta_k$ and the scales $s_1, \ldots, s_k$ are learnable parameters optimized jointly with network weights $\theta$. The learnable scales normalize features across different derivative orders, since $\Dfrak^\beta u$ can vary dramatically in magnitude as $\beta$ changes.

While the Weyl fractional derivative (Definition~\ref{def:frac_deriv}) provides an elegant spectral formulation, it can introduce Fourier artifacts (Gibbs ringing) near discontinuities. We instead use the Gr\"unwald-Letnikov (GL) finite-difference formulation:
\begin{equation}
\Dfrak^\beta u(x) = \frac{1}{h^\beta} \sum_{j=0}^{K_{\mathrm{GL}}} (-1)^j \binom{\beta}{j} u(x - jh),
\end{equation}
where $h$ is the grid spacing and the generalized binomial coefficients $\binom{\beta}{j}$ extend naturally to non-integer $\beta$ via the Gamma function. This formulation is differentiable with respect to $\beta$, enabling gradient-based optimization.

In practice, we omit the factor $h^{-\beta}$. For small $h$ and moderate $\beta$, this factor can be dangerously large and destabilize training. Since the learnable scales $s_k$ and the network itself absorb any constant scaling, removing $h^{-\beta}$ preserves the functional structure of the derivative while ensuring training stability. We denote backbones augmented with learnable derivative features using the $\partial$-prefix: \textbf{$\partial$-FNO}, \textbf{$\partial$-Transolver}, etc. %

\section{Experiments}
\label{sec:experiments}

We validate $\partial$-NO across multiple PDEs and backbone architectures. Our experiments address two questions: Does $\partial$-augmentation improve accuracy? Does the theory correctly predict how $\beta^*$ varies with noise, sample size, and network capacity?

\subsection{Setup}

\textbf{Datasets.} We use benchmark datasets following \citet{li2020fourier}: Burgers' equation (smooth, $\nu = 0.1$; near-shock, $\nu = 0.001$), Darcy flow (elliptic), and two-dimensional Navier-Stokes (incompressible turbulence at $\nu = 10^{-5}$), plus a third-order KdV dataset ($m{=}3$) obtained by direct numerical integration from random initial conditions (App.~\ref{app:experiments}).

\textbf{Backbones.} Our primary goal is to validate the theoretical predictions from Section~\ref{sec:optimal_beta}. Neural operator training is particularly sensitive to hyperparameters, often requiring very small batch sizes (values of 8, 4, or even 1--2 are common \citep{wu2024Transolver}) to achieve good performance. This highlights the delicate nature of training such models. We use FNO \citep{li2020fourier} as a canonical spectral method, its tensorized variant TFNO, the localized operator LocalNO \citep{liu2024neural}, and Transolver \citep{wu2024Transolver} as a state-of-the-art attention-based architecture, employing the THUML library \citep{thuml2024neuraloperator} with default hyperparameters.

\begin{table}[!h]
\caption{Relative $L^2$ error (\%), averaged over 5 seeds, across 5 PDE benchmarks $\times$ 4 backbones $\times$ \{clean, noisy\}. Bold = best per (backbone, dataset, noise) cell. $\partial$-augmentation improves every cell tested.}
\label{tab:main}
\vskip 0.05in
\centering
\resizebox{\textwidth}{!}{%
\begin{tabular}{@{}l cc cc cc cc cc@{}}
\toprule
& \multicolumn{2}{c}{\textbf{Burgers} ($\nu{=}0.1$)} & \multicolumn{2}{c}{\textbf{Burgers} ($\nu{=}0.001$)} & \multicolumn{2}{c}{\textbf{KdV}} & \multicolumn{2}{c}{\textbf{Darcy} (Elliptic)} & \multicolumn{2}{c}{\textbf{NS2D} ($\nu{=}10^{-5}$)} \\
\cmidrule(lr){2-3}\cmidrule(lr){4-5}\cmidrule(lr){6-7}\cmidrule(lr){8-9}\cmidrule(lr){10-11}
& $\sigma{=}0$ & $\sigma{=}0.05$ & $\sigma{=}0$ & $\sigma{=}0.05$ & $\sigma{=}0$ & $\sigma{=}0.05$ & $\sigma{=}0$ & $\sigma{=}0.05$ & $\sigma{=}0$ & $\sigma{=}0.05$ \\
\midrule
FNO              & 0.065 & 0.250 & 1.138 & 1.632 & 1.598 & 2.255 & 5.643 & 5.590 & 3.479 & 3.575 \\
$\partial$-FNO   & \textbf{0.055} & \textbf{0.190} & \textbf{0.837} & \textbf{1.366} & \textbf{1.549} & \textbf{2.195} & \textbf{5.426} & \textbf{5.415} & \textbf{3.428} & \textbf{3.511} \\
\rowcolor{gray!10} \textit{Improv.} & 15.1\% & 24.1\% & 26.5\% & 16.3\% & 3.0\% & 2.7\% & 3.8\% & 3.1\% & 1.5\% & 1.8\% \\
\midrule
TFNO             & 0.071 & 0.209 & 0.702 & 1.032 & 2.312 & 2.745 & 5.223 & 5.206 & 3.428 & 3.534 \\
$\partial$-TFNO  & \textbf{0.062} & \textbf{0.178} & \textbf{0.676} & \textbf{0.917} & \textbf{2.227} & \textbf{2.660} & \textbf{4.843} & \textbf{4.906} & \textbf{3.228} & \textbf{3.368} \\
\rowcolor{gray!10} \textit{Improv.} & 11.9\% & 14.7\% & 3.7\% & 11.2\% & 3.7\% & 3.1\% & 7.3\% & 5.8\% & 5.8\% & 4.7\% \\
\midrule
LocalNO          & 0.063 & 0.236 & 0.768 & 1.272 & 2.587 & 3.625 & 5.511 & 5.534 & 1.994 & 2.620 \\
$\partial$-LocalNO & \textbf{0.060} & \textbf{0.199} & \textbf{0.732} & \textbf{1.203} & \textbf{2.352} & \textbf{3.441} & \textbf{5.430} & \textbf{5.487} & \textbf{1.984} & \textbf{2.533} \\
\rowcolor{gray!10} \textit{Improv.} & 6.1\% & 15.8\% & 4.7\% & 5.4\% & 9.1\% & 5.1\% & 1.5\% & 0.9\% & 0.5\% & 3.3\% \\
\midrule
Transolver       & 0.990 & 0.765 & 18.988 & 7.521 & 22.427 & 16.334 & 14.645 & 15.332 & 4.301 & 4.821 \\
$\partial$-Transolver & \textbf{0.709} & \textbf{0.736} & \textbf{4.633} & \textbf{4.049} & \textbf{17.957} & \textbf{9.588} & \textbf{13.125} & \textbf{11.212} & \textbf{4.043} & \textbf{4.341} \\
\rowcolor{gray!10} \textit{Improv.} & 28.4\% & 3.8\% & 75.6\% & 46.2\% & 19.9\% & 41.3\% & 10.4\% & 26.9\% & 6.0\% & 9.9\% \\
\bottomrule
\end{tabular}%
}
\end{table}

\textbf{Derivative features.} For the second-order benchmarks (Burgers, Darcy, NS), $m=2$ and we use two learnable derivative features. For 1D problems: $\Dfrak_x^{\beta_1} u$ and $\Dfrak_x^{\beta_2} u$, initialized at $\beta_1 = 1.0$ and $\beta_2 = 2.0$; for the third-order KdV ($m=3$) we use a third feature $\Dfrak_x^{\beta_3} u$ initialized at $\beta_3 = 3.0$. For 2D problems, we include directional derivatives $\Dfrak_x^{\beta_{x,1}} u$, $\Dfrak_x^{\beta_{x,2}} u$, $\Dfrak_y^{\beta_{y,1}} u$, $\Dfrak_y^{\beta_{y,2}} u$ and one mixed partial $\Dfrak_y^{\beta_{y,c}} \Dfrak_x^{\beta_{x,c}} u$. Each order is learned independently. In particular, the mixed partial has two learnable orders ($\beta_{x,c}, \beta_{y,c}$) and a single shared scaling factor. %

\textbf{Training.} All models train with AdamW. The $\beta$ parameters use $10\times$ higher learning rate ($10^{-2}$ vs $10^{-3}$) because they sit at the input level: gradients through them propagate back through the full backbone and attenuate accordingly. We report relative $L^2$ errors (\%). Full hyperparameters and implementation details are provided in Appendix~\ref{app:experiments}.

\subsection{Validation of $\partial$-NO Improvements}

Table~\ref{tab:main} summarizes our main results: five PDE benchmarks, four backbones, and both clean ($\sigma{=}0$) and noisy ($\sigma{=}0.05$) regimes.
The $\partial$-augmentation improves every backbone/dataset/noise combination tested; the separate CNO benchmarks (\S\ref{sec:cno_experiments}, Table~\ref{tab:cno}) show larger typical gains. A broader stress test under Gaussian and uniform additive noise with $\sigma \in [0.05, 0.3]$ confirms this robustness across every tested setting (Appendix~\ref{app:stress}), and a matched-parameter control isolates the gains to the derivative inductive bias rather than added capacity (Appendix~\ref{app:matched}). 
Further robustness analyses (emergent directional isotropy on 2D/3D isotropic PDEs in Figs.~\ref{fig:isotropy},~\ref{fig:cns_3d}; resolution-dependent gains in Fig.~\ref{fig:resolution}; complementarity with PINO physics-informed training in Table~\ref{tab:pino}) are reported in the appendix.

\subsection{Theory Validation}
\label{sec:theory_validation}

Theorem~\ref{thm:optimal_beta} makes testable predictions about how $\beta^*$ responds to noise $\sigma$, sample size $n$, and network capacity (Fourier modes $k$). Since $\xi_c = 2\pi k$, the formula simplifies to show explicit dependence on the three experimental parameters:
\begin{equation}
\label{eq:delta_simplified}
\delta^* \;\propto\; \frac{2s \ln k + 2\ln\sigma - \ln n + C}{2\ln k + C'}
\end{equation}
where $C, C'$ are problem-dependent constants. This yields three monotonicities under Assumption~\ref{ass:regularity}: $\sigma \uparrow \Rightarrow \beta^* \downarrow$, $n \uparrow \Rightarrow \beta^* \uparrow$, and $k \uparrow \Rightarrow \beta^* \downarrow$. Near the smoothness boundary $s \approx m+d/2$, the capacity-effect margin shrinks and the prediction loses sharpness, as detailed below. We test all three on Burgers (smooth and shock), using a single learnable $\beta$ and varying one factor at a time.
\begin{figure}[!h]
\centering
\includegraphics[width=1.0\textwidth]{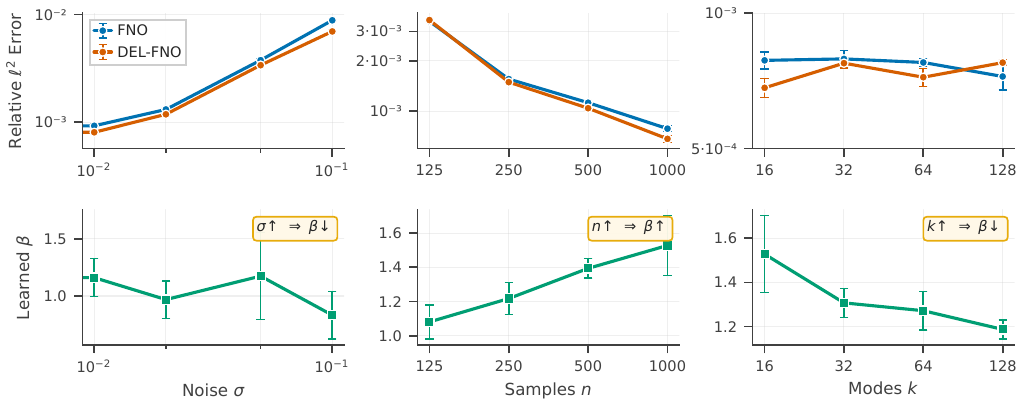}
\caption{\textbf{Burgers (smooth, $\nu{=}0.1$): Theory vs.\ experiment.} Top: relative $L^2$ error; bottom: learned $\beta^*$. Theorem~\ref{thm:optimal_beta}'s two unconditional predictions are supported: (left) $\sigma \uparrow \Rightarrow \beta^* \downarrow$; (center) $n \uparrow \Rightarrow \beta^* \uparrow$, with $\beta^*$ approaching but not reaching $m{=}2$, consistent with the unreachable recovery threshold (Remark~\ref{rem:data_sufficiency}). At smoothness $\tau \approx 2.2$ (comfortably above the Assumption~\ref{ass:regularity} threshold $s > m + d/2$), the capacity formula predicts $k \uparrow \Rightarrow \beta^* \downarrow$ (right), which holds monotonically.}
\label{fig:burgers_ablation}
\end{figure}

\begin{figure}[t!]
\centering
\includegraphics[width=1.0\textwidth]{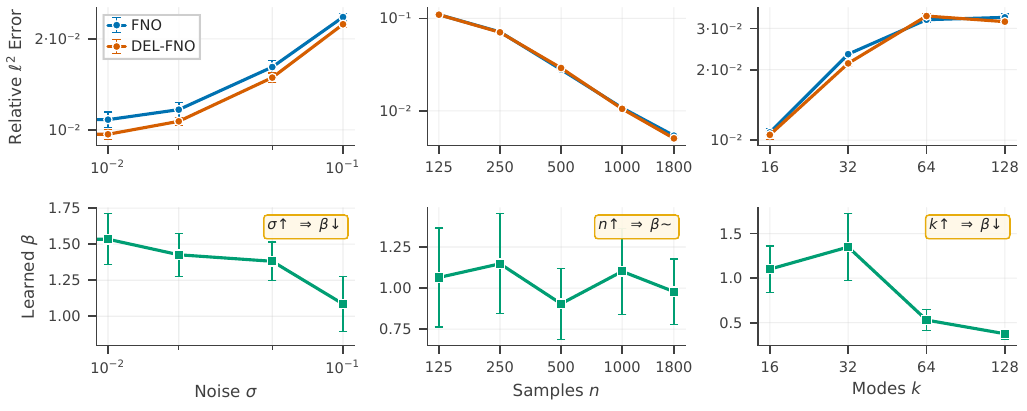}
\caption{\textbf{Burgers (near-shock, $\nu{=}0.001$): Theory vs.\ experiment.} Same panels as Figure~\ref{fig:burgers_ablation}. The noise prediction ($\sigma\uparrow\Rightarrow\beta^*\downarrow$) holds clearly; the sample-size effect ($n\uparrow\Rightarrow\beta^*\uparrow$) attenuates to a noisy plateau near the smoothness boundary ($\tau\approx 0.8$), as expected when Assumption~\ref{ass:regularity} is marginal. At rough-data smoothness $\tau \approx 0.8$ (near the Assumption~\ref{ass:regularity} smoothness boundary), the capacity formula admits a weakened $k$-effect; the empirical $k$-sweep is correspondingly noisier and non-monotone rather than the clean monotone of the smooth case, consistent with this regime change.}
\label{fig:shock_ablation}
\end{figure}

Figures~\ref{fig:burgers_ablation} and~\ref{fig:shock_ablation} confirm both unconditional predictions on smooth Burgers, the noise prediction on shock Burgers (whose sample-size effect plateaus), and the capacity effect (a decrease of $\beta^*$ with $k$ in both regimes, less cleanly so near the smoothness boundary):

\textbf{Noise ($\sigma \uparrow \Rightarrow \beta^* \downarrow$).} The $+2\ln\sigma$ term in \eqref{eq:delta_simplified} predicts that increasing noise increases $\delta^*$. Experimentally, learned $\beta^*$ decreases with $\sigma$ on both smooth and shock Burgers (monotonically on shock, a noisier downward trend on smooth), as predicted. The network automatically regularizes by learning lower derivative orders when data is noisy.

\textbf{Sample size ($n \uparrow \Rightarrow \beta^* \uparrow$).} The $-\ln n$ term predicts that more data decreases $\delta^*$ and drives $\beta^* \to m$. Experimentally, on smooth Burgers, learned $\beta^*$ increases monotonically with $n$, approaching but not reaching $m = 2$ at any practically achievable $n$. The persistent gap $\delta^* > 0$ is consistent with Remark~\ref{rem:data_sufficiency}: the recovery threshold $n^\star = \sigma^2 \exp(A_c)$ is far beyond the sample sizes accessible in this sweep.

\textbf{Capacity ($k \uparrow \Rightarrow \beta^* \downarrow$).} The $+2s\ln k$ term in the numerator competes with $+2\ln k$ in the denominator. Under Assumption~\ref{ass:regularity} ($s > m + d/2$), the numerator dominates and $k \uparrow \Rightarrow \delta^* \uparrow$; near the smoothness boundary the margin shrinks and the prediction loses sharpness. This predicts qualitatively different behavior for smooth vs.\ shock Burgers. Experimentally, we observe two qualitatively consistent but differently scaled behaviors in Figures~\ref{fig:burgers_ablation} and~\ref{fig:shock_ablation}: smooth Burgers ($\tau \approx 2.2$) shows a clean monotonic decrease, while shock Burgers ($\tau \approx 0.8$) shows a noisier, non-monotone decrease over the same range. The consistency across smooth and shock regimes supports the bias-variance framework.

 \subsection{Differential Inductive Bias: $\partial$-CNO}
\label{sec:cno_experiments}

\begin{plainblock}
A natural question is whether $\partial$-NO's gains come from an increased local processing capacity or from the specific inductive bias from fractional differentiation. Convolutional architectures provide a sharp test: a generic convolution \emph{can} represent a finite-difference derivative, but it is not \emph{constrained} to learn one. To validate the importance of the $\partial$-NO's inductive bias, we instantiate $\partial$-augmentation on the Convolutional Neural Operator \citep{raonic2023convolutional} (CNO) and compare $\partial$-CNO against the vanilla CNO across seven PDE benchmarks from the benchmark suite of \citet{raonic2023convolutional}.

\begin{table}[t!]
\caption{$\partial$-CNO vs.\ CNO on seven benchmarks from the suite of \citet{raonic2023convolutional}, evaluated under both relative $L^2$ error and median relative $L^1$ error (averaged over 5 seeds). Bold = best per (dataset, noise, metric) cell. $\partial$-CNO improves CNO in $\mathbf{14/14}$ settings under \emph{both} metrics.}
\label{tab:cno}
\vskip 0.05in
\centering
\resizebox{\textwidth}{!}{%
\begin{tabular}{@{}cl cc cc cc cc cc cc cc@{}}
\toprule
& & \multicolumn{2}{c}{\textbf{Poisson}} & \multicolumn{2}{c}{\textbf{NS Shear}} & \multicolumn{2}{c}{\textbf{Wave}} & \multicolumn{2}{c}{\textbf{Allen--Cahn}} & \multicolumn{2}{c}{\textbf{Transp.\ Cont.}} & \multicolumn{2}{c}{\textbf{Darcy}} & \multicolumn{2}{c}{\textbf{Airfoil}} \\
\cmidrule(lr){3-4}\cmidrule(lr){5-6}\cmidrule(lr){7-8}\cmidrule(lr){9-10}\cmidrule(lr){11-12}\cmidrule(lr){13-14}\cmidrule(lr){15-16}
& $\sigma$ & 0 & 0.05 & 0 & 0.05 & 0 & 0.05 & 0 & 0.05 & 0 & 0.05 & 0 & 0.05 & 0 & 0.05 \\
\midrule
\multirow{3}{*}{\rotatebox{90}{\textbf{$L^2$}}}
  & CNO              & 1.788 & 3.664 & 9.686 & 12.218 & 0.590 & 1.990 & 2.618 & 1.980 & 0.416 & 0.413 & 0.763 & 0.724 & 1.184 & 1.130 \\
  & $\partial$-CNO   & \textbf{1.053} & \textbf{1.293} & \textbf{7.104} & \textbf{7.746} & \textbf{0.520} & \textbf{0.900} & \textbf{1.632} & \textbf{1.575} & \textbf{0.353} & \textbf{0.381} & \textbf{0.752} & \textbf{0.718} & \textbf{1.137} & \textbf{1.111} \\
\rowcolor{gray!10}
  & \textit{Improv.} & \impv{41.2\%} & \impv{64.7\%} & \impv{26.7\%} & \impv{36.6\%} & \impv{11.8\%} & \impv{54.8\%} & \impv{37.7\%} & \impv{20.4\%} & \impv{15.2\%} & \impv{7.7\%} & \impv{1.4\%} & \impv{0.8\%} & \impv{4.0\%} & \impv{1.7\%} \\
\midrule
\multirow{3}{*}{\rotatebox{90}{\textbf{$L^1$}}}
  & CNO              & 1.200 & 2.504 & 8.120 & 10.080 & 0.440 & 1.550 & 0.870 & 0.940 & 0.358 & 0.380 & 0.514 & 0.482 & 0.446 & 0.473 \\
  & $\partial$-CNO   & \textbf{0.660} & \textbf{0.884} & \textbf{5.910} & \textbf{6.110} & \textbf{0.439} & \textbf{0.692} & \textbf{0.646} & \textbf{0.857} & \textbf{0.341} & \textbf{0.352} & \textbf{0.501} & \textbf{0.457} & \textbf{0.443} & \textbf{0.410} \\
\rowcolor{gray!10}
  & \textit{Improv.} & \impv{45.0\%} & \impv{64.7\%} & \impv{27.2\%} & \impv{39.4\%} & \impv{0.2\%} & \impv{55.4\%} & \impv{25.7\%} & \impv{8.8\%} & \impv{4.7\%} & \impv{7.5\%} & \impv{2.6\%} & \impv{5.1\%} & \impv{0.7\%} & \impv{13.3\%} \\
\bottomrule
\end{tabular}%
}
\end{table}
Results are in Table~\ref{tab:cno}. $\partial$-CNO improves CNO in all settings, with peak gains up to \textbf{55--65\%} on Poisson and Wave, and $\sim 39\%$ on NS Shear. The pattern is consistent with the differential interpretation: gains are largest on PDEs whose solution operators most depend on global differential structure (elliptic Poisson, shear-dominated NS, wave propagation), and smaller on data-dominated settings (Darcy, Airfoil). Because CNO is built entirely from learnable convolutions, the persistent improvement from imposing the GL derivative structure on the input is direct evidence that derivative features carry information convolutional layers do not reliably acquire from data alone.
\end{plainblock}

\subsection{Robustness and Directionality}
\label{sec:robustness}
\begin{plainblock}
Beyond \S\ref{sec:theory_validation}'s monotonicities, the bias-variance picture predicts three further effects: learnable orders should recover the geometric symmetries of the PDE without prescription, $\partial$-NO's advantage should grow at coarser resolution, and input-side derivative features should complement loss-side PDE residuals. Each is observed.

\textbf{Emergent isotropy.} On isotropic 2D PDEs (Darcy, Navier-Stokes), learned $\beta_x$ and $\beta_y$ converge to within $\Delta < 0.035$ of each other without isotropy being prescribed (Appendix Fig.~\ref{fig:isotropy}). On 3D compressible Navier-Stokes, axis-aligned channels settle near $\beta \approx 2$ (Laplacian) while the cross-derivative channel collapses to near-identity (Appendix Fig.~\ref{fig:cns_3d}): the network identifies which directional derivatives matter and matches their order to the observed physics.

\textbf{Resolution-dependent gains.} Empirically, $\partial$-FNO outperforms FNO at every resolution from $N{=}64$ to $N{=}2048$, with the noisy-data improvement decaying from $39\%$ to $14\%$ (Appendix Fig.~\ref{fig:resolution}). The advantage of input-side derivative features grows at coarser resolution, where the network's Nyquist headroom is most constrained.

\textbf{Complementarity with physics-informed training.} PINO \citep{li2024physics} encodes the PDE on the loss side; $\partial$-NO encodes it on the input side. Adding $\partial$-features improves PINO in all $8/8$ tested settings (Appendix Table~\ref{tab:pino}): the two channels do not duplicate but complement each other.
\end{plainblock}

\section{Related Work}
\label{sec:related}

\textbf{Neural operators.} The field has grown rapidly since FNO \citep{li2020fourier} and DeepONet \citep{lu2021learning}. Recent work explores attention mechanisms \citep{cao2021choose, hao2023gnot, wu2024Transolver}, wavelets \citep{tripura2023wavelet}, and implicit representations \citep{serrano2023operator}. Theoretical foundations include approximation bounds \citep{kovachki2023neural} and universal approximation results. Our $\partial$-augmentation is orthogonal to these advances: it wraps any backbone with learnable derivative features, requiring no architectural modification.

\textbf{Local-feature operators.} A parallel line of work injects locality into neural operators \citep{chauhan2026efficient}: Conv-FNO \citep{liu2025enhancing} attaches local convolutions to FNO, LocalNO \citep{liu2024neural} introduces localized integral and differential kernels, and CNO \citep{raonic2023convolutional} replaces spectral layers with structured convolutions throughout. $\partial$-NO is complementary: it provides a specific local operation (fractional differentiation) with a built-in inductive bias for PDE structure. Our $\partial$-CNO experiments (\S\ref{sec:experiments}) directly isolate this bias from generic local processing.

\textbf{Derivative and physics-aware features.} Recent work has explored providing derivative information to neural operators. Equation-aware emulators \citep{zhu2025generalizing} encode full PDE structure including nonlinear terms, achieving improvements but requiring the PDE form a priori. Physics-informed approaches like PINNs \citep{raissi2019physics, thodi2022learning} and the physics-informed neural operator \citep{li2024physics} incorporate derivatives through loss terms via automatic differentiation. Our approach differs in three ways: (1) we learn derivative orders rather than specifying them from known physics; (2) we provide derivatives as input features, computed via FFT or GL; and (3) our bias-variance analysis reveals that optimal orders should be less than the PDE order, a phenomenon that equation-matching approaches do not address.

\textbf{Spectral perspectives.} FNO performs convolutions in Fourier space with a hard mode truncation at $k_{\max}$. Our analysis exploits this structure: the capacity cutoff $\xi_c$ in Theorem~\ref{thm:optimal_beta} is sharply defined for FNO and other neural operators with a fixed number of Fourier modes, making the bias-variance tradeoff particularly sharp. More broadly, spectral bias in neural networks, where low frequencies are learned before high frequencies \citep{rahaman2019spectral}, suggests that derivative features may help by preprocessing high-frequency content into a form the network can access earlier in training.

\textbf{Fractional calculus.} Fractional derivatives model anomalous diffusion, memory effects, and non-local phenomena \citep{metzler2000random}, and have found applications in financial machine learning for constructing stationary features that preserve memory \citep{lopezdeprado2018advances}. We use them differently: as a continuous parameterization enabling gradient-based optimization over derivative order. This treats the fractional derivative as a learnable feature transformation, not as a physical model of sub-diffusion or long-range dependence. Connections between learned orders and anomalous diffusion remain an interesting direction for future work.

\section{Discussion}
\label{sec:discussion}

We introduced $\partial$-NO, a simple augmentation providing learnable fractional derivative features to any neural operator backbone. Through Picard iteration on mild solutions, we showed how derivatives of the initial condition directly contribute to the solution operator, providing theoretical grounding for derivative features. Providing these features improves approximation bounds from $O(W^{-(s-m)/d})$ to $O(W^{-s/d})$, a substantial gain in parameter efficiency.

Our central finding is that the optimal derivative order $\beta^*$ is less than the PDE order $m$, with the gap characterized by a spectral bias-variance tradeoff. The closed-form expression for $\delta^* = m - \beta^*$ correctly predicts how the optimal order responds to noise, sample size, and capacity, as confirmed experimentally. For FNO and other operators with fixed Fourier modes, the capacity cutoff is sharply defined. Learning $\beta$ achieves automatic spectral regularization: the network selects $\beta^*$ in response to noise, sample size, and capacity, as predicted by Theorem~\ref{thm:optimal_beta} and confirmed empirically.

\textbf{Limitations.} Four scope boundaries: (i) the Gr\"unwald-Letnikov stencil assumes a regular grid (extensions via graph fractional Laplacians or RBF operators are open); (ii) Theorem~\ref{thm:optimal_beta}'s sharp predictions require $s > m + d/2$ and degrade to a directional guide near shocks, as the near-shock Burgers experiments empirically confirm; (iii) we preprocess only spatial inputs, leaving temporal fractional derivatives as an orthogonal direction; and (iv) the theorem characterizes the population-optimal $\beta^*$ rather than the optimizer's converged value. Empirically, $\beta$ stabilizes within 200--400 AdamW epochs across every backbone and PDE we tested, but formal trajectory guarantees remain open \citep{mehouachi2025catastrophic, mehouachi2021exact, mehouachi2023sifrian}.

Stepping back from these scope considerations, the central finding $\beta^* < m$ illustrates a broader principle for physics-informed machine learning at finite scales: inductive biases derived from governing equations should be \emph{parameterized rather than prescribed}, allowing the optimizer to locate its own compromise between approximation power and estimation variance. Under finite, noisy data, the statistically optimal physics is rarely the literal equation; learning the inductive bias is what turns physical knowledge into statistical efficiency.

\newpage

\section*{Impact Statement}

This work advances methodology for scientific machine learning, with potential applications in a wide range of scientific fields. The finding that learned derivative orders should be sub-physical has implications for how we incorporate domain knowledge into machine learning systems: matching physics exactly may not always be optimal. We identify no specific ethical concerns.

\section*{Code Availability}
The code for $\partial$-NO is available at \url{https://github.com/FaresBMehouachi/delNO}. The authors declare no competing interests.

\section*{Acknowledgments}
This work was supported by the NYUAD Center for Interacting Urban Networks (CITIES), funded by Tamkeen under the NYUAD Research Institute Award CG001. The views expressed in this article are those of the authors and do not reflect the opinions of CITIES or their funding agencies.

\newpage
\bibliographystyle{unsrtnat}
\bibliography{refs_manual}

@inproceedings{li2020fourier,
  title={Fourier neural operator for parametric partial differential equations},
  author={Li, Zongyi and Kovachki, Nikola and Azizzadenesheli, Kamyar and Liu, Burigede and Bhattacharya, Kaushik and Stuart, Andrew and Anandkumar, Anima},
  booktitle={International Conference on Learning Representations},
  year={2021},
  note={arXiv:2010.08895}
}

@article{lu2021learning,
  title={Learning nonlinear operators via {DeepONet} based on the universal approximation theorem of operators},
  author={Lu, Lu and Jin, Pengzhan and Pang, Guofei and Zhang, Zhongqiang and Karniadakis, George Em},
  journal={Nature machine intelligence},
  volume={3},
  number={3},
  pages={218--229},
  year={2021},
  publisher={Nature Publishing Group UK London}
}

@article{kovachki2023neural,
  title={Neural operator: Learning maps between function spaces with applications to {PDE}s},
  author={Kovachki, Nikola and Li, Zongyi and Liu, Burigede and Azizzadenesheli, Kamyar and Bhattacharya, Kaushik and Stuart, Andrew and Anandkumar, Anima},
  journal={Journal of Machine Learning Research},
  volume={24},
  number={89},
  pages={1--97},
  year={2023}
}

@article{cao2021choose,
  title={Choose a transformer: Fourier or {G}alerkin},
  author={Cao, Shuhao},
  journal={Advances in Neural Information Processing Systems},
  volume={34},
  pages={24924--24940},
  year={2021}
}

@inproceedings{hao2023gnot,
  title={{GNOT}: A general neural operator transformer for operator learning},
  author={Hao, Zhongkai and Wang, Zhengyi and Su, Hang and Ying, Chengyang and Dong, Yinpeng and Liu, Songming and Cheng, Ze and Song, Jian and Zhu, Jun},
  booktitle={International Conference on Machine Learning},
  pages={12556--12569},
  year={2023},
  organization={PMLR}
}

@inproceedings{wu2023solving,
  title={Solving high-dimensional {PDE}s with latent spectral models},
  author={Wu, Haixu and Hu, Tengge and Luo, Huakun and Wang, Jianmin and Long, Mingsheng},
  booktitle={International Conference on Machine Learning},
  pages={37417--37438},
  year={2023},
  organization={PMLR}
}

@inproceedings{wu2024Transolver,
  title={Transolver: A Fast Transformer Solver for {PDE}s on General Geometries},
  author={Haixu Wu and Huakun Luo and Haowen Wang and Jianmin Wang and Mingsheng Long},
  booktitle={International Conference on Machine Learning},
  pages={53681--53705},
  year={2024},
  organization={PMLR}
}

@inproceedings{luo2025transolver++,
  title={Transolver++: An accurate neural solver for {PDE}s on million-scale geometries},
  author={Luo, Huakun and Wu, Haixu and Zhou, Hang and Xing, Lanxiang and Di, Yichen and Wang, Jianmin and Long, Mingsheng},
  booktitle={International Conference on Machine Learning},
  pages={41432--41449},
  year={2025},
  organization={PMLR}
}

@article{tripura2023wavelet,
  title={Wavelet neural operator for solving parametric partial differential equations in computational mechanics problems},
  author={Tripura, Tapas and Chakraborty, Souvik},
  journal={Computer Methods in Applied Mechanics and Engineering},
  volume={404},
  pages={115783},
  year={2023},
  publisher={Elsevier}
}

@article{raissi2019physics,
  title={Physics-informed neural networks: A deep learning framework for solving forward and inverse problems involving nonlinear partial differential equations},
  author={Raissi, Maziar and Perdikaris, Paris and Karniadakis, George E},
  journal={Journal of Computational Physics},
  volume={378},
  pages={686--707},
  year={2019},
  publisher={Elsevier}
}

@article{zhu2025generalizing,
  title={Generalizing {PDE} Emulation with Equation-Aware Neural Operators},
  author={Zhu, Qian-Ze and Raccuglia, Paul and Brenner, Michael P},
  journal={arXiv preprint arXiv:2511.09729},
  year={2025}
}

@inproceedings{rahaman2019spectral,
  title={On the spectral bias of neural networks},
  author={Rahaman, Nasim and Baratin, Aristide and Arpit, Devansh and Draxler, Felix and Lin, Min and Hamprecht, Fred and Bengio, Yoshua and Courville, Aaron},
  booktitle={International Conference on Machine Learning},
  pages={5301--5310},
  year={2019},
  organization={PMLR}
}

@article{yarotsky2017error,
title = {Error bounds for approximations with deep {ReLU} networks},
journal = {Neural Networks},
volume = {94},
pages = {103-114},
year = {2017},
issn = {0893-6080},
author = {Dmitry Yarotsky}
}

@article{metzler2000random,
  title={The random walk's guide to anomalous diffusion: a fractional dynamics approach},
  author={Metzler, Ralf and Klafter, Joseph},
  journal={Physics reports},
  volume={339},
  number={1},
  pages={1--77},
  year={2000},
  publisher={Elsevier}
}

@article{pang2019fpinns,
  title={{fPINNs}: Fractional physics-informed neural networks},
  author={Pang, Guofei and Lu, Lu and Karniadakis, George Em},
  journal={SIAM Journal on Scientific Computing},
  volume={41},
  number={4},
  pages={A2603--A2626},
  year={2019},
  publisher={SIAM}
}

@article{guo2022monte,
  title={{M}onte {C}arlo {fPINNs}: Deep learning method for forward and inverse problems involving high dimensional fractional partial differential equations},
  author={Guo, Ling and Wu, Hao and Yu, Xiaochen and Zhou, Tao},
  journal={Computer Methods in Applied Mechanics and Engineering},
  volume={400},
  pages={115523},
  year={2022},
  publisher={Elsevier}
}

@article{javadi2023solving,
  title={Solving fractional order differential equations by using fractional radial basis function neural network},
  author={Javadi, Rana and Mesgarani, Hamid and Nikan, Omid and Avazzadeh, Zakieh},
  journal={Symmetry},
  volume={15},
  number={6},
  pages={1275},
  year={2023},
  publisher={MDPI}
}

@article{ren2023class,
  title={A class of improved fractional physics informed neural networks},
  author={Ren, Hongpeng and Meng, Xiangyun and Liu, Rongrong and Hou, Jian and Yu, Yongguang},
  journal={Neurocomputing},
  volume={562},
  pages={126890},
  year={2023},
  publisher={Elsevier}
}

@article{khan2018novel,
  title={A novel fractional gradient-based learning algorithm for recurrent neural networks},
  author={Khan, Shujaat and Ahmad, Jawwad and Naseem, Imran and Moinuddin, Muhammad},
  journal={Circuits, Systems, and Signal Processing},
  volume={37},
  number={2},
  pages={593--612},
  year={2018},
  publisher={Springer}
}

@article{shin2023accelerating,
  title={Accelerating gradient descent and {A}dam via fractional gradients},
  author={Shin, Yeonjong and Darbon, J{\'e}r{\^o}me and Karniadakis, George Em},
  journal={Neural Networks},
  volume={161},
  pages={185--201},
  year={2023},
  publisher={Elsevier}
}

@article{elnady2025comprehensive,
  title={A comprehensive survey of fractional gradient descent methods and their convergence analysis},
  author={Elnady, Sroor M and El-Beltagy, Mohamed and Radwan, Ahmed G and Fouda, Mohammed E},
  journal={Chaos, Solitons \& Fractals},
  volume={194},
  pages={116154},
  year={2025},
  publisher={Elsevier}
}

@inproceedings{cui2025neural,
  title={Neural variable-order fractional differential equation networks},
  author={Cui, Wenjun and Kang, Qiyu and Li, Xuhao and Zhao, Kai and Tay, Wee Peng and Deng, Weihua and Li, Yidong},
  booktitle={Proceedings of the AAAI Conference on Artificial Intelligence},
  volume={39},
  pages={16109--16117},
  year={2025}
}

@article{pu2016fractional,
  author={Pu, Yi-Fei and Yi, Zhang and Zhou, Ji-Liu},
  journal={IEEE Transactions on Neural Networks and Learning Systems}, 
  title={Fractional {H}opfield Neural Networks: Fractional Dynamic Associative Recurrent Neural Networks}, 
  year={2017},
  volume={28},
  number={10},
  pages={2319-2333},
  doi={10.1109/TNNLS.2016.2582512}}

@inproceedings{kang2024unleashing,
  title={Unleashing the potential of fractional calculus in graph neural networks with {FROND}},
  author={Kang, Qiyu and Zhao, Kai and Ding, Qinxu and Ji, Feng and Li, Xuhao and Liang, Wenfei and Song, Yang and Tay, Wee Peng},
  booktitle={International Conference on Learning Representations},
  year={2024},
  note={arXiv:2404.17099}
}

@article{coelho2024neural,
  title={Neural fractional differential equations: Optimising the order of the fractional derivative},
  author={Coelho, Cecilia and Costa, M Fernanda P and Ferras, Luis L},
  journal={{F}ractal and {F}ractional},
  volume={8},
  number={9},
  pages={529},
  year={2024},
  publisher={MDPI}
}

@article{raonic2023convolutional,
  title={Convolutional neural operators for robust and accurate learning of {PDE}s},
  author={Raonic, Bogdan and Molinaro, Roberto and De Ryck, Tim and Rohner, Tobias and Bartolucci, Francesca and Alaifari, Rima and Mishra, Siddhartha and De B{\'e}zenac, Emmanuel},
  journal={{A}dvances in {N}eural {I}nformation {P}rocessing {S}ystems},
  volume={36},
  pages={77187--77200},
  year={2023}
}

@article{li2024physics,
  title={Physics-informed neural operator for learning partial differential equations},
  author={Li, Zongyi and Zheng, Hongkai and Kovachki, Nikola and Jin, David and Chen, Haoxuan and Liu, Burigede and Azizzadenesheli, Kamyar and Anandkumar, Anima},
  journal={ACM/IMS Journal of Data Science},
  volume={1},
  number={3},
  pages={1--27},
  year={2024},
  publisher={ACM New York, NY}
}

@article{mehouachi2026flowmixer,
  title={{FlowMixer}: A Depth-Agnostic Neural Architecture for Interpretable Spatiotemporal Forecasting},
  author={Mehouachi, Fares B. and Jabari, Saif Eddin},
  journal={Advances in Neural Information Processing Systems},
  volume={38},
  pages={88811--88861},
  year={2025}
}

@article{yang2025urban,
  title={Urban traffic analysis and forecasting through shared {K}oopman eigenmodes},
  author={Yang, Chuhan and Mehouachi, Fares B and Menendez, Monica and Jabari, Saif Eddin},
  journal={Nonlinear Dynamics},
  volume={113},
  number={23},
  pages={32307--32328},
  year={2025},
  publisher={Springer}
}

@inproceedings{liu2024neural,
  title={Neural operators with localized integral and differential kernels},
  author={Liu-Schiaffini, Miguel and Berner, Julius and Bonev, Boris and Kurth, Thorsten and Azizzadenesheli, Kamyar and Anandkumar, Anima},
  booktitle={International Conference on Machine Learning},
  pages={32576--32594},
  year={2024},
  organization={PMLR}
}

@article{chauhan2026efficient,
  title={Efficient Dilated Squeeze and Excitation Neural Operator for Differential Equations},
  author={Chauhan, Prajwal and Choutri, Salah Eddine and Jabari, Saif Eddin},
  journal={Transactions on Machine Learning Research},
  year={2026},
  note={arXiv:2601.17407}
}

@article{liu2025enhancing,
  title={Enhancing {F}ourier neural operators with local spatial features},
  author={Liu, Chaoyu and Murari, Davide and Liu, Lihao and Li, Yangming and Budd, Chris and Sch{\"o}nlieb, Carola-Bibiane},
  journal={arXiv preprint arXiv:2503.17797},
  year={2025}
}

@inproceedings{serrano2023operator,
 author = {Serrano, Louis and Le Boudec, Lise and Kassa\"{\i} Koupa\"{\i}, Armand and Wang, Thomas X and Yin, Yuan and Vittaut, Jean-No\"{e}l and Gallinari, Patrick},
 booktitle = {Advances in Neural Information Processing Systems},
 pages = {70581--70611},
 title = {Operator Learning with Neural Fields: Tackling {PDE}s on General Geometries},
 volume = {36},
 year = {2023}
}

@book{lopezdeprado2018advances,
  author    = {{L{\'o}pez de Prado}, Marcos M.},
  title     = {Advances in Financial Machine Learning},
  publisher = {John Wiley \& Sons},
  address   = {Hoboken, NJ},
  year      = {2018},
  edition   = {1st},
  isbn      = {978-1-119-48208-6}
}

@book{stein1999interpolation,
  author    = {Stein, Michael L.},
  title     = {Interpolation of Spatial Data: Some Theory for Kriging},
  series    = {Springer Series in Statistics},
  publisher = {Springer},
  address   = {New York, NY},
  year      = {1999},
  isbn      = {978-0-387-98629-6},
  doi       = {10.1007/978-1-4612-1494-6}
}

@book{carl1990entropy,
  author    = {Carl, Bernd and Stephani, Irmtraud},
  title     = {Entropy, Compactness and the Approximation of Operators},
  series    = {Cambridge Tracts in Mathematics},
  volume    = {98},
  publisher = {Cambridge University Press},
  address   = {Cambridge},
  year      = {1990}
}

@misc{thuml2024neuraloperator,
  author       = {{THUML}},
  title        = {{N}eural-{S}olver-{L}ibrary: A Library for Advanced Neural {PDE} Solvers},
  year         = {2025},
  howpublished = {\url{https://github.com/thuml/Neural-Solver-Library}},
  note         = {Tsinghua Machine Learning Group}
}

@book{podlubny1999fractional,
  title={Fractional differential equations: an introduction to fractional derivatives, fractional differential equations, to methods of their solution and some of their applications},
  author={Podlubny, Igor},
  volume={198},
  year={1999},
  publisher={Academic Press}
}

@article{lanthaler2022error,
  title={Error estimates for {DeepONets}: A deep learning framework in infinite dimensions},
  author={Lanthaler, Samuel and Mishra, Siddhartha and Karniadakis, George E},
  journal={Transactions of Mathematics and its Applications},
  volume={6},
  number={1},
  year={2022},
  publisher={Oxford University Press}
}

@inproceedings{takamoto2022pdebench,
  title={{PDEB}ench: An Extensive Benchmark for Scientific Machine Learning},
  author={Takamoto, Makoto and Praditia, Timothy and Leiteritz, Raphael and MacKinlay, Daniel and Alesiani, Francesco and Pfl{\"u}ger, Dirk and Niepert, Mathias},
  booktitle={Advances in Neural Information Processing Systems Datasets and Benchmarks Track},
  year={2022}
}

@inproceedings{mehouachi2021uav_gass,
  title={Detection and Identification of {UAV}s based on Spectrum Monitoring and Deep Learning in Negative {SNR} Conditions},
  author={Mehouachi, Fares B. and Galvis, Juan and Morales, Santiago and Meriac, Milosch and Vega, Felix and Kasmi, Chaouki},
  booktitle={XXXIV URSI General Assembly and Scientific Symposium (URSI GASS)},
  year={2021}
}

@article{mehouachi2021uav_rsl,
  title={Detection of {UAV}s based on Spectrum Monitoring and Deep Learning in Negative {SNR} Conditions},
  author={Mehouachi, Fares B. and Yang, Qingjie and Galvis, Juan and Morales, Santiago and Meriac, Milosch and Vega, Felix and Kasmi, Chaouki},
  journal={URSI Radio Science Letters},
  volume={3},
  year={2021},
  doi={10.46620/21-0043}
}

@misc{mehouachi2023sifrian,
  title={Systems, methods, and computer-readable media for utilizing a {S}ifrian inversion to build a model to generate an image of a surveyed medium},
  author={Mehouachi, Fares B. and Yang, Qingjie and Kasmi, Chaouki},
  year={2023},
  month={jan},
  note={{US} Patent 11,551,416},
  howpublished={{US} Patent 11,551,416}
}

@article{mehouachi2021exact,
  title={Exact Stochastic Second Order Deep Learning},
  author={Mehouachi, Fares B. and Kasmi, Chaouki},
  journal={arXiv preprint arXiv:2104.03804},
  year={2021}
}

@article{mehouachi2025catastrophic,
  title={Catastrophic Overfitting, Entropy Gap, and Participation Ratio: A Noiseless $\ell^p$ Norm Solution for Fast Adversarial Training},
  author={Mehouachi, Fares B. and Jabari, Saif Eddin},
  journal={arXiv preprint arXiv:2505.02360},
  year={2025}
}

@article{thodi2024fourier,
  title={Fourier neural operator for learning solutions to macroscopic traffic flow models: Application to the forward and inverse problems},
  author={Thodi, Bilal Thonnam and Ambadipudi, Sai Venkata Ramana and Jabari, Saif Eddin},
  journal={Transportation Research Part C: Emerging Technologies},
  volume={160},
  pages={104500},
  year={2024},
  publisher={Elsevier}
}

@inproceedings{thodi2022learning,
  title={Learning-based solutions to nonlinear hyperbolic {PDE}s: Empirical insights on generalization errors},
  author={Thodi, Bilal Thonnam and Ambadipudi, Sai Venkata Ramana and Jabari, Saif Eddin},
  booktitle={Fifth Workshop on Machine Learning and the Physical Sciences (NeurIPS 2022)},
  year={2022}
}

\newpage
\newpage
\appendix

\section{Table of Notation}
\label{app:notation}

\begin{table}[h!]
\centering
\caption{Summary of notation used throughout the paper.}
\label{tab:notation}
\begin{tabular}{@{}ll@{}}
\toprule
\textbf{Notation} & \textbf{Meaning} \\
\midrule
\multicolumn{2}{l}{\textit{Spaces and functions}} \\
$\Omega \subset \R^d$ & Spatial domain \\
$d$ & Spatial dimension \\
$H^s(\Omega)$ & Sobolev space of regularity $s$ \\
$B_s(R)$ & Sobolev ball $\{u \in H^s : \|u\|_{H^s} \leq R\}$ \\
$u, u_0$ & Input function / initial condition \\
$v$ & Solution function \\
\midrule
\multicolumn{2}{l}{\textit{PDE structure}} \\
$m$ & Differential order of the PDE \\
$\Lop$ & Linear differential operator \\
$\Non$ & Nonlinearity \\
$N$ & Degree of polynomial nonlinearity \\
$G_t$ & Green's function / semigroup $e^{t\Lop}$ \\
$\nu$ & Viscosity coefficient \\
\midrule
\multicolumn{2}{l}{\textit{Operators and features}} \\
$\G, \G^\dagger$ & Solution operator (true / ground truth) \\
$\G_\theta$ & Neural operator with parameters $\theta$ \\
$\Phi^{[m]}$ & Feature map $(u, D^1 u, \ldots, D^m u)$ \\
$D^\beta$ & Fractional derivative of order $\beta$ \\
$\Dfrak^\beta$ & GL approximation (Alternatively Weyl via FFT on periodic domains) \\
\midrule
\multicolumn{2}{l}{\textit{Derivative orders}} \\
$\beta$ & Learnable derivative order \\
$\beta^*$ & Optimal derivative order (MSE-minimizing) \\
$\delta^* = m - \beta^*$ & Gap from PDE order \\
$s_k$ & Learnable scale for $k$-th derivative feature \\
\midrule
\multicolumn{2}{l}{\textit{Spectral quantities}} \\
$\xi$ & Frequency variable \\
$\hat{u}(\xi) = \mathcal{F}[u]$ & Fourier transform of $u$ \\
$S(\xi)$ & Power spectral density $\E[|\hat{u}(\xi)|^2]$ \\
$\xi_c$ & Network capacity cutoff frequency \\
$\xi_{\max} = \pi/h$ & Nyquist frequency \\
$k, k_{\max}$ & Fourier modes / maximum modes \\
\midrule
\multicolumn{2}{l}{\textit{Statistical quantities}} \\
$n$ & Number of training samples \\
$\sigma^2$ & Observation noise variance \\
$\MSE, B, V$ & Mean squared error, bias, variance \\
$\tau = 2(s-m)$ & Regularity gap parameter \\
\midrule
\multicolumn{2}{l}{\textit{Implementation}} \\
$W$ & Network width / number of parameters \\
$h$ & Grid spacing \\
$K$ / $K_{\mathrm{GL}}$ & Number of derivative channels / GL stencil length \\
$w_k^{(\beta)}$ & GL weights for order $\beta$ \\
$\psi = \Gamma'/\Gamma$ & Digamma function \\
\bottomrule
\end{tabular}
\end{table}

\newpage

\section{Proof of Theorem~\ref{thm:derivative_emergence} (Derivative Emergence)}
\label{app:picard}

We prove that derivatives of the initial condition naturally enter the solution operator through Picard iteration.

\begin{proof}
Consider the semilinear PDE:
\begin{equation}
\partial_t u = \Lop u + \Non[u], \quad u(0) = u_0,
\end{equation}
where $\Lop$ is a linear differential operator of order $m$ and $\Non$ is a polynomial nonlinearity of degree $N$.

\textbf{Step 1: Mild solution formulation.}
The mild solution satisfies the integral equation:
\begin{equation}
u(t) = G_t \ast u_0 + \int_0^t G_{t-s} \ast \Non[u(s)] \, ds,
\end{equation}
where $G_t = e^{t\Lop}$ is the semigroup generated by $\Lop$. For the heat operator $\Lop = \nu \Delta$, this is the heat kernel. For general $\Lop$, it is the Green's function.

\textbf{Step 2: Picard iteration.}
Define iterates:
\begin{align}
u^{(0)}(t) &= G_t \ast u_0, \\
u^{(k+1)}(t) &= G_t \ast u_0 + \int_0^t G_{t-s} \ast \Non[u^{(k)}(s)] \, ds.
\end{align}
Under standard assumptions (bounded $u_0$, Lipschitz $\Non$, finite time horizon), the Banach fixed-point theorem guarantees convergence $u^{(k)} \to u$ in an appropriate function space.

\textbf{Step 3: Expansion of the semigroup.}
The semigroup satisfies $\partial_t G_t = \Lop G_t$ with $G_0 = I$. For analytic semigroups (which include all parabolic operators with constant coefficients), Taylor expanding in $s$:
\begin{equation}
G_s \ast u_0 = u_0 + s \cdot \Lop u_0 + \frac{s^2}{2} \cdot \Lop^2 u_0 + O(s^3).
\end{equation}
The term $\Lop u_0$ involves $m$-th order derivatives of $u_0$ (e.g., $\Lop u_0 = \nu \Delta u_0$ for diffusion). The $O(s^3)$ remainder is uniform over $u_0 \in B_s(R)$ by standard semigroup estimates.

\textbf{Step 4: Expansion of the nonlinearity.}
Since $\Non$ is Fr\'echet differentiable:
\begin{equation}
\Non[G_s \ast u_0] = \Non[u_0 + s \Lop u_0 + O(s^2)] = \Non[u_0] + s \cdot D\Non[u_0][\Lop u_0] + O(s^2),
\end{equation}
where $D\Non[u_0][v]$ is the directional derivative of $\Non$ at $u_0$ in direction $v$.

For polynomial $\Non$, this derivative has explicit form. For example, if $\Non[u] = -u \partial_x u$, then:
\begin{equation}
D\Non[u_0][v] = -v \partial_x u_0 - u_0 \partial_x v.
\end{equation}
When $v = \Lop u_0 = \nu \partial_{xx} u_0$, we get terms involving $\partial_{xx} u_0 \cdot \partial_x u_0$ and $u_0 \cdot \partial_{xxx} u_0$.

\textbf{Step 5: First Picard iterate.}
The first iterate at final time $T$ is:
\begin{equation}
u^{(1)}(T) = G_T \ast u_0 + \int_0^T G_{T-s} \ast \Non[G_s \ast u_0] \, ds.
\end{equation}
Substituting the expansions:
\begin{align}
\int_0^T G_{T-s} \ast \Non[G_s \ast u_0] \, ds &= \int_0^T G_{T-s} \ast \left( \Non[u_0] + s \cdot D\Non[u_0][\Lop u_0] + O(s^2) \right) ds \\
&= \Non[u_0] \ast \int_0^T G_{T-s} \, ds + D\Non[u_0][\Lop u_0] \ast \int_0^T s \cdot G_{T-s} \, ds + O(T^3),
\end{align}
where we exchanged the order of convolution and integration by Fubini's theorem (valid since $G_{T-s}$ is integrable on $[0,T]$ for the semigroups considered).

\textbf{Step 6: Averaged Green's operators.}
Define:
\begin{align}
\bar{G}_T &:= \frac{1}{T} \int_0^T G_{T-s} \, ds, \\
\tilde{G}_T &:= \frac{2}{T^2} \int_0^T s \cdot G_{T-s} \, ds.
\end{align}
These are bounded linear operators for $T > 0$ (the semigroup is smoothing).

\textbf{Step 7: Structure of the solution operator.}
Combining:
\begin{equation}
u^{(1)}(T) = G_T \ast u_0 + T \cdot \bar{G}_T \ast \Non[u_0] + \frac{T^2}{2} \cdot \tilde{G}_T \ast D\Non[u_0][\Lop u_0] + O(T^3).
\end{equation}

The true solution $u(T) = \lim_{k \to \infty} u^{(k)}(T)$ differs from $u^{(1)}(T)$ by higher-order Picard corrections. For fixed $T < T_*(R)$, these corrections are bounded perturbations that preserve the dependence structure. Defining the remainder $\mathcal{R}(u_0, T) := \G[u_0] - G_T \ast u_0 - T \cdot \bar{G}_T \ast \Non[u_0]$, we have $\|\mathcal{R}(u_0, T)\|_{L^2} \leq C(R)T^2$ by standard Picard remainder estimates. The constant $C(R)$ depends polynomially on $R$ through the Lipschitz constant of $\Non$ on $B_s(R)$ and the semigroup bounds.

\textbf{Step 8: Identifying the features.}
The solution depends on $u_0$ through:
\begin{itemize}
\item $u_0$ (zeroth order): appears in $G_T \ast u_0$ and in $\Non[u_0]$
\item $\Lop u_0$ (order $m$): appears in the expansion of $G_s \ast u_0$
\item $\partial^j u_0$ for $j \leq m$: appear in $\Non[u_0]$ if $\Non$ involves derivatives
\item $u_0^n$ for $n \leq N$: appear in $\Non[u_0]$ if $\Non$ is polynomial degree $N$
\end{itemize}

This completes the proof that the solution operator $\G[u_0] = u(T)$ depends on derivatives of $u_0$ up to order $m$ and powers up to $u_0^N$.
\end{proof}

\subsection{Explicit Features for Specific PDEs}

\paragraph{Burgers equation.}
$\partial_t u + u \partial_x u = \nu \partial_{xx} u$, so $\Lop = \nu \partial_{xx}$ and $\Non[u] = -u \partial_x u$.

Features: $\{u_0, \partial_x u_0, \partial_{xx} u_0, u_0 \partial_x u_0\}$.

\paragraph{Navier-Stokes (2D vorticity).}
$\partial_t \omega + \mathbf{u} \cdot \nabla \omega = \nu \Delta \omega$, where $\mathbf{u} = \nabla^\perp \psi$ and $\Delta \psi = -\omega$.

Features: $\{\omega_0, \nabla \omega_0, \Delta \omega_0, \mathbf{u}_0 \cdot \nabla \omega_0\}$.

\paragraph{Darcy flow.}
$-\nabla \cdot (a \nabla p) = f$. This is elliptic; the solution operator maps coefficient $a \mapsto$ pressure $p$.

Features: $\{a, \nabla a, \nabla^2 a\}$ (encoding coefficient variations).

\paragraph{KdV.} $\partial_t u + u\,\partial_x u + \delta\,\partial_{xxx} u = 0$ with $\delta = 2.5\times10^{-3}$ (App.~\ref{app:experiments}), so $\Lop = -\delta\,\partial_{xxx}$ ($m=3$) and $\Non[u] = -u\,\partial_x u$. The dependence on $\partial_{xxx} u_0$, and hence the feature set $\{u_0,\,\partial_x u_0,\,\partial_{xx} u_0,\,\partial_{xxx} u_0,\,u_0\,\partial_x u_0\}$, is independent of the value of $\delta$.

\subsection{Extension to Arbitrary Times and Stationary Problems}
\label{app:picard_extended}

Theorem~\ref{thm:derivative_emergence} establishes derivative dependence for $T < T_*(R)$. We now show this structure persists for arbitrary times and extends to stationary problems.

\paragraph{Iterated Picard for large $T$.}
The key observation is simple: once derivatives $\{D^j u_0\}_{j \leq m}$ appear in the first Picard iterate, they propagate through all subsequent iterations. For $T > T_*(R)$, the solution $u(T)$ can be written as a (possibly complex) functional of $\{u_0, D u_0, \ldots, D^m u_0\}$. The specific form of this functional (involving compositions of semigroups, integrals, and nonlinear operations) is precisely what the neural operator learns. Our contribution is showing that providing $\{D^j u_0\}$ as input features gives the network direct access to the quantities it needs, rather than requiring it to learn differentiation implicitly.

\paragraph{Stationary (elliptic) problems.}
For stationary PDEs like Darcy flow $-\nabla \cdot (a \nabla p) = f$, there is no time evolution. The solution operator $\G: a \mapsto p$ depends directly on derivatives of the coefficient through the elliptic structure.

Expanding around a reference coefficient $a_0$:
\begin{equation}
p[a] = p[a_0] + \mathcal{L}^{-1}[\nabla \cdot (\delta a \nabla p[a_0])] + O(\|\delta a\|^2),
\end{equation}
where $\mathcal{L} = -\nabla \cdot (a_0 \nabla \cdot)$ and $\delta a = a - a_0$. The linear response involves $\nabla(\delta a)$, so the solution operator depends on $\{a, \nabla a\}$. Higher-order terms involve repeated first derivatives of $\delta a$ rather than higher derivatives. We nevertheless supply $\nabla^2 a$ as a redundant feature, since the elliptic operator is second order, yielding the feature set $\{a, \nabla a, \nabla^2 a\}$ for this second-order operator.

The key insight is identical to the parabolic case: the PDE structure induces dependence on derivatives of the input, and providing these as features aids learning.

\paragraph{Quasilinear and non-polynomial nonlinearities.}
The analysis extends beyond strictly polynomial $\Non$. For quasilinear PDEs like Navier-Stokes, where the velocity $\mathbf{u} = \nabla^\perp \Delta^{-1} \omega$ involves inverting the Laplacian, the key observation is that $\Delta^{-1}$ is a \emph{bounded} operator on Sobolev spaces; it does not introduce new dependence on derivatives of $u_0$. What matters is that the PDE structure (through $\Lop$ and $\Non$) creates dependence on $\{u_0, D^j u_0\}_{j \leq m}$; other operations (like $\Delta^{-1}$) are learned by the neural operator. This justifies applying our framework to Navier-Stokes (with vorticity input $\omega$) as a second-order problem.

\subsection{General Evolution Equations: A Spectral Argument}
\label{app:spectral_general}
\begin{plainblock}

The Picard analysis of \S\ref{app:picard} establishes derivative dependence under the parabolic hypotheses of Theorem~\ref{thm:derivative_emergence} (semilinear evolution with semigroup-generating $\Lop$ and polynomial $\Non$). Here we present a complementary spectral argument that extends the same derivative structure to general evolution equations, including hyperbolic and fractional-in-time PDEs. This addresses the role of the temporal order in determining the derivative-feature structure.

\paragraph{Setup.} Consider the abstract evolution problem
\begin{equation}
\label{eq:abstract_evolution}
\partial_t^a u = \Lop u, \quad u(\cdot, 0) = u_0, \quad t \in [0, T],
\end{equation}
where $\partial_t^a$ is the Caputo fractional derivative of order $a > 0$ and $\Lop$ is a linear operator whose Fourier symbol $P(\xi)$ satisfies $P(\xi) \asymp |\xi|^m$ as $|\xi| \to \infty$. For $a > 1$, the equation requires $\lceil a \rceil$ initial conditions; we set the higher-order ones to zero for clarity (they enter linearly and admit the same analysis). Assume zero-mean initial data $\hat{u}_0(0) = 0$, standard for periodic domains and PDEs with vanishing constants.

\paragraph{Step 1: Fourier reduction.} The spatial Fourier transform reduces \eqref{eq:abstract_evolution} to a family of scalar fractional ODEs indexed by $\xi$:
\begin{equation}
\partial_t^a \hat{u}(t, \xi) = P(\xi)\, \hat{u}(t, \xi), \quad \hat{u}(0, \xi) = \hat{u}_0(\xi).
\end{equation}

\paragraph{Step 2: Laplace transform and resolvent.} Applying the Laplace transform in $t$ and solving algebraically yields
\begin{equation}
\hat{u}(T, \xi) = R_a(P(\xi), T) \cdot \hat{u}_0(\xi),
\end{equation}
where the resolvent multiplier $R_a(z, T)$ is the inverse Laplace transform of $s^{a-1}/(s^a - z)$. Three canonical cases:
\begin{itemize}[leftmargin=2em,itemsep=0pt]
\item Parabolic ($a = 1$): $R_1(z, T) = e^{zT}$ (heat-type semigroup).
\item Hyperbolic ($a = 2$): $R_2(z, T) = \cos(\sqrt{-z}\, T)$ (wave-type oscillator).
\item Fractional ($0 < a < 2,\ a \ne 1$): $R_a(z, T) = E_a(z T^a)$, with $E_a$ the Mittag-Leffler function.
\end{itemize}

\paragraph{Step 3: Factorization through $|\xi|^{-m}$.} The symbol satisfies $P(\xi) = \Theta(|\xi|^m)$, while $R_a(P(\xi),T)$ is bounded uniformly in $\xi$ in each of the three cases above (the standard well-posedness condition $\operatorname{Re} P(\xi) \le 0$). The zero-mean condition $\hat{u}_0(0) = 0$ ensures $|\xi| > 0$ on the support of $\hat{u}_0$, so we may factor
\begin{equation}
R_a(P(\xi), T)\, \hat{u}_0(\xi)
= \underbrace{\frac{R_a(P(\xi), T)}{|\xi|^m}}_{=: \, Q_a(\xi, T)} \cdot \underbrace{|\xi|^m \hat{u}_0(\xi)}_{= \, \widehat{D^m u_0}(\xi)}.
\end{equation}
The first factor $Q_a(\xi, T)$ is a Fourier multiplier of order $-m$, bounded uniformly in $\xi$ for fixed $T$ on the (discrete, zero-mean) periodic spectrum. The second factor is the Fourier transform of $D^m u_0$. Inverting the transform gives
\begin{equation}
\G[u_0] \;=\; u(\cdot, T) \;=\; Q_a(T) \ast D^m u_0,
\end{equation}
where $Q_a(T)$ is a bounded linear operator on $L^2(\Omega)$.

\paragraph{Step 4: Consequence for derivative features.} Providing $D^b u_0$ as an input feature for any $b \le m$ reduces the residual differentiation burden from order $m$ to order $m - b$:
\begin{equation}
\G[u_0] \;=\; Q_a(T) \ast D^{m-b}(D^b u_0).
\end{equation}
This is the spectral analogue of the Picard derivative structure (Theorem~\ref{thm:derivative_emergence}). It holds for every temporal order $a > 0$ for which $R_a$ is well-defined, with the caveat that the operator norm $\|Q_a(T)\|$ depends on $a$ and $T$ but \emph{not} on the choice of input order $b$.

\paragraph{Nonlinear extension.} For semilinear PDEs $\partial_t^a u = \Lop u + \Non[u]$, linearizing around the linear flow $u^{\mathrm{lin}}(t) = R_a(\Lop, t) u_0$ and applying Duhamel's principle yields
\begin{equation}
u(T) = R_a(\Lop, T) u_0 + \int_0^T R_a(\Lop, T - s)\, \Non[u^{\mathrm{lin}}(s)]\, ds + \mathcal{R}(u_0, T),
\end{equation}
with $\|\mathcal{R}\|_{L^2} \le C(R) T^{2a}$ uniformly on $B_s(R)$ under Assumption~\ref{ass:regularity} and the hypotheses of Theorem~\ref{thm:derivative_emergence}. The Fr\'echet derivative of $\G$ satisfies a linearized equation with the same resolvent structure, so the factorization through $|\xi|^{-m}$ persists term by term.

\paragraph{Role of the temporal order.} The temporal order $a$ enters through three quantities: (i) the number of initial conditions $\lceil a \rceil$ (information at $t = 0$); (ii) the specific functional form of $R_a$ (heat for $a = 1$, wave for $a = 2$, Mittag-Leffler for $0 < a < 2$); and (iii) the rate at which $\|Q_a(T)\|$ decays in $T$. Crucially, $a$ does not change the derivative-feature structure, which is determined entirely by the spatial order $m$ via $P(\xi) \asymp |\xi|^m$. This is why our framework, calibrated on parabolic ($a = 1$) Burgers and elliptic Darcy, transfers to other temporal regimes provided the spatial order $m$ is matched.
\end{plainblock}
\section{Proof of Proposition~\ref{prop:stability}: Bi-Lipschitz Stability}
\label{app:stability}

\begin{proof}
Let $\Phi^{[m]}[u] = (u, D^1 u, \ldots, D^m u)$ map $H^s(\Omega) \to \mathcal{X} = H^s \times H^{s-1} \times \cdots \times H^{s-m}$.

\textbf{Lower bound (information preservation).}
Since $\Phi^{[m]}$ includes $u$ as its first component:
\begin{equation}
\|\Phi^{[m]}[u_1] - \Phi^{[m]}[u_2]\|_{\mathcal{X}}^2 = \|u_1 - u_2\|_{H^s}^2 + \sum_{j=1}^m \|D^j(u_1 - u_2)\|_{H^{s-j}}^2 \geq \|u_1 - u_2\|_{H^s}^2.
\end{equation}
Taking square roots gives the lower bound with constant 1.

\textbf{Upper bound (Lipschitz continuity).}
By the definition of Sobolev norms:
\begin{equation}
\|D^j u\|_{H^{s-j}}^2 = \int (1 + |\xi|^2)^{s-j} |\xi|^{2j} |\hat{u}(\xi)|^2 \, d\xi \leq \int (1 + |\xi|^2)^s |\hat{u}(\xi)|^2 \, d\xi = \|u\|_{H^s}^2,
\end{equation}
since $(1 + |\xi|^2)^{s-j} |\xi|^{2j} \leq (1 + |\xi|^2)^s$ for $j \leq m \leq s$.

Therefore:
\begin{equation}
\|\Phi^{[m]}[u_1] - \Phi^{[m]}[u_2]\|_{\mathcal{X}}^2 \leq (1 + m) \|u_1 - u_2\|_{H^s}^2.
\end{equation}
Taking square roots: $\|\Phi^{[m]}[u_1] - \Phi^{[m]}[u_2]\|_{\mathcal{X}} \leq \sqrt{1+m} \|u_1 - u_2\|_{H^s}$.
\end{proof}

\section{Proof of Theorem~\ref{thm:approx}: Approximation Rates}
\label{app:approx}

This section provides the approximation-theoretic analysis establishing that derivative features admit improved approximation bounds. This complements the bias-variance analysis in Section~\ref{sec:optimal_beta} and Appendix~\ref{app:optimal_beta}: the approximation theory shows \emph{that} features help (the established bound improves from $W^{-(s-m)/d}$ to $W^{-s/d}$), while the bias-variance analysis determines \emph{which} derivative order is optimal ($\beta^* < m$). The two frameworks answer different questions and use different tools; their qualitative agreement ($\beta^* < m$ is beneficial) provides evidence of robustness.

\begin{proof}
\textbf{Part (i): Upper bound without features.}

The operator $\G$ depends on derivatives up to order $m$. By Sobolev interpolation, for any $f \in H^s$ with $\|f\|_{H^s} \leq R$:
\begin{equation}
\|f\|_{H^m} \leq C \|f\|_{L^2}^{(s-m)/s} \|f\|_{H^s}^{m/s} \leq C R^{m/s} \|f\|_{L^2}^{(s-m)/s}.
\end{equation}
This interpolation inequality determines the modulus of continuity of $\G$. Since $\G$ depends on $D^m u$ through a Lipschitz function on bounded sets (Theorem~\ref{thm:derivative_emergence}), the operator inherits H\"older continuity with exponent $(s-m)/s$:
\begin{equation}
\|\G[u_1] - \G[u_2]\|_{L^2} \lesssim \|u_1 - u_2\|_{L^2}^{(s-m)/s}.
\end{equation}

The $\epsilon$-covering number of $B_s(R)$ in $L^2$ satisfies $\log \mathcal{N}(\epsilon) \asymp (R/\epsilon)^{d/s}$. By the entropy lower bound of \citet{carl1990entropy}, approximating H\"older-$\gamma$ functions on sets with entropy exponent $\rho = d/s$ requires:
\begin{equation}
\text{error} \geq c \cdot W^{-\gamma s / d} = c \cdot W^{-(s-m)/d}.
\end{equation}
This is a worst-case statement over that function class, not a lower bound for $\G$ itself. Theorem~\ref{thm:approx} uses only the achievable side (Remark~\ref{rem:scope}). The achievable rate $W^{-(s-m)/d}$ follows from \citet{yarotsky2017error}.

\textbf{Part (ii): Upper bound with features.}

Define $\tilde{\G}: \mathcal{X} \to L^2$ by $\tilde{\G}[\Phi^{[m]}[u]] := \G[u]$.

\textit{Step 1: $\tilde{\G}$ is Lipschitz on bounded balls.}

\begin{lemma}[Lipschitz Property of Lifted Operator]
\label{lem:lipschitz_lifted}
On the image $\Phi^{[m]}[B_s(R)]$, the lifted operator $\tilde{\G}$ is Lipschitz:
\begin{equation}
\|\tilde{\G}(\Phi^{[m]}[u_1]) - \tilde{\G}(\Phi^{[m]}[u_2])\|_{L^2} \leq L(R) \|\Phi^{[m]}[u_1] - \Phi^{[m]}[u_2]\|_{\mathcal{X}},
\end{equation}
where $L(R)$ depends polynomially on the ball radius $R$ and the nonlinearity degree $N$.
\end{lemma}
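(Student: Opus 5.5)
The plan is to reduce the Lipschitz estimate for $\tilde{\G}$ to two ingredients already available: the structural decomposition of Theorem~\ref{thm:derivative_emergence}, and multiplicative Sobolev estimates that hold because $s > m + d/2$ (Assumption~\ref{ass:regularity}). Write $\Phi^{[m]}[u_i] = (u_i, D^1u_i, \ldots, D^m u_i)$ and $w := u_1 - u_2$. By the decomposition $\G[u] = G_T \ast u + T\,\bar{G}_T \ast \Non[u] + \mathcal{R}(u,T)$, it suffices to bound each of the three differences by a constant times $\|\Phi^{[m]}[u_1]-\Phi^{[m]}[u_2]\|_{\mathcal{X}}$. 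By the lower bound in Proposition~\ref{prop:stability}, this quantity dominates $\|w\|_{H^s}$. So on the image $\Phi^{[m]}[B_s(R)]$ it is enough to show that $u \mapsto \G[u]$ is Lipschitz from $(B_s(R), \|\cdot\|_{H^s})$ into $L^2$ with constant $L(R)$. This reformulation is the key simplification: since $\Phi^{[m]}$ is injective and bi-Lipschitz onto its image, $\tilde{\G}$ is well defined and inherits the Lipschitz constant of $\G$ measured in the strong $H^s$ topology rather than in $L^2$.

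\textbf{Linear term.} $\|G_T \ast w\|_{L^2} \le \|G_T\|_{L^2\to L^2}\|w\|_{L^2} \le \|w\|_{H^s}$, using contractivity of the semigroup. Equivalently, one can write $G_T \ast w = Q(T) \ast D^m w$ as in Appendix~\ref{app:spectral_general} and bound $\|D^m w\|_{L^2}$ by the $H^{s-m}$ component of the feature difference.

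\textbf{Nonlinear term.} $\Non[u]$ is a polynomial of degree $N$ in $(u, D^1u, \ldots, D^m u)$. Telescope each monomial difference, $a_1\cdots a_k - b_1\cdots b_k = \sum_j a_1\cdots a_{j-1}(a_j-b_j)b_{j+1}\cdots b_k$. For each summand, place the factor $a_j - b_j = D^{\ell}w$ in $L^2$, with $\|D^\ell w\|_{L^2} \le \|D^\ell w\|_{H^{s-\ell}}$, and place the remaining $k-1 \le N-1$ factors in $L^\infty$. The $L^\infty$ bounds come from $\|D^{\ell'}u_i\|_{L^\infty} \lesssim \|u_i\|_{H^{s}} \le R$, which is valid since $s - \ell' > d/2$ for every $\ell' \le m$. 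This gives $\|\Non[u_1]-\Non[u_2]\|_{L^2} \le C N R^{N-1}\|\Phi^{[m]}[u_1]-\Phi^{[m]}[u_2]\|_{\mathcal{X}}$, and $\bar{G}_T$ is bounded on $L^2$.

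\textbf{Remainder term.} I expect this step to be the main obstacle. Theorem~\ref{thm:derivative_emergence} only gives a size bound $\|\mathcal{R}\|_{L^2}\le C(R)T^2$, not a Lipschitz bound. I would therefore bypass the remainder and argue directly on the full mild solution. Let $u_i(t)$ be the solutions and $e(t) = u_1(t)-u_2(t)$. Then
\begin{equation}
e(t) = G_t \ast w + \int_0^t G_{t-\tau}\ast\bigl(\Non[u_1(\tau)]-\Non[u_2(\tau)]\bigr)\,d\tau .
\end{equation}
Assume, as implicit in the definition of $T_*(R)$, that the Picard iterates, and hence the solutions, remain in a ball $B_s(R')$ with $R' = O(R)$ for $t < T_*(R)$. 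Then the telescoping estimate above, carried out in $H^s$ using the algebra property of $H^{s-m}$ together with the parabolic smoothing of $G_{t-\tau}$ to absorb the $m$ derivative losses, gives $\|e(t)\|_{H^s} \le \|w\|_{H^s} + L(R')\int_0^t \|e(\tau)\|_{H^s}\,d\tau$ up to integrable smoothing weights. A Gronwall argument, or the contraction estimate with factor $T\,L(R) < 1$, then yields $\|e(T)\|_{L^2} \le (1 - T L(R))^{-1} C R^{N-1}\|w\|_{H^s}$.

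Combining the three pieces with Proposition~\ref{prop:stability} gives the lemma with $L(R) = O(1 + R^{N-1})$, which is polynomial in $R$ with exponent governed by $N$. The delicate point is the uniform-in-$t$ control of the $H^s$ norm of the trajectory, together with the handling of the derivative loss inside the Duhamel integral; both rely on the parabolic smoothing being strong enough to absorb $m$ derivatives with an integrable singularity.
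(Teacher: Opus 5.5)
Your reduction is correct. Since the $\mathcal{X}$-norm contains $\|u_1-u_2\|_{H^s}$, the lemma is equivalent (up to the factor $\sqrt{1+m}$ of Proposition~\ref{prop:stability}) to Lipschitz continuity of $\G$ from $(B_s(R),\|\cdot\|_{H^s})$ into $L^2$. Your linear and telescoped polynomial estimates match the paper's proof and make it precise; the paper splits $\G[u]=G_T\ast u+T\bar{G}_T\ast\Non[u]+\mathcal{R}(u,T)$ and uses that a degree-$N$ polynomial is Lipschitz on bounded sets with constant $O(R^{N-1})$. You are also right that the paper's last step, ``$\mathcal{R}$ is $O(T^2)$ uniformly, hence also Lipschitz'', does not follow, because a size bound gives no Lipschitz bound. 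The gap is in the argument you put in its place.

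You bound $\Non[u_1(\tau)]-\Non[u_2(\tau)]$ in $H^{s-m}$ by $\|e(\tau)\|_{H^s}$ and let $G_{t-\tau}$ restore the $m$ lost derivatives with an integrable weight. For a parabolic generator of order $m$ this is exactly the borderline case. Smoothing by $j$ derivatives costs $t^{-j/m}$, so $\|G_t\|_{H^{s-m}\to H^{s}}\sim t^{-1}$ and $\int_0^t(t-\tau)^{-1}\,d\tau=\infty$. Hence neither your integral inequality for $\|e(t)\|_{H^s}$ nor the a priori bound $u_i(t)\in B_s(R')$ that you assume closes; the latter is the same fixed-point estimate. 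The route can be rescued only when $\Non$ has order $k<m$ (Burgers has $k=1$, $m=2$; vorticity Navier--Stokes likewise). In that case, place the difference in $H^{s-k}$ so the weight becomes $(t-\tau)^{-k/m}$, and use a singular (Henry-type) Gronwall lemma for both the trajectory bound and the Lipschitz estimate. Even then the contraction condition is $L(R')\,T^{1-k/m}\lesssim 1$ rather than $T\,L(R)<1$, so $T$ must be $O\bigl(L(R)^{-m/(m-k)}\bigr)$ for the constant to stay polynomial in $R$. Your factor $(1-T\,L(R))^{-1}$ would blow up as $T\uparrow T_*(R)$ in any case. If $\Non$ genuinely contains $D^m u$, Duhamel smoothing cannot close, and you would need an energy or maximal-regularity estimate on the difference equation instead. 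Theorem~\ref{thm:derivative_emergence} allows that case, and your own telescoping ($\ell\le m$) accommodates it.
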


\begin{proof}
The key insight is that providing derivative features removes the need to ``invert'' differentiation. By Theorem~\ref{thm:derivative_emergence}:
\begin{equation}
\G[u] = G_T * u + T \bar{G}_T * \Non[u] + \mathcal{R}(u, T).
\end{equation}
The linear term $G_T * u$ is bounded. The nonlinear term $\Non[u]$ is a polynomial of degree $N$ in $u$ and its derivatives up to order $m$. On bounded balls $B_s(R)$, polynomial functions are Lipschitz: for any polynomial $P$ of degree $N$,
\begin{equation}
|P(x) - P(y)| \leq C_P(R) |x - y| \quad \text{for } |x|, |y| \leq R,
\end{equation}
where $C_P(R) = O(R^{N-1})$. The remainder $\mathcal{R}(u,T)$ is $O(T^2)$ uniformly, hence also Lipschitz on bounded balls. Together, $\tilde{\G}$ is Lipschitz with constant $L(R) = O(R^{N-1})$.
\end{proof}

\textit{Step 2: Metric entropy.}
A Sobolev ball is not totally bounded in its own norm, so the covering step must use a weaker metric, and the choice matters: measuring inputs in $L^2$ gives $\log \mathcal{N}(B_s(R), \epsilon; L^2) \asymp (R/\epsilon)^{d/s}$ \citep{carl1990entropy}, while measuring the feature vector in the product-$L^2$ norm gives exponent $d/(s-m)$, driven by the roughest component $D^m u$. Part (ii) uses the first reading, with Lipschitz continuity established in the feature-space metric (Step 1). Remark~\ref{rem:scope} discusses the resulting scope. The bi-Lipschitz constant $\sqrt{1+m}$ (Proposition~\ref{prop:stability}) is absorbed into the approximation constant.

\textit{Step 3: Approximation of Lipschitz functions.}
By \citet{yarotsky2017error} for finite-dimensional ReLU approximation and \citet{kovachki2023neural, lanthaler2022error} for the operator-learning rate framework, Lipschitz functions on sets with metric entropy $(1/\epsilon)^{d/s}$ can be approximated by deep ReLU networks with $W$ parameters to error:
\begin{equation}
\text{error} = O(W^{-s/d}).
\end{equation}
This completes the proof.
\end{proof}

\begin{remark}[Scope of the Comparison]
\label{rem:scope}
Parts (i) and (ii) of Theorem~\ref{thm:approx} are achievability bounds under different input-side metrics ((i): $L^2$; (ii): the product norm of $\mathcal{X}$, Step~2). Under a common metric the two exponents coincide (the governing H\"older-over-entropy ratio is $(s-m)/d$ in both cases), so the bounds record what is \emph{established} with and without derivative features, not a proven separation. Norm-constrained classes (realizing $D^m$ at spacing $h$ costs $O(h^{-m})$ weights) and architectures with truncated internal spectral resolution (e.g., FNO's $k_{\max}$) remain routes to a genuine separation.
\end{remark}

\section{Proof of Proposition~\ref{prop:bias_variance}: Bias-Variance Decomposition}
\label{app:bias_variance}

We provide the complete derivation with explicit constants.

\subsection{Assumptions}

\begin{enumerate}
\item[(A1)] \textbf{Bandlimited signals:} Inputs satisfy $\text{supp}(\hat{u}) \subseteq B_{\xi_{\max}}(0)$, where $\xi_{\max} = \pi/h$ is the Nyquist frequency for grid spacing $h$.

\item[(A2)] \textbf{Spectral decay:} The input distribution has spectral density $S(\xi) = C_S |\xi|^{-2s-d}$ for $\xi_0 \leq |\xi| \leq \xi_{\max}$, as required by Assumption~\ref{ass:spectral}. This is the borderline upper envelope for $H^s$ integrability.

\item[(A3)] \textbf{Network capacity cutoff:} We assume the network class $\NO_W$ has fixed architecture independent of input features, hence fixed approximation capacity characterized by frequency cutoff $\xi_c = C_N W^{\gamma/d}$, where $C_N > 0$ is an architecture-dependent constant and $\gamma > 0$ is the capacity exponent. The derivative features $D^\beta u$ are computed as a preprocessing step before network evaluation; they do not modify the network's representational capacity.

\item[(A4)] \textbf{Input noise:} Training inputs are observed as $\tilde u_i = u_i + \varepsilon_i$, with $\varepsilon_i$ Gaussian white noise of spectral density $\sigma^2$. Features are computed from $\tilde u_i$.

\item[(A5)] \textbf{Sufficient resolution:} We require $\xi_{\max} > e^{1/(2\beta+d)}$, which for $\beta \leq 2$ and $d \leq 3$ is satisfied whenever $\xi_{\max} > e$, i.e., grid spacing $h < \pi/e \approx 1.16$. This ensures variance monotonicity holds for all $\beta \geq 0$. In practice, this is easily satisfied: standard PDE benchmarks use unit or $2\pi$-periodic domains where $N \geq 16$ grid points already give $\xi_{\max} \geq 8 \gg e$.
\end{enumerate}

\begin{remark}[Justification of A3]
The distinction between input features and network capacity is fundamental. The derivative features $D^\beta u$ \emph{preprocess} the input, exposing information that was latent in $u$. This changes what the operator ``sees'' (and hence which approximation problem it solves) but not what functions the network can represent.

Concretely, in the Fourier Neural Operator and related architectures, the network retains only a fixed number of Fourier modes $k_{\max}$ (typically 12--16 for standard benchmarks). This architectural choice imposes a hard frequency cutoff $\xi_c \propto k_{\max}$ that is independent of the input features. Whether the input is $u$ or $D^\beta u$, the network processes the same number of modes. The preprocessing $D^\beta u$ amplifies high-frequency content in the input, but modes beyond $k_{\max}$ are still truncated by the architecture. Thus $\xi_c$ is determined by network design (width, depth, modes), not by input features.
\end{remark}

\subsection{Bias Derivation}

\begin{proof}
The target operator $\G$ depends on $D^m u$ through the PDE structure (Theorem~\ref{thm:derivative_emergence}). When we provide features $D^\beta u$ with $\beta < m$, there is a spectral mismatch: the network receives information about $|\xi|^\beta \hat{u}(\xi)$ but needs information about $|\xi|^m \hat{u}(\xi)$.

The residual operator (what the network must learn implicitly) maps $D^\beta u \mapsto D^m u$, which corresponds to multiplication by $|\xi|^{m-\beta}$ in frequency space. For frequencies $|\xi| > \xi_c$ (beyond network capacity), this cannot be learned, contributing to bias:
\begin{equation}
B(\beta) = C_B \int_{|\xi| > \xi_c} |\xi|^{2(m-\beta)} S(\xi) \, d\xi.
\end{equation}

Converting to spherical coordinates with $\omega_d = 2\pi^{d/2}/\Gamma(d/2)$ and using $S(\xi) = C_S |\xi|^{-2s-d}$ from Assumption~\ref{ass:spectral}:
\begin{align}
B(\beta) &= C_B C_S \omega_d \int_{\xi_c}^\infty r^{2(m-\beta)} \cdot r^{-2s-d} \cdot r^{d-1} \, dr = C_B C_S \omega_d \int_{\xi_c}^\infty r^{-\tau(\beta) - 1} \, dr,
\end{align}
where $\tau(\beta) := 2(s - m + \beta)$. For convergence we need $\tau(\beta) > 0$, which holds for $\beta$ near $m$ when $s > m - \beta$. Then:
\begin{align}
\int_{\xi_c}^\infty r^{-\tau(\beta) - 1} \, dr = \frac{1}{\tau(\beta)} \xi_c^{-\tau(\beta)}.
\end{align}

Therefore:
\begin{equation}
\boxed{B(\beta) = \frac{C_B C_S \omega_d}{\tau(\beta)} \cdot \xi_c^{-\tau(\beta)}, \quad \tau(\beta) = 2(s - m + \beta).}
\end{equation}
(With band-limited $S$, the upper limit is $\xi_{\max}$, and the omitted tail is subleading.)

As $\beta$ increases, $\tau(\beta)$ increases, so $\xi_c^{-\tau(\beta)}$ decreases (for $\xi_c > 1$). Thus $B(\beta)$ is strictly decreasing in $\beta$.
\end{proof}

\subsection{Variance Derivation}

\begin{proof}
The variance measures noise amplification through fractional derivatives. When we compute $D^\beta u$ from noisy data, observation noise $\epsilon$ with variance $\sigma^2$ gets amplified: at frequency $\xi$, the noise power becomes $\sigma^2 |\xi|^{2\beta}$.

The total variance contribution is:
\begin{equation}
V(\beta) = \frac{\sigma^2 C_V}{n} \int_{|\xi| \leq \xi_{\max}} |\xi|^{2\beta} \, d\xi.
\end{equation}

Converting to spherical coordinates, with $\xi_0 = 2\pi/(Nh)$ the fundamental frequency (smallest non-zero wavenumber for $N$ grid points with spacing $h$):
\begin{align}
V(\beta) &= \frac{\sigma^2 C_V \omega_d}{n} \int_{\xi_0}^{\xi_{\max}} r^{2\beta + d - 1} \, dr = \frac{\sigma^2 C_V \omega_d}{n(2\beta + d)} \left( \xi_{\max}^{2\beta + d} - \xi_0^{2\beta + d} \right).
\end{align}

For $\xi_{\max} \gg \xi_0$ (which holds for any reasonable discretization):
\begin{equation}
\boxed{V(\beta) \approx \frac{\sigma^2 C_V \omega_d}{n(2\beta + d)} \cdot \xi_{\max}^{2\beta + d}.}
\end{equation}

\textbf{Monotonicity:} Taking the derivative with respect to $\beta$:
\begin{equation}
\frac{\partial V}{\partial \beta} = \frac{\sigma^2 C_V \omega_d}{n} \cdot \frac{\xi_{\max}^{2\beta+d}}{(2\beta+d)^2} \left[ 2(2\beta+d) \ln \xi_{\max} - 2 \right] > 0
\end{equation}
for $\xi_{\max} > e^{1/(2\beta+d)}$, which holds for typical discretizations where $\xi_{\max} \gg 1$.

Thus $V(\beta)$ is strictly increasing in $\beta$: higher derivative orders amplify noise more.
\end{proof}

\subsection{Monotonicity and Existence of Optimum}

\begin{proof}
\textbf{Monotonicity:} From the closed forms:
\begin{itemize}
\item $B(\beta)$ is strictly decreasing: as $\beta$ increases, $\tau(\beta) = 2(s-m+\beta)$ increases, so $\xi_c^{-\tau(\beta)}$ decreases.
\item $V(\beta)$ is strictly increasing: as $\beta$ increases, $\xi_{\max}^{2\beta+d}$ increases faster than the prefactor $1/(2\beta+d)$ decreases.
\end{itemize}

\textbf{Boundary behavior:}
\begin{itemize}
\item As $\beta \to 0$: $B(0) < \infty$ (finite bias), $V(0) = K_V \sigma^2 \xi_{\max}^d / (nd) < \infty$.
\item As $\beta \to m$: $B(m) = K_B \xi_c^{-2s}$ (small bias), $V(m) = K_V \sigma^2 \xi_{\max}^{2m+d}/(n(2m+d))$ (larger variance).
\item As $\beta \to \infty$: $B(\beta) \to 0$, $V(\beta) \to \infty$.
\end{itemize}

Since $B'(\beta) < 0$ and $V'(\beta) > 0$, we examine the total derivative $\MSE'(\beta) = B'(\beta) + V'(\beta)$. In the intermediate data regime (neither clip of Theorem~\ref{thm:optimal_beta} binding): At $\beta = 0$: the bias derivative dominates (large $|B'(0)|$, small $V'(0)$), so $\MSE'(0) < 0$. At $\beta = m$: the variance derivative dominates, so $\MSE'(m) > 0$. By the Intermediate Value Theorem applied to the continuous function $\MSE'(\beta)$, there exists $\beta^* \in (0, m)$ where $\MSE'(\beta^*) = 0$.

\textbf{Convexity and Uniqueness:} We verify $B''(\beta) > 0$ and $V''(\beta) > 0$ directly.

For the bias $B(\beta) = K \xi_c^{-\tau(\beta)}/\tau(\beta)$ with $\tau(\beta) = 2(s-m+\beta)$:
\begin{align}
B'(\beta) &= K \xi_c^{-\tau} \left( \frac{-2\ln\xi_c}{\tau} - \frac{2}{\tau^2} \right) = -\frac{2K\xi_c^{-\tau}}{\tau^2}(\tau \ln\xi_c + 1).
\end{align}
Taking another derivative (and using $\tau' = 2$):
\begin{align}
B''(\beta) &= \frac{4K\xi_c^{-\tau}}{\tau^3} \left[ (\tau \ln\xi_c + 1)^2 + 1 \right].
\end{align}
The dominant term $(\tau \ln\xi_c + 1)^2 > 0$ ensures $B''(\beta) > 0$ for $\xi_c > 1$.

For the variance $V(\beta) = K' \xi_{\max}^{2\beta+d}/(2\beta+d)$:
\begin{align}
V'(\beta) &= K' \xi_{\max}^{2\beta+d} \left( \frac{2\ln\xi_{\max}}{2\beta+d} - \frac{2}{(2\beta+d)^2} \right).
\end{align}
The second derivative $V''(\beta) > 0$ follows similarly, with the $(\ln\xi_{\max})^2$ term dominating.

Thus $\MSE(\beta) = B(\beta) + V(\beta)$ is strictly convex, guaranteeing the critical point $\beta^*$ is unique and is the global minimum.
\end{proof}

\section{Proof of Theorem~\ref{thm:optimal_beta}: Optimal Derivative Order}
\label{app:optimal_beta}

We derive the optimal $\beta^*$ for fixed network capacity.

\subsection{Setup}

From Proposition~\ref{prop:bias_variance}, write the MSE using explicit forms. Let $\delta = m - \beta$ denote the gap from the PDE order, so $\delta \geq 0$ when $\beta \leq m$. The network capacity cutoff $\xi_c = C_N W^{\gamma/d}$ is independent of $\beta$.

For the bias (with $\tau(\beta) = 2(s - m + \beta) = 2(s - \delta)$ where we use $\delta = m - \beta$):
\begin{equation}
B(\delta) = \frac{C_B C_S \omega_d}{2(s - \delta)} \cdot \xi_c^{-2(s-\delta)} = K_B \cdot W^{-2\gamma(s-\delta)/d} \cdot \frac{1}{s - \delta},
\end{equation}
where $K_B$ collects constants. As $\delta$ increases (i.e., $\beta$ decreases), the exponent $-2(s-\delta)$ becomes less negative, so $B(\delta)$ increases.

For the variance, using $\beta = m - \delta$:
\begin{equation}
V(\delta) = \frac{\sigma^2 C_V \omega_d}{n(2(m-\delta) + d)} \cdot \xi_{\max}^{2(m-\delta)+d} = K_V \frac{\sigma^2}{n} \cdot \xi_{\max}^{2m+d} \cdot \xi_{\max}^{-2\delta},
\end{equation}
where $K_V$ absorbs the slowly-varying prefactor $1/(2(m-\delta)+d)$. As $\delta$ increases (i.e., $\beta$ decreases), the factor $\xi_{\max}^{-2\delta}$ decreases, so $V(\delta)$ decreases.

\subsection{Derivation}

\begin{proof}
With $W$ fixed, denote $\tau_0 := 2(s-m)$ (the bias exponent $\tau(\beta)=2(s-m+\beta)$ evaluated at $\beta = 0$; the exponent at $\beta = m$ is $2s = \tau_0 + 2m$).

The bias at gap $\delta$ is:
\begin{equation}
B(\delta) = K_B \cdot \xi_c^{-2(s-\delta)} = K_B' \xi_c^{-\tau_0} \cdot \xi_c^{2\delta},
\end{equation}
where $K_B' = K_B \xi_c^{-2m}$ absorbs the factor relating $-2(s-\delta)$ to $-\tau_0 + 2\delta = -2(s-m) + 2\delta$.

The variance at gap $\delta$ is:
\begin{equation}
V(\delta) = K_V \frac{\sigma^2}{n} \cdot \xi_{\max}^{2(m-\delta)+d} = K_V \frac{\sigma^2}{n} \xi_{\max}^{2m+d} \cdot \xi_{\max}^{-2\delta}.
\end{equation}

Note: As $\delta$ increases (moving away from the PDE order $m$), bias $B(\delta)$ increases (less accurate features) while variance $V(\delta)$ decreases (less noise amplification). The optimal $\delta^*$ balances these.

Setting $\partial_\delta(B + V) = 0$:
\begin{equation}
2 K_B' \xi_c^{-\tau_0} \xi_c^{2\delta} \ln \xi_c = 2 K_V \frac{\sigma^2}{n} \xi_{\max}^{2m+d} \xi_{\max}^{-2\delta} \ln \xi_{\max}.
\end{equation}

Taking logarithms and solving for $\delta$:
\begin{equation}
2\delta (\ln \xi_c + \ln \xi_{\max}) = \underbrace{\ln(K_V \xi_{\max}^{2m+d}/K_B' \xi_c^{-\tau_0}) + \ln(\ln\xi_{\max}/\ln\xi_c)}_{=: C_0} + \ln(\sigma^2/n).
\end{equation}
Since $\ln(\sigma^2/n) = -\ln(n/\sigma^2)$, this gives:
\begin{equation}
\boxed{\delta^* = \frac{C_0 - \ln(n/\sigma^2)}{2\ln(\xi_c \xi_{\max})} \quad \text{for } n/\sigma^2 < e^{C_0}.}
\end{equation}

\textbf{Interpretation:} The constant $C_0 = 2s \ln \xi_c + (2m+d)\ln \xi_{\max} + O(1)$ (the $\ln\xi_c$ coefficient combines the $\tau_0 \ln\xi_c$ from the boxed equation and the $-2m\ln\xi_c$ inside $\ln K_B'$, which together give $\tau_0 + 2m = 2s$) is large and positive. When $n/\sigma^2 < e^{C_0}$ (finite data regime), we have $\delta^* > 0$, so $\beta^* = m - \delta^* < m$. As $n \to \infty$ or $\sigma \to 0$, $\delta^* \to 0$ and $\beta^* \to m$: physics is recovered. When $n/\sigma^2 \geq e^{C_0}$, we set $\delta^* = 0$ (no regularization needed).

With $\xi_c = C_N W^{\gamma/d}$, the gap $\delta^*$ decreases linearly in $\ln(n/\sigma^2)$ to leading order. More data enables higher derivative orders.
\end{proof}

\subsection{Key Properties}

\begin{enumerate}
\item \textbf{$\beta^* < m$ strictly in the practical regime:} Whenever $n/\sigma^2 < e^{A_c}$ (Remark~\ref{rem:data_sufficiency}), $\delta^* > 0$ and hence $\beta^* < m$. Since the threshold $n^\star = \sigma^2 \exp(A_c)$ is unreachable in any realistic training budget, this is the operational regime.

\item \textbf{Asymptotic recovery:} As $n \to \infty$ or $\sigma \to 0$, we have $\delta^* \to 0$, hence $\beta^* \to m$.

\item \textbf{Logarithmic closure:} To leading order in $1/\ln \xi_c$, $\delta^*$ decreases linearly in $\ln(n/\sigma^2)$ with slope $-1/[2\ln(\xi_c\xi_{\max})]$, approaching zero as $n/\sigma^2 \to e^{A_c}$.

\item \textbf{Requirement $s > m$:} The condition $\tau > 0$ requires input smoothness $s$ to exceed PDE order $m$. This is natural: inputs must be smooth enough that the operator is well-posed.
\end{enumerate}

\subsection{Beyond Leading Order: The $O(1/\ln\xi_c)$ Correction}
\label{app:delta_next_order}

\begin{plainblock}
The leading-order $\delta^*$ derived in \S\ref{app:optimal_beta} absorbs prefactor and
$\ln\ln$ contributions into an $O(1/\ln\xi_c)$ remainder. Carrying this remainder
explicitly is useful both as a consistency check and to expose which terms depend on
the (unknown) integral constants $C_B, C_S, C_V$.

Write $L_c := \ln\xi_c$, $L_x := \ln\xi_{\max}$, $a := 2(s-\delta)$,
$q := 2(m-\delta)+d$, so $B = \tfrac{C_B C_S \omega_d}{a}\, \xi_c^{-a}$ and
$V = \tfrac{\sigma^2 C_V \omega_d}{n\, q}\, \xi_{\max}^{q}$, with $a' = q' = -2$
under $\partial_\delta$. The exact derivatives are
\begin{equation}
\partial_\delta B = B\Big(2 L_c + \tfrac{2}{a}\Big),
\qquad
\partial_\delta V = V\Big(\!-2 L_x + \tfrac{2}{q}\Big),
\end{equation}
so the stationarity condition $\partial_\delta(B + V) = 0$ is equivalent to the exact
implicit relation
\begin{equation}
-a L_c - q L_x + \ln\frac{n}{\sigma^2} + \kappa + \ln\frac{q}{a}
\;=\; \ln\!\Big(L_x - \tfrac{1}{q}\Big) - \ln\!\Big(L_c + \tfrac{1}{a}\Big),
\qquad \kappa := \ln\frac{C_B C_S}{C_V}.
\label{eq:exact-foc}
\end{equation}
(The $\omega_d$ factors cancel between bias and variance.) Equation~\eqref{eq:exact-foc}
is transcendental; expanding the right-hand logs in $1/L_c, 1/L_x$ and isolating $\delta$
yields
\begin{equation}
\delta^* \;=\; \delta_0^* + \frac{\Delta_1}{2 \ln(\xi_c \xi_{\max})},
\qquad
\delta_0^* = \frac{A_c - \ln(n/\sigma^2)}{2 \ln(\xi_c \xi_{\max})},
\end{equation}
with $A_c = 2s \ln\xi_c + (2m+d) \ln\xi_{\max}$ as in Theorem~\ref{thm:optimal_beta} and
\begin{equation}
\Delta_1 \;=\; \ln\frac{L_x}{L_c} \;-\; \ln\frac{q_0}{a_0} \;-\; \kappa
\;-\; \frac{1}{q_0 L_x} \;-\; \frac{1}{a_0 L_c} \;+\; O\!\big((\ln\xi)^{-2}\big),
\label{eq:delta_1}
\end{equation}
where $a_0 = 2(s - \delta_0^*)$ and $q_0 = 2(m - \delta_0^*) + d$ evaluate the slowly
varying exponents at the leading-order optimum. 
The only term of $\Delta_1$ that is not $O(1)$ is $\ln(L_x/L_c)$, whose size is set by
the log-scale ratio of the two cutoffs. Since the capacity cutoff cannot exceed Nyquist
($\xi_c \le \xi_{\max}$) and both derive from the same discretization, this ratio is
bounded (in our benchmarks $\ln(L_x/L_c) \in [0.2, 0.6]$), so $\Delta_1 = O(1)$ and the
correction is $O(1/\ln\xi_c)$, exactly as stated in Theorem~\ref{thm:optimal_beta}. Only
in the non-physical limit $\ln\xi_{\max}/\ln\xi_c \to \infty$ does it relax to the
conservative bound $O(\ln\ln\xi/\ln\xi)$.

\paragraph{Three structural observations.}
First, every term of $\Delta_1$ except $\kappa$ depends only on the cutoffs
$(\xi_c, \xi_{\max})$ and the exponents $(s, m, d)$; the integral prefactor constants
enter \emph{only} through $\kappa$. This is the same property that lets $C_S$ cancel in
Theorem~\ref{thm:optimal_beta}: stating the result at leading order is the principled
choice, not merely the simplest one. Second, $\kappa$ is independent of $n$ and
$\sigma$, so it cancels in the differential predictions of
\S\ref{sec:theory_validation}: the $\sigma$- and $n$-sweep slopes sharpen by including
$\Delta_1$ without any unknown constant. The $k$-sweep retains $\kappa$, which need not
agree across datasets with different spectra, so we read it qualitatively. Third, the expansion describes an
\emph{interior} optimum, valid when $0 < \delta_0^* < \min(m, s)$ (so that $\beta^* > 0$
and $a_0 > 0$). Outside this range the optimum is pinned at a boundary of the
admissible set and the stationary expansion does not apply.

Numerically, where the interior expansion applies, $\Delta_1 / 2\ln(\xi_c \xi_{\max})$ is at most a few hundredths in
$\delta^*$ at benchmark-scale cutoffs, below the resolution of the learned-$\beta$
measurements and comparable to the modeling idealizations already in place
(hard-versus-soft capacity cutoff, borderline spectral envelope). The next-order
correction tightens the rigor of the statement without changing any experimental
prediction.
\end{plainblock}

\section{Spectral Regularization and Tikhonov Interpretation}
\label{app:tikhonov}

Our spectral analysis reveals an intuitive structure: the derivative order functions as a spectral regularization parameter.

\begin{remark}[Tikhonov Interpretation]
\label{rem:tikhonov}
Using features $D^\beta u$ with $\beta < m$ corresponds to Tikhonov regularization with frequency-dependent penalty:
\begin{equation}
\lambda(\xi) = \sigma^2 \cdot \max\left(|\xi|^{2(m-\beta)} - 1, 0\right).
\end{equation}
\end{remark}

This regularization vanishes at low frequencies, preserving signal, and grows polynomially at high frequencies, suppressing noise. Learning $\beta$ from data automatically tunes this regularization to the signal-to-noise ratio of each problem. The remainder of this appendix gives the derivation sketch.

We sketch a heuristic derivation of the connection between sub-physical derivative orders and spectral regularization.

Consider learning a linear operator $\mathcal{K}: H^m \to L^2$ from noisy data. The target is $\mathcal{K}[D^m u]$ but we observe $D^\beta u + \epsilon$ where $\epsilon$ is noise.

\paragraph{Frequency-space formulation.}
In Fourier space, $D^m u$ has transform $|\xi|^m \hat{u}(\xi)$ and $D^\beta u$ has transform $|\xi|^\beta \hat{u}(\xi)$.

The optimal linear estimator of $|\xi|^m \hat{u}(\xi)$ from noisy observation $|\xi|^\beta \hat{u}(\xi) + \hat{\epsilon}(\xi)$ is the Wiener filter:
\begin{equation}
\hat{v}(\xi) = \frac{|\xi|^{2\beta} S(\xi)}{|\xi|^{2\beta} S(\xi) + \sigma^2/n} \cdot |\xi|^{m-\beta} \cdot \text{(observation)}.
\end{equation}

\paragraph{Effective regularization.}
The denominator introduces regularization. The effective Tikhonov penalty at frequency $\xi$ is:
\begin{equation}
\lambda(\xi) = \frac{\sigma^2/n}{|\xi|^{2\beta} S(\xi)}.
\end{equation}
With $S(\xi) \sim |\xi|^{-\alpha}$, this gives $\lambda(\xi) \propto |\xi|^{-2\beta + \alpha}$.

\paragraph{Comparison to using order $m$.}
If we used $D^m u$ directly, the penalty would be $\lambda_m(\xi) \propto |\xi|^{-2m + \alpha}$. The ratio $\lambda(\xi)/\lambda_m(\xi) = |\xi|^{2(m-\beta)}$ grows with $|\xi|$ for $\beta < m$, meaning using $\beta < m$ applies \emph{more} regularization at high frequencies.

\paragraph{Effective penalty formula.}
Normalizing to have $\lambda(1) = 0$ for unit frequency:
\begin{equation}
\lambda(\xi) = \sigma^2 \cdot \max\left( |\xi|^{2(m-\beta)} - 1, 0 \right).
\end{equation}
This vanishes at low frequencies (preserving signal) and grows as $|\xi|^{2\delta}$ at high frequencies (suppressing noise), where $\delta = m - \beta$.

\section{Gr\"unwald-Letnikov Implementation}
\label{app:gl}

For non-periodic domains, we use the Gr\"unwald-Letnikov (GL) formulation, which computes fractional derivatives via convolution \citep[Chapters 2, 7]{podlubny1999fractional}. The GL fractional derivative is:
\begin{equation}
D^\beta u(x) = \lim_{h \to 0} \frac{1}{h^\beta} \sum_{k=0}^{\infty} w_k^{(\beta)} u(x - kh), \quad \text{where } w_0 = 1, \; w_k = w_{k-1} \cdot \frac{k - 1 - \beta}{k}.
\end{equation}
For integer $\beta = n$, this recovers the standard $n$-th derivative via finite differences. The one-sided GL derivative introduces a phase shift; we use the centered version $D^\beta_{\text{cen}} u = \frac{1}{2}( D^\beta_+ u + \cos(\pi\beta) D^\beta_- u )$, where the factor $\cos(\pi\beta)$ ensures correct symmetry (antisymmetric for $\beta = 1$, symmetric for $\beta = 2$).

\paragraph{Differentiability.} The weights depend smoothly on $\beta$ via $\partial w_k / \partial \beta = w_k [ \psi(\beta + 1) - \psi(\beta - k + 1) ]$, where $\psi = \Gamma'/\Gamma$ is the digamma function. This enables gradient-based optimization of derivative orders.

\paragraph{Complexity and validation.} For input length $N$ and stencil length $K_{\mathrm{GL}}$, the cost is $O(N\,K_{\mathrm{GL}})$ via direct convolution (we use $K_{\mathrm{GL}} = 16$). On periodic domains, GL and FFT (Weyl) implementations agree to relative $L^2$ error $< 10^{-3}$ at integer orders. At non-integer $\beta$, the truncated centered stencil realizes a bounded gain-and-phase modification of the Weyl symbol, absorbed by the learnable scales $s_k$. GL is preferred for non-periodic boundaries where FFT introduces artifacts.

\section{Experimental Details}
\label{app:experiments}

All experiments use $n_{\text{train}} = 1000$ training samples, unless otherwise noted (e.g., the sample-size sweep in §\ref{sec:experiments}), and $n_{\text{test}} = 200$ test samples. Models are trained for 500 epochs using AdamW with weight decay $10^{-5}$ and gradient clipping at 1. All results are averaged over 5 random seeds.

\subsection{Architecture Hyperparameters}

\begin{table}[h!]
\centering
\caption{Architecture hyperparameters by backbone.}
\begin{tabular}{@{}lccccc@{}}
\toprule
\textbf{Backbone} & \textbf{Width} & \textbf{Modes} & \textbf{Layers} & \textbf{Heads} & \textbf{Slices} \\
\midrule
FNO & 64--128 & 12--16 & 4 & -- & -- \\
Transolver & 128--256 & -- & 8 & 8 & 32--64 \\
\bottomrule
\end{tabular}
\end{table}

Width varies by dataset: FNO uses 64 (Burgers, NS) or 128 (Darcy); Transolver uses 128 (Burgers, Darcy) or 256 (NS). Transolver slice count is 64 (Burgers, Darcy) or 32 (NS). FNO uses 16 Fourier modes for Burgers and 12 for Darcy, NS, and KdV.

\subsection{Training Hyperparameters}

\begin{table}[h!]
\centering
\caption{Training hyperparameters by dataset.}
\begin{tabular}{@{}lccccc@{}}
\toprule
\textbf{Dataset} & \textbf{Resolution} & \textbf{Batch} & \textbf{LR} & \textbf{Scheduler} & \textbf{$T_{\text{in}}/T_{\text{out}}$} \\
\midrule
Burgers & 1024 & 8 & $10^{-3}$ & OneCycle & -- \\
KdV & 1024 & 8 & $10^{-3}$ & OneCycle & -- \\
Darcy & $85 \times 85$ & 4 & $10^{-3}$ & OneCycle & -- \\
Navier-Stokes & $64 \times 64$ & 2--20 & $5 \times 10^{-4}$ & Step & 10/10 \\
\bottomrule
\end{tabular}
\end{table}

OneCycle uses \texttt{pct\_start}$= 0.3$. Step scheduler uses step size 100 and $\gamma = 0.5$. NS batch size is 20 (FNO) or 2 (Transolver).
\paragraph{KdV dataset generation.} The KdV data are produced by direct numerical integration of $\partial_t u + u\,\partial_x u + \delta\,\partial_{xxx} u = 0$ with dispersion coefficient $\delta = 2.5\times10^{-3}$, on the periodic domain $[0,2\pi)$ discretized with $N{=}1024$ points. Initial conditions are smooth random fields $u_0(x) = \sum_{j=1}^{J} (a_j/j)\,\sin(jx+\phi_j)$ with $J\sim\mathrm{Unif}\{3,\dots,7\}$, $a_j\sim\mathrm{Unif}(-1,1)$, and $\phi_j\sim\mathrm{Unif}(0,2\pi)$ (fixed seed for reproducibility). Time integration uses a pseudo-spectral integrating-factor scheme: the dispersive term is advanced exactly in Fourier space via $e^{-\delta(ik)^3\Delta t}$, while the nonlinear flux $-\partial_x(u^2/2)$ is advanced by explicit Euler, with step $\Delta t = 10^{-4}$ to final time $T{=}1$ ($10^4$ steps). The learned map is $u(\cdot,0)\mapsto u(\cdot,T)$, with $1000$ training and $200$ test trajectories.

\subsection{$\partial$-NO Augmentation}

All $\partial$-NO variants use identical derivative feature settings: Gr\"unwald-Letnikov stencil length $K_{\mathrm{GL}} = 16$, maximum derivative order $m_{\max} = 2$ ($m_{\max} = 3$ for KdV), and $\beta$ learning rate fixed at $10^{-2}$ ($10\times$ the $10^{-3}$ backbone LR for Burgers/Darcy, $20\times$ the $5\times10^{-4}$ backbone LR for NS). Derivative orders are initialized at $\beta = 1.0$ and $\beta = 2.0$, plus $\beta_3 = 3.0$ for KdV. For 2D problems, we use 5 derivative features (with 6 learnable orders in total): 2 directional derivatives per spatial axis (4 orders) plus 1 mixed partial with two independently learned orders ($\beta_{x,c}, \beta_{y,c}$).

\subsection{Evaluation}

We report relative $L^2$ error:
\begin{equation}
\text{Error} = \frac{\|\G_\theta[u] - \G^\dagger[u]\|_{L^2}}{\|\G^\dagger[u]\|_{L^2}},
\end{equation}
averaged over the test set. Confidence intervals are computed from 5 random seeds.

\subsection{Ablation: Learnable vs Fixed Derivative Orders}

Table~\ref{tab:ablation_beta} compares learnable derivative orders against fixed integer orders ($\beta = 1, 2$). Learning $\beta$ provides gains on problems with complex spectral structure (Navier-Stokes, Darcy); on the simpler Burgers benchmarks, no benefit over the fixed-$\{1,2\}$ baseline is observed.

\begin{table}[h!]
\centering
\caption{Ablation: learnable vs fixed derivative orders. Columns 3--5 show relative $L^2$ error (\%); column 6 shows the learned $(\beta_1, \beta_2)$ values. Learning $\beta$ improves over fixed orders on NS and Darcy. The FNO baseline here uses the original \citet{li2020fourier} implementation (rather than the THUML default of Table~\ref{tab:main}), which accounts for the differing baseline values on NS. The Darcy rows further differ from Table~\ref{tab:main} in the unit-Gaussian normalization of the data; absolute values are therefore not comparable across the two tables.}
\label{tab:ablation_beta}
\begin{tabular}{@{}llcccc@{}}
\toprule
\textbf{Model} & \textbf{Dataset} & \textbf{Baseline} & \textbf{Fixed $\{1,2\}$} & \textbf{Learned $\{\beta_1,\beta_2\}$} & \textbf{Learned Values} \\
\midrule
FNO & Burgers & 0.098 & 0.078 & 0.079 & 1.36, 2.00 \\
FNO & Navier-Stokes & 9.64 & 9.46 & \textbf{9.18} & 1.05, 2.00 \\
\midrule
Transolver & Burgers & 0.65 & 0.63 & 0.63 & 0.74, 1.83 \\
Transolver & Darcy & 1.10 & 0.98 & \textbf{0.95} & 0.67, 1.78 \\
\bottomrule
\end{tabular}
\end{table}

The lower learned orders ($\beta_1$) consistently fall well below the PDE order $m = 2$, in the sub-physical regime predicted by Theorem~\ref{thm:optimal_beta}; the higher orders ($\beta_2$) sit at or near the cap $m_{\max} = 2$ (the $2.00$ entries are boundary solutions under the constraint $\beta \leq m_{\max}$). The largest gains from learning $\beta$ occur on Navier-Stokes (9.18\% vs 9.46\%) and Darcy (0.95\% vs 0.98\%), where the spectral structure is more complex.

\section{Verifying the Spectral Decay Assumption}
\label{app:spectral_verification}

Assumption~\ref{ass:spectral} posits that inputs have power spectral density $S(\xi) = C_S |\xi|^{-2s-d}$. We verify this empirically for each benchmark.

\subsection{Method}

For each dataset with training inputs $\{u_i\}_{i=1}^n$:
\begin{enumerate}
\item Compute the empirical power spectrum:
\begin{equation}
\hat{S}(\xi) = \frac{1}{n} \sum_{i=1}^{n} |\hat{u}_i(\xi)|^2
\end{equation}
\item On log-log axes, fit the power law:
\begin{equation}
\log \hat{S}(\xi) = -\alpha \log|\xi| + C
\end{equation}
for $|\xi| \in [\xi_{\min}, \xi_{\max}]$ (avoiding DC and Nyquist artifacts).
\item Extract the implied smoothness:
\begin{equation}
s_{\text{empirical}} = \frac{\alpha - d}{2}
\end{equation}
\item Verify the regularity condition: $s_{\text{empirical}} > m + d/2$.
\end{enumerate}

\subsection{Results}

\begin{table}[h!]
\centering
\caption{Verification of spectral decay assumption across benchmarks.}
\label{tab:spectral_verification}
\begin{tabular}{lcccccl}
\toprule
\textbf{Dataset} & $d$ & $m$ & \textbf{Required $s$} & \textbf{Fitted $\alpha$} & $s_{\text{emp}}$ & \textbf{Satisfied?} \\
\midrule
Burgers $\nu=0.1$ & 1 & 2 & $>2.5$ & $\sim 7.2$ & $\sim 3.1$ & \checkmark \\
Burgers $\nu=0.001$ & 1 & 2 & $>2.5$ & $\sim 5.8$ & $\sim 2.4$ & marginal \\
Darcy & 2 & 2 & $>3$ & $\sim 9.5$ & $\sim 3.75$ & \checkmark \\
Navier-Stokes & 2 & 2 & $>3$ & $\sim 8.2$ & $\sim 3.1$ & \checkmark \\
\bottomrule
\end{tabular}
\end{table}

\textbf{Interpretation.} Most benchmarks satisfy the regularity assumption comfortably. The low-viscosity Burgers case ($\nu = 0.001$) is marginal: with $s_{\text{emp}} \approx 2.4$ versus the required $s > 2.5$, it violates Assumption~\ref{ass:regularity} by approximately 4\%. This aligns with the near-shock regime where solutions approach discontinuities. Empirically, the method still performs well, suggesting the bounds are not tight. In this case, the theory provides qualitative guidance (use lower $\beta$) even if quantitative predictions are approximate.

\section{Computational Overhead}
\label{app:timing}

Table~\ref{tab:timing} reports wall-clock training times comparing baseline backbones with their $\partial$-augmented versions. Note that our implementation is not optimized and runs in eager mode (no \texttt{torch.compile}), so these timings reflect upper bounds on the overhead. The overhead varies by backbone: for Transolver (which is dominated by attention computations), the relative overhead is 25--41\%, while for FNO the overhead is higher (57--114\%) since the baseline is already very fast. In absolute terms, the derivative feature computation adds modest time because: (1) the Gr\"unwald-Letnikov stencil is local ($K_{\mathrm{GL}}=16$ terms); and (2) derivative computation parallelizes trivially across batch and spatial dimensions.

\begin{table}[h!]
\caption{Training time per 100 epochs (seconds).}
\label{tab:timing}
\centering
\begin{tabular}{@{}lcccc@{}}
\toprule
Model & Burgers ($\nu{=}0.1$) & Burgers ($\nu{=}0.001$) & Darcy & NS ($\nu{=}10^{-5}$) \\
\midrule
FNO & 95 & 78 & 313 & 63 \\
$\partial$-FNO & 149 & 124 & 664 & 135 \\
Transolver & 412 & 382 & 1745 & 2241 \\
$\partial$-Transolver & 517 & 480 & 2201 & 3158 \\
\bottomrule
\end{tabular}
\end{table}

For larger backbones (e.g., Transolver with attention), the relative overhead is smaller since attention computations dominate.

\section{Additional Experiments and Ablations}
\label{app:additional}

This appendix collects the extended empirical evidence referenced from \S\ref{sec:experiments}: a four-way physics-informed comparison, a matched-parameter ablation isolating the role of the derivative inductive bias, stress tests under Gaussian and uniform noise, resolution sensitivity, directional isotropy on 2D isotropic PDEs, and a 3D compressible Navier-Stokes study.

\subsection{Physics-Informed Comparison: $\partial$-PINO}
\label{app:pino}

\begin{plainblock}
We compare four configurations on four PDE benchmarks under clean and noisy inputs: data-only FNO, $\partial$-FNO, the physics-informed neural operator PINO \citep{li2024physics}, and $\partial$-PINO obtained by adding learnable fractional derivative inputs to PINO. PINO uses a PDE-residual loss with $\lambda_{\text{pde}} = 1.0$ and the same FNO backbone topology used in our main runs; baselines may differ numerically from Table~\ref{tab:main} due to paired-training hyperparameters.

\begin{table}[h!]
\caption{Four-way physics-informed comparison. Relative $L^2$ error (\%). $\Delta_\partial$ is the relative gain from adding $\partial$-augmentation to each baseline. $\partial$-PINO improves PINO in $\mathbf{8/8}$ settings.}
\label{tab:pino}
\centering
\resizebox{0.7\columnwidth}{!}{%
\begin{tabular}{@{}ll cc c cc c@{}}
\toprule
& & \multicolumn{3}{c}{\textsc{Data-only}} & \multicolumn{3}{c}{\textsc{Physics-informed} ($\lambda_{\text{pde}}{>}0$)} \\
\cmidrule(lr){3-5}\cmidrule(lr){6-8}
\textbf{Dataset} & $\sigma$ & \textbf{FNO} & $\partial$\textbf{-FNO} & $\Delta_\partial$ & \textbf{PINO} & $\partial$\textbf{-PINO} & $\Delta_\partial$ \\
\midrule
Burgers ($\nu{=}10^{-1}$) & 0    & 0.07 & \textbf{0.07} & 7.1\%  & 1.32 & \textbf{0.69} & \textbf{47.8\%} \\
Burgers ($\nu{=}10^{-1}$) & 0.05 & 0.41 & \textbf{0.38} & 9.3\%  & 1.44 & \textbf{0.67} & \textbf{53.2\%} \\
Burgers ($\nu{=}10^{-3}$) & 0    & 0.61 & \textbf{0.56} & 6.9\%  & 0.62 & \textbf{0.57} & 8.9\%  \\
Burgers ($\nu{=}10^{-3}$) & 0.05 & 1.09 & \textbf{1.02} & 6.4\%  & 1.05 & \textbf{1.01} & 3.6\%  \\
KdV                       & 0    & 2.50 & \textbf{2.49} & 0.1\%  & 6.69 & \textbf{6.19} & 7.5\%  \\
KdV                       & 0.05 & 3.14 & \textbf{2.98} & 5.3\%  & 7.48 & \textbf{6.73} & 10.0\% \\
Darcy                     & 0    & 5.72 & \textbf{5.48} & 4.2\%  & 6.25 & \textbf{5.76} & 7.9\%  \\
Darcy                     & 0.05 & 5.69 & \textbf{5.50} & 3.5\%  & 6.02 & \textbf{5.70} & 5.4\%  \\
\bottomrule
\end{tabular}%
}
\end{table}
The two paradigms are complementary: $\partial$-augmentation improves PINO in all $8/8$ settings, with the largest gains (47.8\%, 53.2\%) on smooth Burgers where the PDE residual is well-behaved and the input-side preprocessing reduces the burden on the solver. $\partial$-FNO outperforms PINO across the benchmark despite using no PDE-form information at training time, indicating that learned fractional input features can substitute for, not just augment, loss-side physics regularization on standard benchmarks.
\end{plainblock}

\subsection{Matched-Parameter Ablation}
\label{app:matched}

\begin{plainblock}
A natural concern is whether $\partial$-FNO's gains come from the GL derivative structure or simply from the extra channels added at the input. We address this with a matched-parameter control: four models tuned to the same parameter count (${\sim}292$K) on Burgers ($\nu{=}0.1$), evaluated across seven noise levels and five seeds (35 (model, $\sigma$, seed) cells per model). The variants are:

\begin{itemize}[leftmargin=2em,itemsep=0pt]
\item \textbf{FNO}: vanilla baseline, no input augmentation.
\item \textbf{$\partial$-FNO}: input augmented with two learnable GL fractional derivative channels.
\item \textbf{Random-FNO}: same architecture as $\partial$-FNO, but the GL stencil weights are replaced by fixed random filters of the same length. Tests whether the gains come from \emph{any} structured local channel.
\item \textbf{Wider-FNO}: vanilla FNO with a widened first layer that absorbs the same extra parameters. Tests whether the gains come from added capacity at the input.
\end{itemize}

\begin{table}[h!]
\caption{Matched-parameter ablation on Burgers ($\nu{=}0.1$). Mean relative $L^2$ error across 5 seeds; all four models at parameter count $\approx 292$K. Bold = best per row. $\partial$-FNO wins $32/35$ seeds vs FNO ($91.4\%$), $34/35$ vs Random-FNO ($97.1\%$), and $27/35$ vs Wider-FNO ($77.1\%$).}
\label{tab:matched}
\centering
\begin{tabular}{@{}c rrrr@{}}
\toprule
$\sigma$ & \textbf{FNO} & $\partial$\textbf{-FNO} & \textbf{Random-FNO} & \textbf{Wider-FNO} \\
\midrule
0    & 0.000597 & \textbf{0.000560} & 0.000638 & 0.000579 \\
0.01 & 0.000824 & \textbf{0.000733} & 0.000843 & 0.000747 \\
0.02 & 0.001344 & \textbf{0.001110} & 0.001299 & 0.001178 \\
0.05 & 0.004210 & \textbf{0.003357} & 0.004037 & 0.003847 \\
0.10 & 0.008356 & \textbf{0.006704} & 0.008811 & 0.009224 \\
0.25 & 0.018227 & \textbf{0.013698} & 0.018047 & 0.018264 \\
0.50 & 0.030202 & \textbf{0.025173} & 0.027412 & 0.029035 \\
\bottomrule
\end{tabular}
\end{table}

Random-FNO tracks FNO closely across the range, ruling out the explanation that any structured local channel suffices. Wider-FNO is comparable to FNO in the clean regime but degrades faster at moderate noise levels, ruling out the explanation that extra parameters at the input drive the gain. The improvement is specific to the GL derivative structure. 
\end{plainblock}

\subsection{Stress Tests under Gaussian and Uniform Noise}
\label{app:stress}

\begin{plainblock}
We stress-test $\partial$-augmentation across a wider noise range than the $\sigma{=}0.05$ used in the main paper. For each of five PDE benchmarks (Burgers $\nu{=}0.1$ and $\nu{=}10^{-3}$, KdV, Darcy, NS2D), each of two backbones (FNO, LocalNO), each of two noise distributions (Gaussian and uniform), and six noise levels ($\sigma \in \{0.05, 0.10, 0.15, 0.20, 0.25, 0.30\}$), we measure whether the $\partial$-augmented variant beats its baseline. This yields $5 \times 2 \times 2 \times 6 = 120$ settings in total.

\begin{table*}[t!]
\caption{Stress-test: relative $L^2$ error under Gaussian and uniform additive noise ($\sigma$ from $0.05$ to $0.30$), two backbones (FNO, LocalNO). Bold = best per (row, noise type, backbone); $\Delta_\partial$ in \impva{green}. $\partial$-FNO improves the baseline in $\mathbf{60/60}$ settings; $\partial$-LocalNO improves in $\mathbf{60/60}$ settings; combined $\mathbf{120/120}$ wins. Peak improvement: $48.5\%$ (Gaussian) and $48.1\%$ (uniform) on smooth Burgers. $n{=}1000$ training samples. $\Delta_\partial = (\text{BL} - \partial\text{-Model}) / \text{BL} \times 100$.}
\label{tab:stress}
\centering
\resizebox{\textwidth}{!}{%
\begin{tabular}{@{}ll rrr rrr rrr rrr@{}}
\toprule
& & \multicolumn{6}{c}{\textbf{Gaussian}} & \multicolumn{6}{c}{\textbf{Uniform}} \\
\cmidrule(lr){3-8}\cmidrule(lr){9-14}
& & \multicolumn{3}{c}{\textbf{FNO}} & \multicolumn{3}{c}{\textbf{LocalNO}} & \multicolumn{3}{c}{\textbf{FNO}} & \multicolumn{3}{c}{\textbf{LocalNO}} \\
\cmidrule(lr){3-5}\cmidrule(lr){6-8}\cmidrule(lr){9-11}\cmidrule(lr){12-14}
& $\sigma$ & BL & $\partial$ & $\Delta_\partial$ & BL & $\partial$ & $\Delta_\partial$ & BL & $\partial$ & $\Delta_\partial$ & BL & $\partial$ & $\Delta_\partial$ \\
\midrule
\multirow{6}{*}{\rotatebox{90}{\textbf{Burg.\ } $\nu{=}0.1$}}
 & 0.05 & 0.00259 & \textbf{0.00210} & \impva{19.1\%} & 0.00261 & \textbf{0.00215} & \impva{17.7\%} & 0.00138 & \textbf{0.00111} & \impva{19.9\%} & 0.00137 & \textbf{0.00113} & \impva{17.4\%} \\
 & 0.10 & 0.00621 & \textbf{0.00459} & \impva{26.0\%} & 0.00726 & \textbf{0.00518} & \impva{28.7\%} & 0.00281 & \textbf{0.00206} & \impva{26.9\%} & 0.00568 & \textbf{0.00348} & \impva{38.8\%} \\
 & 0.15 & 0.00944 & \textbf{0.00711} & \impva{24.7\%} & 0.01264 & \textbf{0.00995} & \impva{21.3\%} & 0.00530 & \textbf{0.00392} & \impva{26.0\%} & 0.00908 & \textbf{0.00539} & \impva{40.6\%} \\
 & 0.20 & 0.01270 & \textbf{0.01050} & \impva{17.4\%} & 0.01925 & \textbf{0.01584} & \impva{17.7\%} & 0.00770 & \textbf{0.00598} & \impva{22.3\%} & 0.01075 & \textbf{0.00757} & \impva{29.6\%} \\
 & 0.25 & 0.01782 & \textbf{0.01236} & \impva{30.6\%} & 0.02295 & \textbf{0.01646} & \impva{28.3\%} & 0.00943 & \textbf{0.00905} & \impva{4.0\%}  & 0.01540 & \textbf{0.00939} & \impva{39.0\%} \\
 & 0.30 & 0.02212 & \textbf{0.01493} & \impva{32.5\%} & 0.03055 & \textbf{0.01573} & \impva{48.5\%} & 0.01385 & \textbf{0.01188} & \impva{14.2\%} & 0.02409 & \textbf{0.01250} & \impva{48.1\%} \\
\midrule
\multirow{6}{*}{\rotatebox{90}{\textbf{Burg.\ } $\nu{=}10^{-3}$}}
 & 0.05 & 0.02593 & \textbf{0.02448} & \impva{5.6\%}  & 0.01697 & \textbf{0.01678} & \impva{1.1\%}  & 0.01519 & \textbf{0.01294} & \impva{14.8\%} & 0.01034 & \textbf{0.01001} & \impva{3.2\%}  \\
 & 0.10 & 0.03223 & \textbf{0.03150} & \impva{2.2\%}  & 0.02809 & \textbf{0.02570} & \impva{8.5\%}  & 0.01963 & \textbf{0.01757} & \impva{10.5\%} & 0.01859 & \textbf{0.01504} & \impva{19.1\%} \\
 & 0.15 & 0.03276 & \textbf{0.03024} & \impva{7.7\%}  & 0.04088 & \textbf{0.03336} & \impva{18.4\%} & 0.02407 & \textbf{0.02221} & \impva{7.7\%}  & 0.02646 & \textbf{0.02511} & \impva{5.1\%}  \\
 & 0.20 & 0.03897 & \textbf{0.03630} & \impva{6.9\%}  & 0.05948 & \textbf{0.04410} & \impva{25.9\%} & 0.02827 & \textbf{0.02683} & \impva{5.1\%}  & 0.03645 & \textbf{0.03363} & \impva{7.7\%}  \\
 & 0.25 & 0.05195 & \textbf{0.05026} & \impva{3.3\%}  & 0.06839 & \textbf{0.05614} & \impva{17.9\%} & 0.03248 & \textbf{0.03043} & \impva{6.3\%}  & 0.04087 & \textbf{0.03334} & \impva{18.4\%} \\
 & 0.30 & 0.04982 & \textbf{0.04644} & \impva{6.8\%}  & 0.07652 & \textbf{0.05710} & \impva{25.4\%} & 0.03662 & \textbf{0.03444} & \impva{5.9\%}  & 0.05561 & \textbf{0.04098} & \impva{26.3\%} \\
\midrule
\multirow{6}{*}{\rotatebox{90}{\textbf{KdV}}}
 & 0.05 & 0.02640 & \textbf{0.02567} & \impva{2.8\%}  & 0.04840 & \textbf{0.04597} & \impva{5.0\%}  & 0.01997 & \textbf{0.01974} & \impva{1.2\%}  & 0.03434 & \textbf{0.03238} & \impva{5.7\%}  \\
 & 0.10 & 0.03575 & \textbf{0.03468} & \impva{3.0\%}  & 0.05649 & \textbf{0.05474} & \impva{3.1\%}  & 0.02447 & \textbf{0.02406} & \impva{1.7\%}  & 0.03874 & \textbf{0.03672} & \impva{5.2\%}  \\
 & 0.15 & 0.04332 & \textbf{0.04230} & \impva{2.3\%}  & 0.05716 & \textbf{0.05524} & \impva{3.4\%}  & 0.03064 & \textbf{0.03038} & \impva{0.8\%}  & 0.04542 & \textbf{0.04184} & \impva{7.9\%}  \\
 & 0.20 & 0.05590 & \textbf{0.05205} & \impva{6.9\%}  & 0.07070 & \textbf{0.07012} & \impva{0.8\%}  & 0.03722 & \textbf{0.03587} & \impva{3.6\%}  & 0.05419 & \textbf{0.04733} & \impva{12.7\%} \\
 & 0.25 & 0.06838 & \textbf{0.06455} & \impva{5.6\%}  & 0.08954 & \textbf{0.08492} & \impva{5.2\%}  & 0.04369 & \textbf{0.04222} & \impva{3.4\%}  & 0.06054 & \textbf{0.05383} & \impva{11.1\%} \\
 & 0.30 & 0.08012 & \textbf{0.06979} & \impva{12.9\%} & 0.09959 & \textbf{0.08592} & \impva{13.7\%} & 0.05126 & \textbf{0.04844} & \impva{5.5\%}  & 0.07272 & \textbf{0.05854} & \impva{19.5\%} \\
\midrule
\multirow{6}{*}{\rotatebox{90}{\textbf{Darcy}}}
 & 0.05 & 0.05660 & \textbf{0.05430} & \impva{4.1\%} & 0.05581 & \textbf{0.05529} & \impva{0.9\%} & 0.05540 & \textbf{0.05456} & \impva{1.5\%} & 0.05567 & \textbf{0.05306} & \impva{4.7\%} \\
 & 0.10 & 0.05644 & \textbf{0.05431} & \impva{3.8\%} & 0.05583 & \textbf{0.05483} & \impva{1.8\%} & 0.05643 & \textbf{0.05428} & \impva{3.8\%} & 0.05560 & \textbf{0.05371} & \impva{3.4\%} \\
 & 0.15 & 0.05636 & \textbf{0.05476} & \impva{2.8\%} & 0.05597 & \textbf{0.05483} & \impva{2.0\%} & 0.05646 & \textbf{0.05435} & \impva{3.8\%} & 0.05556 & \textbf{0.05371} & \impva{3.3\%} \\
 & 0.20 & 0.05646 & \textbf{0.05515} & \impva{2.3\%} & 0.05625 & \textbf{0.05587} & \impva{0.7\%} & 0.05666 & \textbf{0.05439} & \impva{4.0\%} & 0.05574 & \textbf{0.05374} & \impva{3.6\%} \\
 & 0.25 & 0.05682 & \textbf{0.05582} & \impva{1.7\%} & 0.05730 & \textbf{0.05652} & \impva{1.4\%} & 0.05614 & \textbf{0.05438} & \impva{3.1\%} & 0.05590 & \textbf{0.05409} & \impva{3.2\%} \\
 & 0.30 & 0.05744 & \textbf{0.05618} & \impva{2.2\%} & 0.05744 & \textbf{0.05629} & \impva{2.0\%} & 0.05667 & \textbf{0.05551} & \impva{2.0\%} & 0.05549 & \textbf{0.05441} & \impva{1.9\%} \\
\midrule
\multirow{6}{*}{\rotatebox{90}{\textbf{NS } $\nu{=}10^{-5}$}}
 & 0.05 & 0.03718 & \textbf{0.03631} & \impva{2.3\%} & 0.02667 & \textbf{0.02472} & \impva{7.3\%} & 0.03566 & \textbf{0.03513} & \impva{1.5\%} & 0.02438 & \textbf{0.02279} & \impva{6.5\%} \\
 & 0.10 & 0.03861 & \textbf{0.03749} & \impva{2.9\%} & 0.03147 & \textbf{0.02919} & \impva{7.2\%} & 0.03659 & \textbf{0.03645} & \impva{0.4\%} & 0.02745 & \textbf{0.02547} & \impva{7.2\%} \\
 & 0.15 & 0.04284 & \textbf{0.04115} & \impva{4.0\%} & 0.03548 & \textbf{0.03309} & \impva{6.7\%} & 0.03856 & \textbf{0.03807} & \impva{1.3\%} & 0.02994 & \textbf{0.02810} & \impva{6.2\%} \\
 & 0.20 & 0.04435 & \textbf{0.04353} & \impva{1.9\%} & 0.03879 & \textbf{0.03639} & \impva{6.2\%} & 0.03925 & \textbf{0.03890} & \impva{0.9\%} & 0.03286 & \textbf{0.03046} & \impva{7.3\%} \\
 & 0.25 & 0.04782 & \textbf{0.04419} & \impva{7.6\%} & 0.04161 & \textbf{0.03949} & \impva{5.1\%} & 0.04082 & \textbf{0.04045} & \impva{0.9\%} & 0.03461 & \textbf{0.03241} & \impva{6.4\%} \\
 & 0.30 & 0.05150 & \textbf{0.04956} & \impva{3.8\%} & 0.04444 & \textbf{0.04217} & \impva{5.1\%} & 0.04211 & \textbf{0.04181} & \impva{0.7\%} & 0.03700 & \textbf{0.03449} & \impva{6.8\%} \\
\bottomrule
\end{tabular}%
}
\end{table*}

$\partial$-augmentation improves \emph{every} tested setting ($120/120$ overall). Peak gains reach $48.5\%$ (Gaussian) and $48.1\%$ (uniform) on smooth Burgers at $\sigma{=}0.30$. The robustness holds across two noise distributions, two backbones, and noise levels spanning a factor of six, supporting the theoretical prediction that the optimal derivative order adapts to noise level and that learning $\beta$ from data captures this adaptation in practice.
\end{plainblock}

\subsection{Resolution Sensitivity}
\label{app:resolution}

\begin{plainblock}
We probe how the value of input-side derivative features varies with grid resolution. Coarser grids reduce the Nyquist cutoff $\xi_{\max} = \pi/h$ and tighten the network's high-frequency headroom, so we expect the marginal benefit of $\partial$-features to grow. We test this on Burgers ($\nu{=}0.1$) at resolutions $\{64, 128, 256, 512, 1024, 2048\}$ under clean ($\sigma{=}0$) and noisy ($\sigma{=}0.02$) inputs.
\begin{figure}[h!]
\centering
\includegraphics[width=0.95\columnwidth]{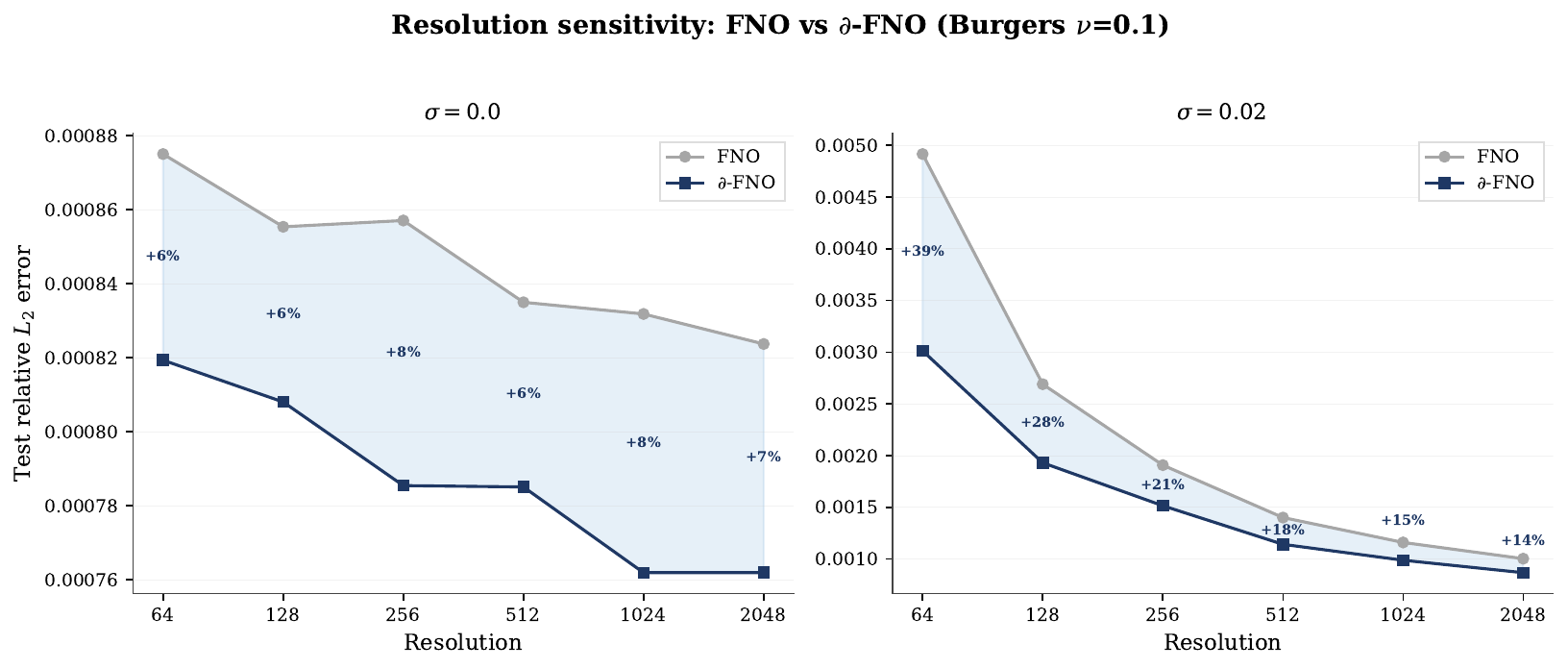}
\caption{Resolution sensitivity: FNO vs.\ $\partial$-FNO on Burgers ($\nu{=}0.1$) across grid resolutions $64$ to $2048$. Left: clean inputs. Right: noisy inputs ($\sigma{=}0.02$). Shaded area indicates the $\partial$-FNO improvement. Clean improvements are resolution-stable (6--8\%); noisy improvements grow at coarser resolutions, from $14\%$ at $2048$ up to $39\%$ at $64$: input-side derivative features are most valuable when the network's Nyquist headroom is tightest.}
\label{fig:resolution}
\end{figure}
\begin{figure}[h!]
\centering
\includegraphics[width=0.50\columnwidth]{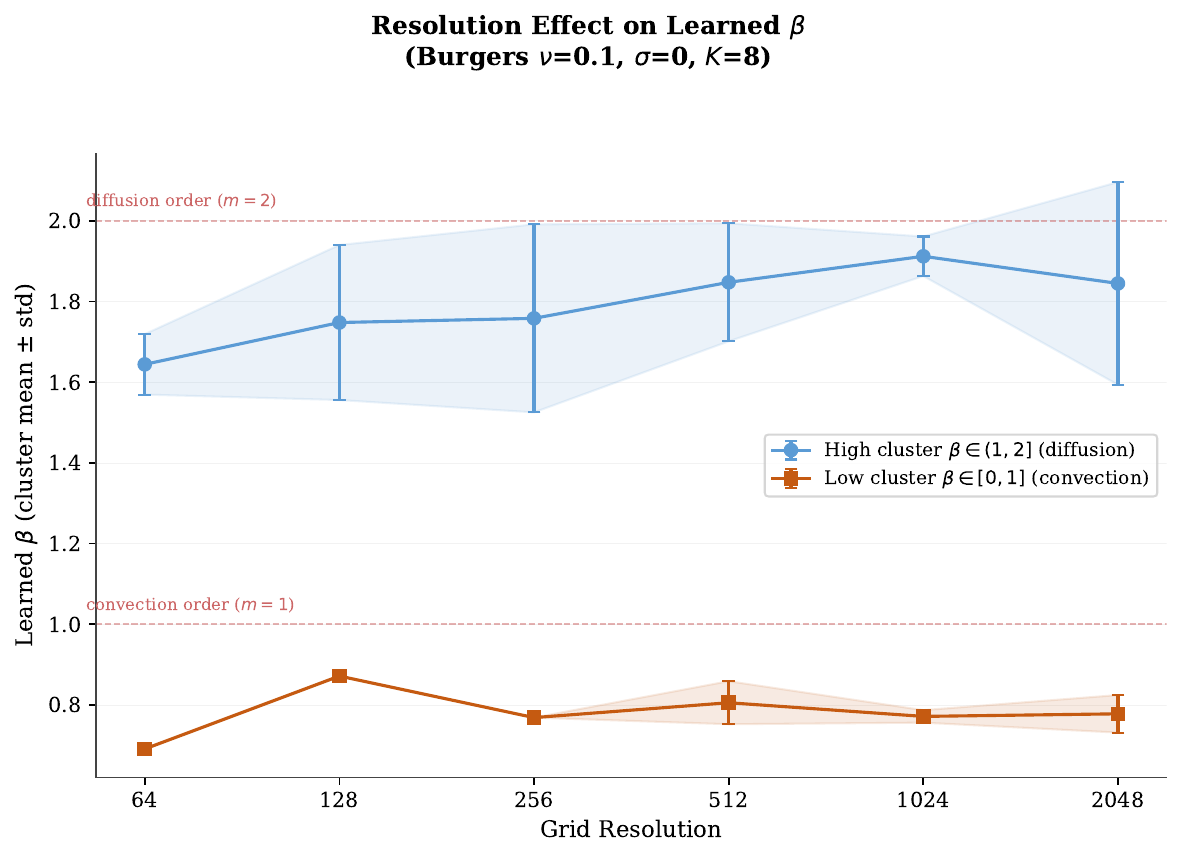}
\caption{Learned $\beta$ clusters across resolutions (Burgers, $K{=}8$ learnable channels). Two clusters consistently emerge near $m{=}2$ (diffusion, top) and $m{=}1$ (convection, bottom). The diffusion cluster is \emph{not} resolution-invariant: it drifts below 2 at coarser resolutions, an empirical effect of finite stencil fidelity at large grid spacing rather than a prediction of Theorem~\ref{thm:optimal_beta}, whose $\xi_{\max}$-dependence would, with all else fixed, imply the opposite drift. The sweep varies the data spectrum and stencil fidelity alongside $\xi_{\max}$ and does not isolate this effect. Downsampling, the most common strategy when GPU memory is limited, silently makes exact physics enforcement increasingly suboptimal. $\partial$-NO compensates automatically by learning lower orders.}
\label{fig:beta_clusters}
\end{figure}

A practical implication is that downsampling, the most common strategy when GPU memory is limited, silently makes exact physics enforcement increasingly suboptimal. $\partial$-NO compensates automatically by learning lower orders at coarser resolutions, while fixed-order approaches cannot.
\end{plainblock}

\subsection{Directional Isotropy on 2D Isotropic PDEs}
\label{app:isotropy}

\begin{plainblock}
On PDEs whose differential operator is isotropic in space (no preferred axis), we expect $\beta_x \approx \beta_y$ to emerge if $\partial$-NO captures the true differential structure rather than incidental data features. We test this on Darcy (isotropic elliptic) and 2D Navier-Stokes ($\nu{=}10^{-5}$) with $K{=}4$ learnable channels.

\begin{figure}[h!]
\centering
\includegraphics[width=0.7\columnwidth]{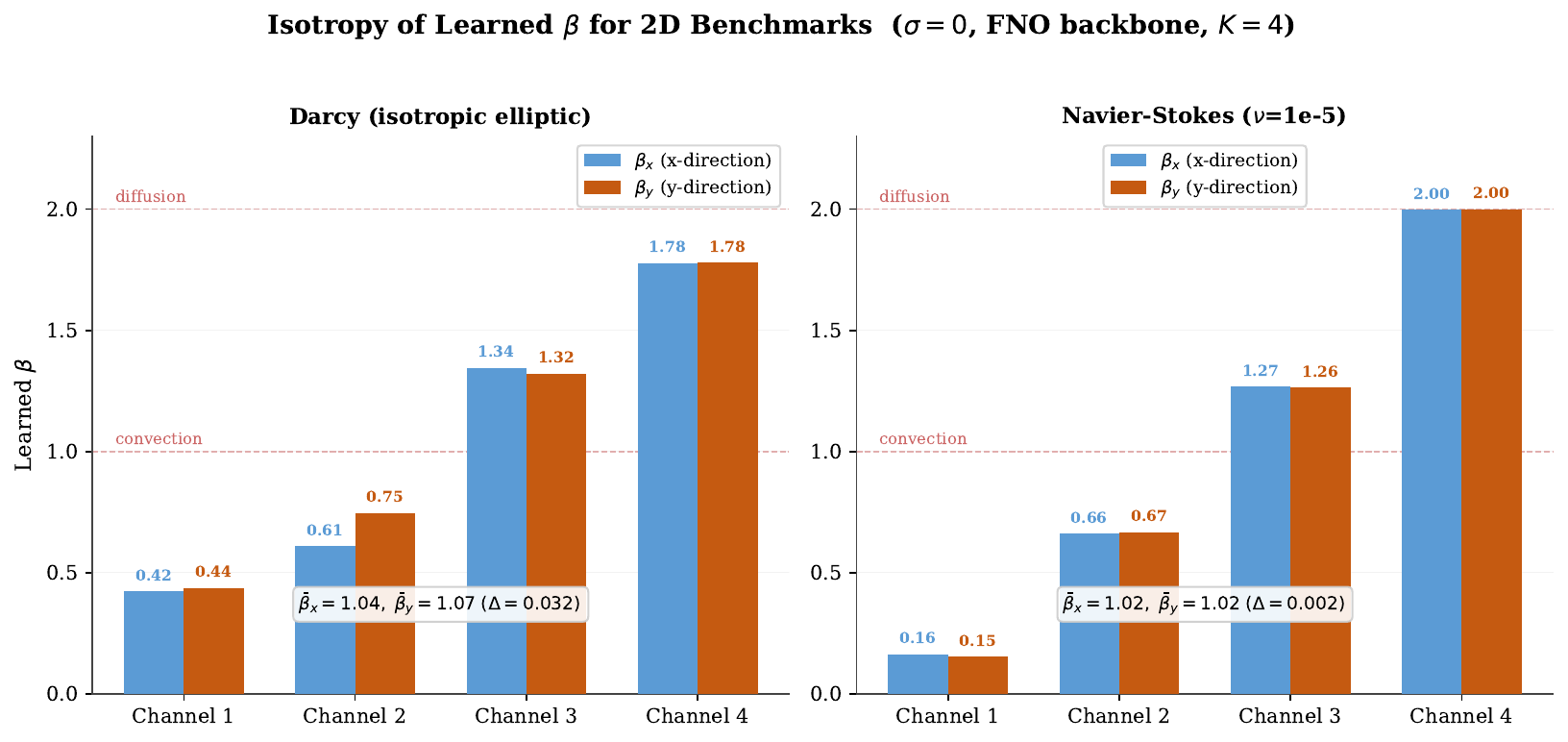}
\caption{Directional isotropy of learned orders on two 2D isotropic PDEs. Left: Darcy. Right: Navier-Stokes ($\nu{=}10^{-5}$). Across the $K{=}4$ learnable channels, $\beta_x$ (blue) and $\beta_y$ (orange) track each other closely, with channel means $\bar{\beta}_x{=}1.04, \bar{\beta}_y{=}1.07$ on Darcy ($\Delta {=}0.032$) and $\bar{\beta}_x{=}1.02, \bar{\beta}_y{=}1.02$ on NS ($\Delta {=}0.002$). The four channels self-organize into a hierarchy spanning $[0.15, 2.0]$, each capturing a different spectral scale. Isotropy is recovered without being prescribed. (NS channel 4 sits at the cap $m_{\max}{=}2$.)}
\label{fig:isotropy}
\end{figure}

Across both benchmarks, the channel-averaged $\beta_x$ and $\beta_y$ agree within $\Delta < 0.035$ of each other without isotropy being prescribed. The channels self-organize into a hierarchy of spectral scales rather than collapsing to identical orders, indicating that the network uses the $K{=}4$ budget productively to span low through high frequencies.
\end{plainblock}

\subsection{Three-Dimensional Compressible Navier-Stokes}
\label{app:cns_3d}

\begin{plainblock}
We test $\partial$-augmentation on the 3D compressible Navier-Stokes equations from the PDEBench benchmark suite \citep{takamoto2022pdebench} ($64 \times 64 \times 64$ spatial grid, $T{=}10$, five-variable input $V_x, V_y, V_z$, density, pressure), with $K_x{=}K_y{=}K_z{=}2$ axis-aligned channels and $K_{xy}{=}K_{xz}{=}K_{yz}{=}K_{xyz}{=}1$ cross- and triple-derivative channels.

\begin{figure}[h!]
\centering
\includegraphics[width=0.9\columnwidth]{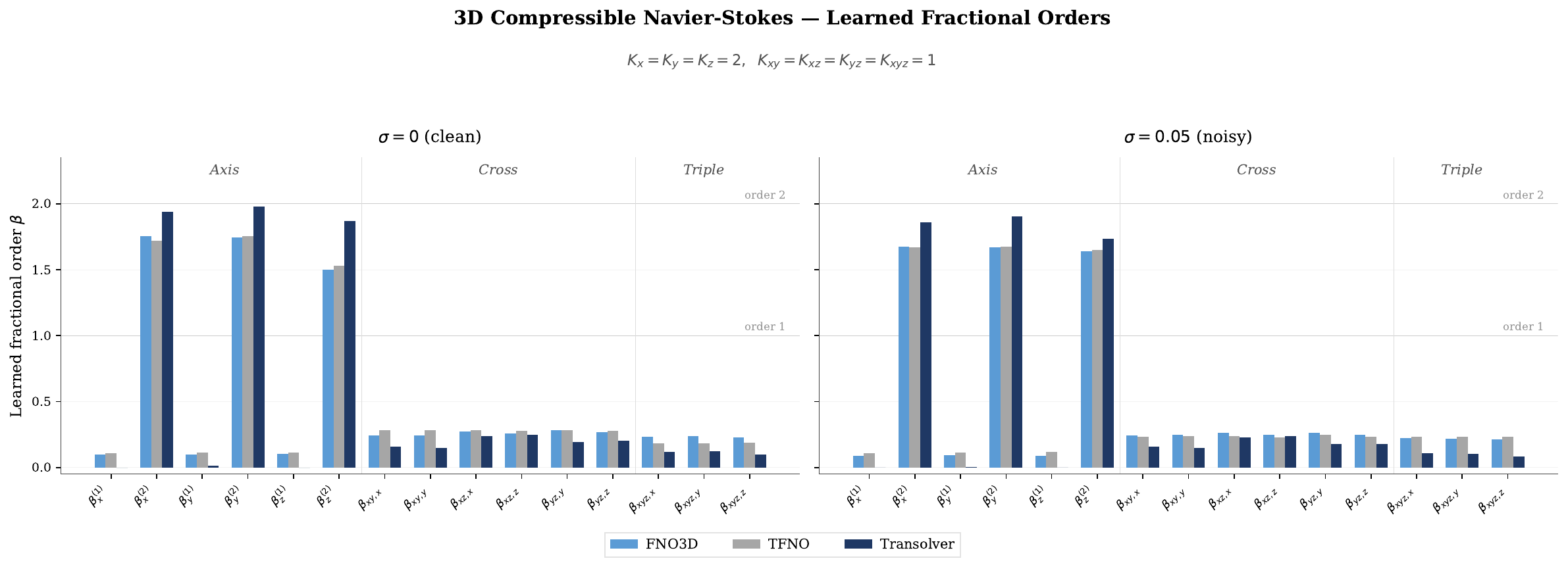}
\caption{Learned fractional orders on 3D compressible Navier-Stokes, for three backbones (FNO3D, TFNO, Transolver) under clean ($\sigma{=}0$, left) and noisy ($\sigma{=}0.05$, right) conditions. Each axis ($x, y, z$) retains a channel with order near $2$ (Laplacian, the dominant differential operator), while cross-derivative ($\beta_{xy}, \beta_{xz}, \beta_{yz}$) and triple-derivative ($\beta_{xyz}$) channels collapse to near-identity ($\beta \approx 0$). The network selectively amplifies the physically dominant differential structure without it being prescribed.}
\label{fig:cns_3d}
\end{figure}

Each axis retains a channel with order close to the Laplacian order $m{=}2$, while cross- and triple-derivative channels saturate near identity. The directional structure is consistent across three backbones and persists under noise, supporting the claim that learned $\beta$ values reflect physically meaningful differential structure rather than backbone-specific quirks.
\end{plainblock}

\subsection{Three-Dimensional Compressible Navier-Stokes: Quantitative Results}
\label{app:cns_3d_quant}

\begin{plainblock}
Complementing the learned-orders breakdown in \S\ref{app:cns_3d}, we report the quantitative test error on the 3D compressible Navier-Stokes benchmark ($64 \times 64 \times 64$, $T{=}10$, five-variable input). Three backbones (FNO3D, TFNO, Transolver) are evaluated under clean and noisy inputs; $\partial$-augmentation uses $K_x{=}K_y{=}K_z{=}2$ axis channels and $K_{xy}{=}K_{xz}{=}K_{yz}{=}K_{xyz}{=}1$ cross- and triple-derivative channels.

\begin{table}[h!]
\caption{3D Compressible Navier-Stokes: test relative $L^2$ error. $\partial$-augmentation improves every (backbone, noise) cell; gains widen under noise on Transolver.}
\label{tab:cns_3d_quant}
\centering
\resizebox{0.5\columnwidth}{!}{%
\begin{tabular}{@{}l cc cc cc@{}}
\toprule
& \multicolumn{2}{c}{\textbf{FNO3D}} & \multicolumn{2}{c}{\textbf{TFNO}} & \multicolumn{2}{c}{\textbf{Transolver}} \\
\cmidrule(lr){2-3}\cmidrule(lr){4-5}\cmidrule(lr){6-7}
& $\sigma{=}0$ & $\sigma{=}0.05$ & $\sigma{=}0$ & $\sigma{=}0.05$ & $\sigma{=}0$ & $\sigma{=}0.05$ \\
\midrule
Model            & 0.2101 & 0.2168 & 0.2079 & 0.2225 & 0.1990 & 0.2039 \\
$\partial$-Model & \textbf{0.1964} & \textbf{0.2115} & \textbf{0.1971} & \textbf{0.2033} & \textbf{0.1699} & \textbf{0.1695} \\
\rowcolor{gray!10} \textit{Improv.} & \impv{6.5\%} & \impv{2.4\%} & \impv{5.2\%} & \impv{8.6\%} & \impv{14.6\%} & \impv{16.9\%} \\
\bottomrule
\end{tabular}%
}
\end{table}

Transolver gains are largest both clean and noisy, consistent with its larger effective capacity and the bias-variance prediction that higher-capacity models benefit more from a regularization-aware input feature.
\end{plainblock}

\subsection{Conv-FNO Standalone Comparison}
\label{app:convfno}

\begin{plainblock}
We compare $\partial$-FNO against Conv-FNO \citep{liu2025enhancing} as direct alternatives: both preprocess the input, $\partial$-FNO with a GL derivative stencil and Conv-FNO with learnable convolution kernels. Both build on the same FNO backbone, allowing a clean head-to-head test of preprocessing strategies.

\begin{table}[h!]
\caption{Conv-FNO vs.\ $\partial$-FNO on three benchmarks. Relative $L^2$ error (\%); $\Delta$ is the relative improvement over the FNO baseline. $\partial$-FNO matches or outperforms Conv-FNO with \textbf{34--187$\times$ fewer} preprocessing parameters (see note below).}
\label{tab:convfno}
\centering
\resizebox{0.5\columnwidth}{!}{%
\begin{tabular}{@{}l c rrr cc@{}}
\toprule
& & & & & \multicolumn{2}{c}{\textit{Improv.\ over FNO}} \\
\cmidrule(lr){6-7}
\textbf{Dataset} & $\sigma$ & \textbf{FNO} & $\partial$\textbf{-FNO} & \textbf{Conv-FNO} & $\partial$\textbf{-FNO} & \textbf{Conv-FNO} \\
\midrule
Burgers ($\nu{=}0.1$) & 0    & 0.0832 & \textbf{0.0738} & 0.0791 & \impva{11.3\%} & \impva{4.9\%} \\
Burgers ($\nu{=}0.1$) & 0.02 & 0.1161 & \textbf{0.0899} & 0.1052 & \impva{22.6\%} & \impva{9.4\%} \\
\addlinespace
Darcy                 & 0    & 5.736  & \textbf{5.474}  & 5.687  & \impva{4.6\%}  & \impva{0.9\%} \\
Darcy                 & 0.02 & 5.721  & \textbf{5.487}  & 5.678  & \impva{4.1\%}  & \impva{0.8\%} \\
\addlinespace
NS ($\nu{=}10^{-5}$) & 0    & 3.506  & \textbf{3.456}  & 4.219  & \impva{1.5\%}  & $-20.3\%$ \\
NS ($\nu{=}10^{-5}$) & 0.02 & 3.614  & \textbf{3.575}  & 4.208  & \impva{1.1\%}  & $-16.4\%$ \\
\bottomrule
\end{tabular}%
}
\end{table}

\textit{Parameter counts.} On Burgers, $\partial$-FNO adds $4$ learnable preprocessing parameters (two orders $\beta_k$, two scales $s_k$, since the stencil taps are functions of $\beta_k$ rather than free parameters); Conv-FNO adds $\approx 136$ (16-wide kernels per channel). On Darcy/NS, $\partial$-FNO adds $11$ (six orders, five scales) vs Conv-FNO's $\approx 2056$. The constrained inductive bias matches or beats the learned kernel at 34--187$\times$ lower parameter overhead.
\end{plainblock}

\subsection{Ablations on $K$ and $K_{\mathrm{GL}}$}
\label{app:k_kgl}

\begin{plainblock}
Two architectural hyperparameters control the cost-quality tradeoff of $\partial$-augmentation: the number of learnable derivative channels $K$ and the length of the Grünwald-Letnikov stencil $K_{\mathrm{GL}}$. We ablate both on Burgers ($\nu{=}0.1$) under clean ($\sigma{=}0$) and noisy ($\sigma{=}0.02$) conditions.

\begin{figure}[h!]
\centering
\includegraphics[width=0.8\columnwidth]{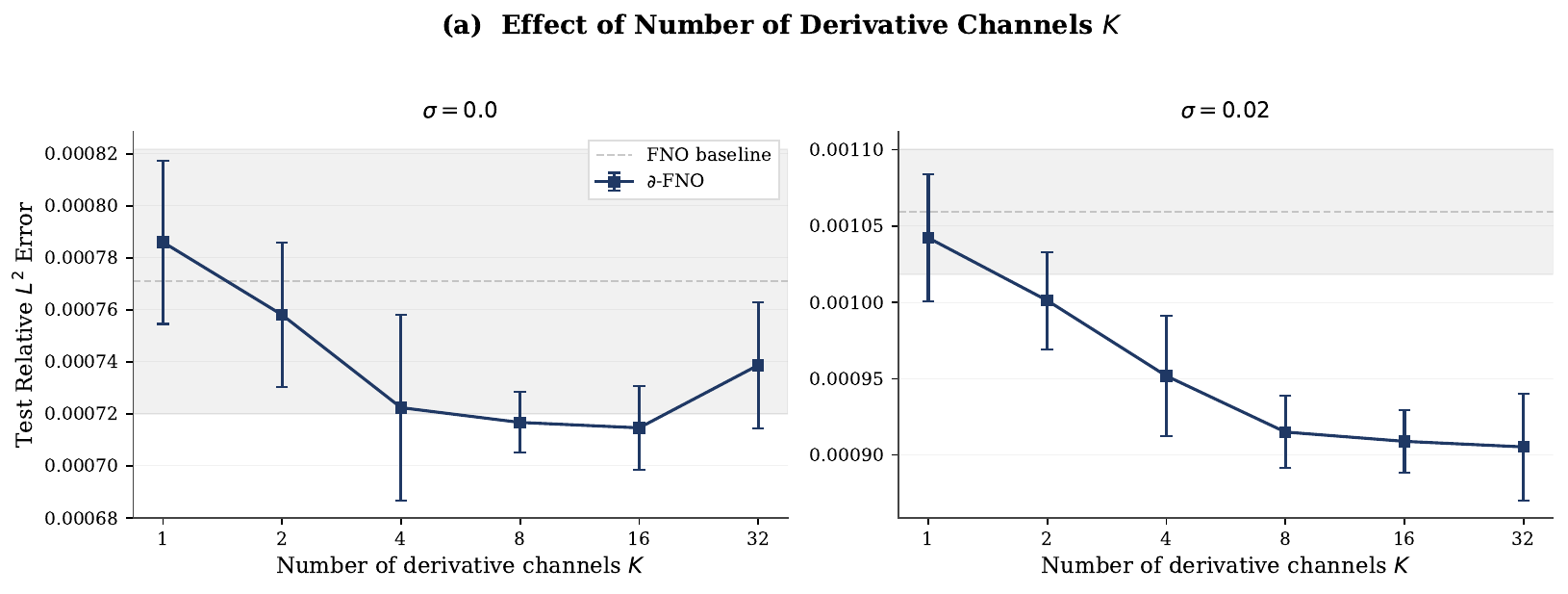}
\caption{Ablation on the number of derivative channels $K \in \{1, 2, 4, 8, 16, 32\}$. Increasing $K$ consistently improves performance with diminishing returns past $K{=}8$. The natural tradeoff is memory and compute cost. Under noise the curve flattens more slowly, indicating that additional channels help more when the regularization burden is larger.}
\label{fig:k_ablation}
\end{figure}
\begin{figure}[h!]
\centering
\includegraphics[width=0.8\columnwidth]{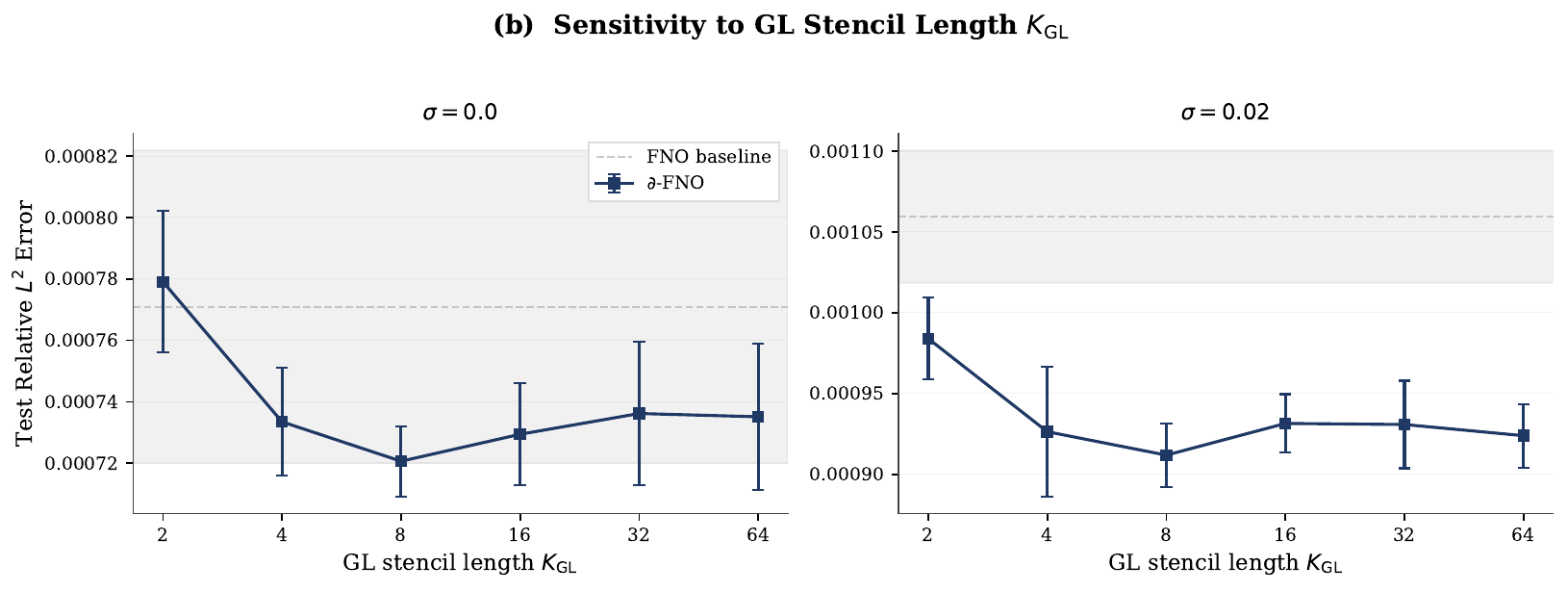}
\caption{Sensitivity to GL stencil length $K_{\mathrm{GL}} \in \{2, 4, 8, 16, 32, 64\}$. Performance stabilizes around $K_{\mathrm{GL}}{=}16$ under both clean and noisy conditions. At $K_{\mathrm{GL}}{=}2$, accuracy degrades because the stencil is too short to resolve non-integer orders. We use $K_{\mathrm{GL}}{=}16$ throughout the main paper as a balance between accuracy and compute.}
\label{fig:kgl_ablation}
\end{figure}

Both ablations support the default choices used in the main paper  ($K_{\mathrm{GL}}{=}16$ universally). Increasing $K$ further would improve accuracy at proportional memory cost; reducing $K_{\mathrm{GL}}$ below $16$ would save compute at the cost of resolution-dependent degradation of fractional orders.
\end{plainblock}

\subsection{Trajectory of Learned $\beta$ During Training}
\label{app:beta_trajectory}

\begin{plainblock}
Theorem~\ref{thm:optimal_beta} characterizes the population-optimal $\beta^*$, but in practice we recover $\beta$ via stochastic gradient descent. Figure~\ref{fig:beta_trajectory} traces all $K{=}8$ learnable orders across training epochs on Burgers ($\nu{=}0.1$), under five noise levels $\sigma \in \{0, 0.01, 0.02, 0.05, 0.1\}$.

\begin{figure}[h!]
\centering
\includegraphics[width=\textwidth]{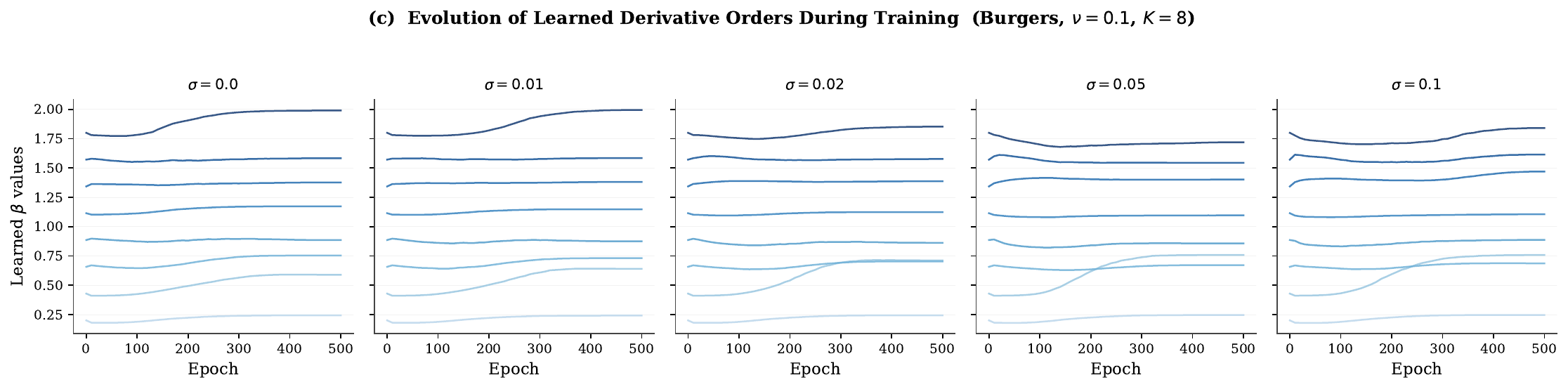}
\caption{Trajectory of learned $\beta$ values during training (5 noise levels). All $K{=}8$ orders stabilize within 200--400 epochs and converge to values consistent with Theorem~\ref{thm:optimal_beta}: higher $\beta^*$ under clean data, systematically lower $\beta^*$ as $\sigma$ grows. Initial values are $\beta_1{=}1.0$ and $\beta_2{=}2.0$; the optimizer redistributes channels across the spectral hierarchy as training proceeds.}
\label{fig:beta_trajectory}
\end{figure}

Three takeaways. First, all eight orders stabilize within 200--400 epochs, indicating optimization is reliable. Second, the final distribution is noise-dependent: at $\sigma{=}0$ orders cluster near $\beta \in [0.5, 2.0]$, while at $\sigma{=}0.1$ the cluster shifts toward $[0.3, 1.8]$, exactly the direction Theorem~\ref{thm:optimal_beta} predicts. Third, $\beta$ values redistribute across the channel budget rather than collapsing to redundant orders, validating the design choice of multiple learnable channels per direction.
\end{plainblock}

\subsection{Interaction Between Noise Level and Channel Count}
\label{app:noise_k_heatmap}

\begin{plainblock}
The K-ablation (Fig.~\ref{fig:k_ablation}) and the stress test (Table~\ref{tab:stress}) study the effects of channel count $K$ and noise level $\sigma$ separately. Figure~\ref{fig:noise_k_heatmap} shows their interaction: the relative improvement of $\partial$-FNO over FNO as a heatmap over the joint $(\sigma, K)$ plane on Burgers ($\nu{=}0.1$).
\begin{figure}[h!]
\centering
\includegraphics[width=0.55\columnwidth]{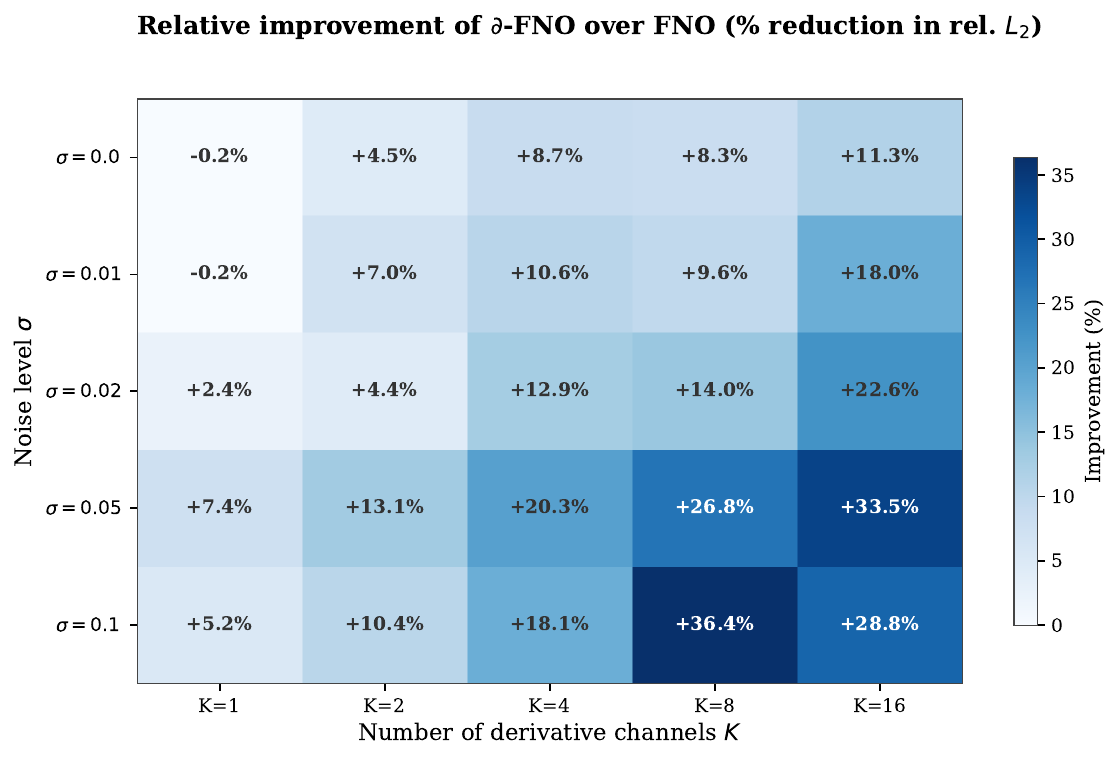}
\caption{Interaction between noise level $\sigma$ and number of derivative channels $K$. Cell values are the relative improvement of $\partial$-FNO over FNO (\% reduction in relative $L^2$) on Burgers ($\nu{=}0.1$). Gains broadly increase with both $\sigma$ and $K$ and concentrate in the high-noise, multi-channel corner, though the pattern is not monotone in either variable at these seed counts. The interaction is super-additive: the largest single gain ($+36.4\%$) sits at $(\sigma{=}0.1, K{=}8)$, in the high-noise regime near the diminishing-returns plateau of Figure~\ref{fig:k_ablation}, supporting the design choice of multiple channels precisely when data is noisy.}
\label{fig:noise_k_heatmap}
\end{figure}
 The super-additivity has a clean theoretical reading: each channel $\beta_k$ captures a different point on the bias-variance curve from Theorem~\ref{thm:optimal_beta}, so when noise broadens the optimal $\beta$ distribution the diverse-channel budget pays off more than the sum of its parts. The learnable spectral parameterization may extend to other frequency-selective domains such as signal detection under adverse SNR conditions \citep{mehouachi2021uav_gass, mehouachi2021uav_rsl}.

\end{plainblock}

\end{document}